\numberwithin{equation}{section}
\newtheorem{Theorem}{Theorem}[section]
\newtheorem{Lemma}[Theorem]{Lemma}
\newtheorem{Proposition}[Theorem]{Proposition}
\newtheorem{Assumption}{H.\!\!}
\theoremstyle{definition}
\newtheorem{Definition}{Definition}[section]
\newtheorem{Setting}{Setting}[section]
\newtheorem{Example}{Example}[section]
\newtheorem{Approach}{Approach}
\theoremstyle{remark}
\newtheorem{Remark}{Remark}[section]
  \def\nb{\nonumber}
\def\to{\rightarrow}
 \def\ol{\overline}    \def\ul{\underline}
\def\Om{\Omega}  \def\om{\omega} 
\newcommand{\q}{\quad}
  \def\fa{\forall}
\def\eps{\varepsilon}
 \def\t{\times}  
\def\ms{\medskip}
\def\cB{\mathcal{B}}
\def\cF{\mathcal{F}}
\def\cG{\mathcal{G}}
\def\cH{\mathcal{H}}
\def\cM{\mathcal{M}}
\def\cN{\mathcal{N}}
\def\cO{\mathcal{O}}
\def\cP{\mathcal{P}}
\def\cR{\mathcal{R}}
\def\cS{\mathcal{S}}
\def\cX{\mathcal{X}}
\def\cY{\mathcal{Y}}
\def\cZ{\mathcal{Z}}
\def\d{{\mathrm{d}}}
\def\sE{{\mathbb{E}}}
\def\sN{{\mathbb{N}}}
\def\sP{\mathbb{P}}
\def\sR{{\mathbb R}}
\def\sS{{\mathbb{S}}}
\def\sW{{\mathbb{W}}}
\def\sX{{\mathbb{X}}}
\def\sY{{\mathbb{Y}}}
\DeclareMathOperator*{\argmin}{arg\,min}
\DeclareMathOperator*{\esssup}{ess\,sup}
\DeclareMathOperator*{\essinf}{ess\,inf}
\newcommand{\lc}
{\mathrel{\raise2pt\hbox{${\mathop<\limits_{\raise1pt\hbox
{\mbox{$\sim$}}}}$}}}
\newcommand{\gc}
{\mathrel{\raise2pt\hbox{${\mathop>\limits_{\raise1pt\hbox{\mbox{$\sim$}}}}$}}}
\newcommand{\ec}
{\mathrel{\raise2pt\hbox{${\mathop=\limits_{\raise1pt\hbox{\mbox{$\sim$}}}}$}}}
\def\bb{\begin{equation}} \def\ee{\end{equation}}
\def\bbn{\begin{equation*}} \def\een{\end{equation*}}
\def\beqn{\begin{eqnarray}}  \def\eqn{\end{eqnarray}}
\def\beqnx{\begin{eqnarray*}} \def\eqnx{\end{eqnarray*}}
\def\bn{\begin{enumerate}} \def\en{\end{enumerate}}
\def\bd{\begin{description}} \def\ed{\end{description}}
\begin{document}

\title{Optimal scheduling of entropy regulariser
for continuous-time 
linear-quadratic reinforcement learning
}

\author{
\and 
 Lukasz Szpruch\thanks{School of Mathematics, University of Edinburgh and Alan Turing Institute,  \texttt{L.Szpruch@ed.ac.uk}}
\and Tanut Treetanthiploet\thanks{Alan Turing Institute,  \texttt{ttreetanthiploet@turing.ac.uk}}
\and Yufei Zhang\thanks{Department of Statistics, London School of Economics and Political Science,  \texttt{y.zhang389@lse.ac.uk}}
}
\date{}

\maketitle

\noindent\textbf{Abstract.} 
This work uses the entropy-regularised relaxed stochastic control perspective as a principled framework for designing
reinforcement learning (RL) algorithms. Herein agent interacts with the environment by generating noisy controls  distributed according to the optimal relaxed policy.
The noisy policies 
on the one hand, explore the space and hence facilitate learning but, on the other hand, introduce bias by assigning a positive probability to non-optimal actions. 
This exploration-exploitation trade-off is determined by the strength of entropy regularisation.
We study  algorithms resulting
from two  entropy regularisation formulations:
the exploratory control approach, where entropy is added to the cost objective,  and the proximal policy update approach, where entropy penalises policy divergence 
between   consecutive episodes. 
We focus on  the finite horizon continuous-time linear-quadratic (LQ) RL problem,
{where a  linear dynamics with unknown drift coefficients is controlled subject to  quadratic costs.} In this setting, 
both algorithms yield a Gaussian relaxed policy. 
We quantify the precise difference between the value functions of a  Gaussian policy and its noisy evaluation
and show that the execution noise 
must be independent across time. 
By tuning the frequency of sampling from  relaxed policies and
the parameter governing the strength of entropy regularisation, we prove that the regret, for both learning algorithms,  
is of the order $\cO(\sqrt{N}) $ 
(up to a  logarithmic factor)
 over $N$ episodes, matching the best known result from the literature.

\medskip
\noindent
\textbf{Key words.} 
Continuous-time
reinforcement learning,
linear-quadratic,
 entropy regularisation,
exploratory control,
proximal policy update,
regret analysis

%
\ms
\noindent
\textbf{AMS subject classifications.} 
62L05,  49N10, 93E35, 94A17




%
%

\medskip

\section{Introduction}
\label{sec:introduction}

Reinforcement learning (RL) is concerned with sequential decision-making in an uncertain environment and is a core topic in machine learning.
Two key concepts in RL are \textit{exploration}, which corresponds to learning via interactions with the 
 environment,  and \textit{exploitation}, which corresponds to optimising the objective function given accumulated information (see \cite{sutton1998introduction}). 
They are    at odds with each other,
as learning the environment   often  requires   acting suboptimally  
with respect to existing knowledge \cite{keskin2018incomplete}. 
%
Thus it is crucial to develop 
effective exploration strategies 
and to optimally  balance exploration and exploitation.

\paragraph{Randomisation   and entropy-regularised relaxed control.}
 A common exploration technique in RL is
   to adopt     randomised actions.
   Randomisation generates 
stochastic policies that   explore the  environment, but simultaneously  introduces   additional biases by assigning positive probability to, potentially, non-optimal actions. 
   The precise impact of randomisation     has been studied extensively for
  discrete-time  RL problems, including  finite state-action models   \cite{even2006action, cesa2017boltzmann}
and 
   linear-quadratic (LQ) models
\cite{ 
mania2019certainty,dean2020sample,simchowitz2020naive}.
This is often quantified by the      \emph{regret} of learning,
which      measures  the  difference
 between  the value function  of   designed policies and the optimal  value function that could have been achieved  had  the agent   known the environment.
 It is shown that to achieve the optimal regret, 
 the level of   randomisation  
 has to decrease at a proper rate   over the learning process. 
 
For RL problems in continuous time and space, 
\cite{wang2020reinforcement}
proposes
an entropy-regularised   relaxed control framework
to systematically  design stochastic polices. Relaxed control formulation  
has been introduced  in classical control theory
for the existence of optimal controls 
(see e.g., \cite{young2000lectures,nicole1987compactification}), 
but 
its entropy regularised variants
   have recently attracted much attention
for    their wide applications in learning.
 Entropy regularisation ensures the  optimal stochastic policy 
has a nondegenerate distribution over actions 
and hence 
explores the environment
(see e.g.,
\cite{wang2020reinforcement,wang2020continuous,
cohen2020asymptotic,
jia2021policy,
firoozi2022exploratory, guo2022entropy, jaimungal2022reinforcement}). 
It also
 leads to  a  policy    that is   Lipschitz stable  
 with respect to perturbations of the underlying model
 \cite{reisinger2021regularity}.
Such a    stability property  is   crucial for the  analysis of 
algorithm regret  
\cite{guo2021reinforcement,szpruch2021exploration}.
These advantages 
  make  entropy-regularised  relaxed stochastic control formulation a good choice for the principled design of efficient learning algorithms.

However, despite the recent influx of interest on 
entropy-regularised control problems, to the best of our knowledge 
there is no work on how to control 
the  entropy regularising weight
  to 
achieve the best possible  algorithm  regret. 
A large degree of  entropy regularisation often signifies a high level of exploration,
which    facilitates more efficient learning 
but also  creates a larger  exploration cost. Furthermore relaxed control formulation does not constitute a complete learning algorithm 
as the set of admissible policies for the agent are 
functions mapping
time and space into actions and not 
into
 probability distributions over all actions.

Therefore, finding the  optimal scheduling  of entropy regularisation requires
addressing   the following two questions:
A) How to follow a relaxed policy to interact with 
a continuous-time system?\footnotemark
\footnotetext{The terminologies ``relaxed policy" and ``stochastic policy" will be used interchangeably  
throughout the paper.}
 B) How do the policy execution and regularising weight  
 affect the exploration-exploitation trade-off over the entire learning process?

\paragraph{Main Contributions.}
This works addresses the above  questions for 
 finite  horizon continuous-time episodic LQ-RL problems. 
 {In this problem, the environment is modelled by a linear stochastic differential equation with   unknown drift parameter and   known deterministic diffusion coefficient,
and  the agent is   minimising   known quadratic costs over a finite   horizon.

 } 
  
\begin{itemize}
\item We prove that a Gaussian relaxed policy 
can be approximated with a single episode
by 
piecewise constant independent Gaussian noises    on a time grid.
The type of execution noise  
corresponds to a  properly scaled discrete-time  white noise process 
(see Section \ref{sec:discussion} for details). 
The error of this noisy execution 
is of half-order in both the   mesh size of the grid  and  
the variance of the relaxed   policy
in the high-probability sense,
and 
is 
 of first-order in both the   mesh size 
and  the variance  
under expectation 
 (Theorem \ref{Thm:execution gap_drift_control}). 
This result should be contrasted with the heuristic implementation
of a relaxed policy 
 suggested in \cite{wang2020reinforcement,firoozi2022exploratory}, { which involves independent agents interacting over 
a large number of episodes
(see Example \ref{example:repetition}).
 }



\item 
We use entropy-regularised relaxed control problem formulation
to  design   algorithms that achieve the best-known regret 
for episodic LQ-RL problems.
 At each episode, the algorithm
 executes a Gaussian relaxed policy on a time grid, 
estimates the unknown parameters by a 
Bayesian approach 
 based on observed trajectories,
 and then 
 designs a Gaussian policy by solving 
 an entropy-regularised control problem. 
Two widely used   entropy regularisation formulations 
are analysed: 
the exploratory control approach 
(i.e., Algorithm \ref{alg:exploration})
where    entropy serves as an  exploration reward
\cite{wang2020reinforcement,wang2020continuous, 
firoozi2022exploratory, guo2022entropy, jaimungal2022reinforcement},
and the proximal policy update approach 
(i.e., Algorithm \ref{alg:MD})
where entropy  penalises the divergence of policies 
between two consecutive episodes
\cite{schulman2017proximal,geist2019theory, kerimkulov2021modified}.
By optimising the execution mesh size and regularising weight, 
both algorithms achieve 
an expected  regret of the order 
$\cO(\sqrt{N})$ (up to a  logarithmic factor) over $N$ episodes,
 matching the best possible results from the literature
(see Theorems \ref{thm:vanish_exploration} and \ref{thm:MD}).
 

\end{itemize} 
 
 \paragraph{Related works.}
 To the best of our knowledge, this is the first theoretical work on
 the execution of a relaxed policy, 
on the    optimal scheduling of entropy regulariser  for RL problems,  
and on regret bounds of learning algorithms with proximal policy updates. 

%

{Optimal control of stochastic systems with parametric uncertainty has been studied in the classical adaptive control literature, where the goal is to construct a stationary policy that minimises the long-term average cost \cite{sastry2011adaptive}. 
For LQ adaptive control problems, 
the   convergence of adaptive control algorithms  has been shown in \cite{duncan1999adaptive} as the time   goes to infinity,
and   {non-asymptotic}   regret bounds   have been established  in   
\cite{abbasi2011regret, cohen2019learning, mania2019certainty, dean2020sample, simchowitz2020naive, faradonbeh2022thompson}.
The problem studied here is different. Our main objective is to construct optimal  time-dependent  policies for finite-horizon problems, 
and to 
establish a principled framework for designing RL   algorithms using the perspective of entropy-regularised stochastic control.
 Compared to the   adaptive control literature, the regret analysis in this work
 requires novel techniques,  including precise performance 
 estimation  of a randomised policy 
  and  quantification of the impact of   the   regularising weight 
  on the    regret  bound   for  continuous-time systems. 
 
}

Available   theoretical results on regret bounds for continuous-time RL are very limited, including 
\cite{gao2022logarithmic,gao2022square}
for  continuous-time  Markov decision processes (with finite states and actions),  
\cite{faradonbeh2022thompson}
 for infinite-horizon LQ-RL problems  with stationary polices, 
  \cite{guo2021reinforcement,basei2022logarithmic, szpruch2021exploration}
  for   finite-horizon  problems with time dependent policies. 
%
%
{Specifically, for RL problems with continuous-time LQ models, \cite{basei2022logarithmic} shows that when the agent knows a priori that the optimal control of the true model exploits the parameter space, a pure exploitation algorithm based on optimal policies of the current estimated models leads to a logarithmic regret. In the more general setting where the optimal control of the true model does not guarantee exploration, \cite{szpruch2021exploration} proposes a phased-based algorithm that explicitly separates exploitation and exploration phases and achieves a square-root regret. The algorithm therein uses a fixed deterministic exploration policy for exploration throughout the learning process and (deterministic) optimal policies of the current estimated models for exploitation.
}

The regret analysis of randomised policies considered in this paper is more  challenging than  
 those for deterministic policies. Since the randomised policies explore and exploit simultaneously, one needs to carefully disentangle the impacts of the injected noises on exploitation and exploration in order to recover the square-root regret bounds.    
 Moreover, 
the   proximal policy update approach
  (i.e., Algorithm \ref{alg:MD})
results in  randomised policies   depending on 
\emph{all previously estimated models}.  
Such a     memory dependence  has to be carefully controlled via
the regularising weights to optimise the regret. 

{\paragraph{Organisation of the paper:} 
Section \ref{sec:main} formulates the LQ-RL problem with stochastic policies and presents two learning algorithms with their regret bounds. 
In particular, 
Section \ref{sec:relaxed_control_problem} introduces two entropy regularisation formulations for designing learning algorithms. It is proven that both formulations yield Gaussian policies, providing a principled approach to algorithm design. 
Section \ref{sec:main_executation} analyses sampling procedures to  execute  a Gaussian policy, and establishes bounds on the execution error   in terms of the sampling frequency.
Section \ref{sec:regret_analysis} optimises the policy sampling frequency and entropy regularisation weight to achieve square-root regrets   for both algorithms.  
The proofs of the execution error bounds and algorithm regrets are presented in Sections \ref{sec:execution_general} and \ref{sec:regret_proof}, respectively.

 } 
   \paragraph{Notation:} 

    For each  $n\in \sN$, we denote by 
$\sS^n$  
(resp.~$\sS^n_+$)
 the space of $n\t n$ symmetric, 
(resp.~symmetric positive definite)  matrices,
and    by 
  $\Lambda_{\min}(A)$    the  smallest eigenvalues of  a given matrix $A\in \sS^n$.
  For each 
$T>0$,
 complete probability space
$(\Om,\cF, \sP)$
 and
  Euclidean space $(E,|\cdot|)$,
we denote by  
$L^\infty([0,T]; E)$
  (resp.~$C([0,T];E)$)
  the space of  
 measurable 
(resp.~continuous) functions $\phi:[0,T]\to E$ satisfying 
    $\|\phi\|_{\infty}=\esssup_{t\in[0,T]} |\phi_t|<\infty$,
    and by 
  $L^\infty(\Om\t [0,T];E)$ 
 the space of  
  measurable functions $\phi:\Om \t [0,T]\to E$ satisfying 
    $\|\phi\|_{L^\infty}=\esssup_{(\om, t)\in \Om \t [0,T]} |\phi_t(\om)|<\infty$.
For notational simplicity, we write
 $\sR_{> 0}=(0,\infty)$, 
 $\sR_{\ge 0}=[0,\infty)$, 
 and denote
 by
$\cN(m,s^2)$,
$m\in \sR$ and $s \ge 0$,
 the 
Gaussian measure on $\sR$
with mean $m$
and standard deviation  $s$.

\section{Problem formulation and main results}
\label{sec:main}

This section studies a LQ-RL problem,
  where  the state dynamics 
  involves 
  unknown   drift coefficients.
We prove that 
executing
optimal relaxed policies 
of suitable 
entropy-regularised control problems leads to  learning  algorithms
with optimal   regrets.
For the clarity of presentation, 
we assume all variables are one-dimensional, 
but the analysis and results can be naturally extended  to a multidimensional setting.

\subsection{LQ-RL with stochastic policies}
\label{sec:lq_rl_outline}

\paragraph{LQ control.} 
Let us  
first recall  the standard LQ control problem over feedback controls.
Let $T>0$,
$(\Om,\cF,\sP)$  be a complete probability space
on which a   Brownian motion $W=(W_t)_{t\in [0,T]}$
is defined,
and $\cF_0\subset\cF$ be a $\sigma$-algebra independent of $W$.
For fixed  ${ \theta^\star}=(A^\star,B^\star)\in \sR^{1\t 2}$, consider 
 minimising the following cost functional:
\begin{equation}
\label{eq:LQ_RL_cost}
     J^{  \theta^\star}(\phi)
     \coloneqq\sE\Bigg[
    \int_0^T  f\big(t,X^{{  \theta^\star},\phi}_t,\phi(t,X^{{ \theta^\star},\phi}_t)\big)\,\d t
    +g(X^{{ \theta^\star},\phi}_T)
    \bigg],
\end{equation}
over all measurable functions $\phi:[0,T]\t \sR\to \sR$
such that 
 the  controlled dynamics 
\begin{equation}
    \label{eq:LQ_RL_state}
    \d X_t = 
    (A^\star X_t +B^\star \phi(t,X_t))\, \d t
     +  
    \bar{\sigma}_t\,
     \d W_t,
     \quad  t\in [0,T]; \quad X_0 = x_0
\end{equation}
admits a unique square integrable   solution $X^{{ \theta^\star},\phi}$
 satisfying $\sE\left[\int_0^T
 |\phi(t,X^{{ \theta^\star},\phi}_t)|^2\,\d t\right]<\infty$,
and $f$ and $g$ satisfy
for all $(t,x,a)\in [0,T]\t \sR\t \sR$,
 \bb\label{eq:learning_cost}
 f(t,x,a)= Q_tx^2+2 S_txa+R_ta^2+2 p_tx+2 q_t a,
        \quad 
        g(x)=Mx^2+2 {m} x.
 \ee
We impose the following conditions on the  the  coefficients.

  \begin{Assumption}
  \label{assum:lq_rl}
 $T>0$, $x_0,A^\star,B^\star \in \sR $,
   $\bar{\sigma}, 
Q,S, R, p,q \in C([0,T];\sR)$,
$M\ge 0$, ${m}\in \sR$,  
 and 
for all $t\in [0,T]$,
$\bar{\sigma}_t, R_t>0 $ and   $Q_t-S^2_tR^{-1}_t\ge 0$.
  \end{Assumption}


Given $\theta^\star$,
under   (H.\ref{assum:lq_rl}),
standard LQ control theory
(see e.g., \cite
{yong1999stochastic})
shows that 
\eqref{eq:LQ_RL_cost}-\eqref{eq:LQ_RL_state}
admits the optimal policy $\phi^{ \theta^\star}$,
and 
for all $\theta=(A,B)\in \sR^{1\t 2}$
and $(t,x)\in [0,T]\t \sR$
\begin{equation}
\label{eq:K_k_theta}
 \phi^{ \theta}(t,x)= k^{ \theta}_t+K_t^{ \theta} x,
 \q \textnormal{with
$K_t^{ \theta}\coloneqq
-(BP^{ \theta}_t+S_t)R^{-1}_t,
\quad k_t^{\theta}\coloneqq
-(B\eta^{ \theta}_t+q_t)R^{-1}_t$}, 
\end{equation} 
where $P^{ \theta}\in C([0,T]; \sR_{\ge 0})$ and $\eta^{ \theta}\in C([0,T]; \sR)$
satisfy    for all $t\in [0,T]$,
\begin{subequations}
\label{eq:riccati_theta}
\begin{alignat}{2}
  &\tfrac{\d }{\d t} P^{ \theta}_t
  +2A P^{ \theta}_t-(B P^{ \theta}_t+S_t)^2 R_t^{-1}+Q_t=0,
&&\quad
  P^{ \theta}_T=M;
\\
 &\tfrac{\d }{\d t}  \eta^{ \theta}_t
  +\left( A-
  (B P^{ \theta}_t+S_t)R_t^{-1}B
 \right)\eta^{ \theta}_t
 + p_t- ( B P^{ \theta}_t+S_t) R^{-1}_t q_t=0, 
 &&\quad 
 \eta^{\theta}_T={m}.
\end{alignat}
\end{subequations}
 

  \paragraph{Episodic LQ-RL  and    stochastic policy.}
  
  Now assume that  the parameter $\theta^\star= (A^\star, B^\star)$ 
is unknown to the agent. Then
the LQ control problem becomes an LQ-RL problem: the agent's  objective is to control the system optimally while simultaneously learning the parameter
$\theta^\star$.
In an
episodic RL   framework, 
 agent
learns about ${ \theta^\star}$ by executing a sequence of  policies  and 
observing 
realisations of the corresponding controlled dynamics
(see Section \ref{sec:regret_analysis} for details). 
For simplicity, we assume that the agent knows  the  form of the dynamics \eqref{eq:LQ_RL_state} 
(except for the coefficient $\theta^\star$),
the diffusion coefficient $\bar{\sigma}$, 
 the cost functions $f$ and $g$ in  \eqref{eq:learning_cost} and the time horizon.
The agent is unaware of the exact value of $\theta^\star$ but possesses prior knowledge of an upper bound on $\theta^\star$ as stated in   (H.\ref{eq:bounded}).

Note  that 
learning the dynamics
often requires 
  explicit exploration.
  Indeed, as shown in \cite{szpruch2021exploration,basei2022logarithmic},
a greedy policy $\phi^\theta$, as defined in \eqref{eq:K_k_theta}, based on 
the present estimate $\theta$ of $\theta^\star$
  in general does not guarantee exploration and consequently fails to converge to the optimal solution.
As alluded to in Section \ref{sec:introduction},
  a common strategy in RL 
  is   first constructing  a 
stochastic policy $\nu: [0,T]\t \sR\to \cP(\sR)$,
which maps 
the current state to a probability measure over the action space, 
based on the current estimated model,  
and then    interacting with the environment by 
   sampling noisy actions according to the policy $\nu$.
 
{Here we outline  the episodic  learning procedure with stochastic policies.
 At the $m$-th episode, the agent designs a stochastic policy $\nu_m$, interacts with 
\eqref{eq:LQ_RL_state} by sampling from $\nu_m$,
and observes a trajectory of the corresponding   state dynamics.
Sampling actions from  a stochastic policy is referred to as its \emph{execution}, 
and the resulting randomised policy is 
denoted by $\varphi^m$.\footnotemark
\footnotetext{Note that a random policy   maps the current state and time to the original action space, 
but involves   additional randomness due to the sampling process. 
We use $\varphi$ to  distinguish it  with the deterministic policy $\phi$.}
The agent then 
 constructs a new  stochastic policy $\nu_{m+1}$ based on all previous observations,
 and proceeds to the next learning episode. 
The mechanism for sequentially updating and executing $(\nu_m)_{m\in \mathbb{N}}$ is referred to as a learning algorithm.
A agent aims to design a learning algorithm 
that minimizes the growth rate of cumulative loss  as the number of interactions tends to infinity. 
The precise formulation of this framework, accounting for different sources of randomness in state observation, policy construction, execution, and evaluation, will be discussed in detail in Section \ref{sec:regret_analysis}.
 }

\subsection{Policy construction via regularised relaxed control}
\label{sec:relaxed_control_problem}

A principle approach to construct stochastic policies is to solve entropy-regularised relaxed control problems
(see e.g., \cite{wang2020reinforcement,wang2020continuous, 
jia2021policy,
firoozi2022exploratory, guo2022entropy}).
Here we fix
 $\theta=(A,B)$  as  the current estimate of $\theta^\star$ after the $m$-th episode,
 and 
 present two relaxed control  formulations  that are widely used 
in the literature.
Let    
 $\varrho>0$  be a regularising weight,  and for each 
 measure  $\bar{\nu}$ on $\sR$, 
let $\cH(\cdot \Vert \bar{\nu}):
\cP(\sR)\to \sR\cup\{\infty\}$ 
be the relative  entropy 
with respect to $\bar{\nu}$ such that  
 for all $\nu \in \cP(\sR)$,
 \bb
 \label{eq:KL_divergence}
  \cH(\nu \Vert \bar{\nu})
=\begin{cases}
\int_\sR \ln \big(\frac{\nu(\d a)}{\bar{\nu}(\d a)}\big)\,\nu(\d a), 
& \textnormal{$ \nu$   is absolutely continuous with respect to $\bar{\nu}$,}
\\
\infty,
& \textnormal{otherwise.}
\end{cases}
 \ee

\begin{Approach}[Exploration reward]
\label{ex:exploration}
A stochastic   
policy
can be chosen 
as  the optimal   policy of 
an exploratory control problem 
 (see e.g., \cite{wang2020reinforcement,wang2020continuous,
reisinger2021regularity, 
firoozi2022exploratory, guo2022entropy, jaimungal2022reinforcement}): 
consider 
minimising 
\begin{align}
\label{eq:cost_relax_entropy}
    \widetilde{J}^\theta_{\varrho}(\nu
    )=
    \sE\Bigg[
    \int_0^T 
    \left(\int_\sR 
    f(t,X^{\theta, \nu}_t,a)\nu(t, X^{\theta, \nu}_t; \d a)
    +\varrho \cH\big(\nu(t, X^{\theta, \nu}_t)\Vert \nu_{\rm Leb}\big)
    \right)
    \,\d t
    +g(X^{\theta, \nu}_T)
    \bigg]
\end{align}
over all $\nu\in \cM$,
where $\cM$ consists of all 
measurable functions $\nu:[0,T]\t \sR\to \cP(\sR)$ such that 
 the following controlled dynamics
 \begin{equation}
\label{eq:state_relax_entropy}
    \d X_t = 
    \int_\sR (AX_t+Ba) \nu(t,X_t; \d a)\, \d t
    +  
    \bar{\sigma}_t\,
     \d W_t,
     \quad t\in [0,T]; \quad X_0 = x_0
\end{equation}
admits a unique square integrable   solution $X^{{ \theta},\nu}$
 satisfying $\sE\left[\int_0^T
 \int_\sR |a|^2
 \nu(t,X^{ \theta,\nu}_t;\d a)\,\d t\right]<\infty$,
  $\bar{\sigma}$, $f$ and $g$
are the same functions as   in \eqref{eq:LQ_RL_cost}-\eqref{eq:LQ_RL_state},
and $\nu_{\rm Leb}$ is the Lebesgue measure on $\sR$.

The  term $\varrho \cH(\cdot\Vert \bar{\nu}_{\rm Leb})$  
is an additional   reward to encourage exploration. 
Under (H.\ref{assum:lq_rl}),
standard verification arguments 
(see e.g., \cite{wang2020reinforcement,wang2020continuous}) show
that the optimal policy of \eqref{eq:cost_relax_entropy}-\eqref{eq:state_relax_entropy}
is  Gaussian:
\begin{equation}
    \label{eq:optimal_relax_theta_rho}
\nu^\theta_\varrho(t,x)=
\cN\left(  k^\theta_t+K_t^\theta
x,
\frac{\varrho}{2R_t}
\right),
\quad   (t,x)\in [0,T]\t \sR,
\end{equation}
where $K^\theta$ and $k^\theta$
are defined as in 
\eqref{eq:K_k_theta}.
Note that $\varrho>0$ ensures that 
 $\nu^\theta_{\varrho}(t,x)$ 
executes each action with  positive probability
and hence explores the parameter space.


\end{Approach}

\begin{Approach}[
Proximal
policy update]
\label{ex:MD}
A stochastic    
policy   can also be determined   by 
 penalising the divergence of policies between two consecutive episodes.
Recall that 
the greedy policy $\phi^{\theta}$ 
in \eqref{eq:K_k_theta} satisfies the following pointwise minimisation condition (see e.g., \cite[p.~317]{yong1999stochastic}): 
 \bb\label{eq:hamiltonian_greedy}
\phi^{\theta}(t,x)=\argmin_{a\in \sR} 
 H^\theta\left(
 t,x,a,2(P^{\theta}_t x+\eta^{\theta}_t)
 \right),
 \quad    (t,x)\in [0,T]\t \sR,
\ee
where $P^{\theta}$ and $\eta^{\theta}$ are  defined as in
\eqref{eq:riccati_theta},
and    $H^\theta:[0,T]\t \sR\t \sR\t \sR\to \sR$
is the Hamiltonian 
of \eqref{eq:LQ_RL_cost}-\eqref{eq:LQ_RL_state}
with $\theta^\star=\theta$:
$$
H^\theta(t,x,a,y)\coloneqq (Ax+B a)y + f(t,x,a),
\quad   (t,x,a,y)\in [0,T]\t \sR\t \sR\t \sR.
$$
Instead of taking the greedy update 
\eqref{eq:hamiltonian_greedy}, 
the agent  
determines a stochastic policy
(for the $(m+1)$-th episode)
by
minimising the following entropy-regularised Hamiltonian:
for each $(t,x)\in [0,T]\t \sR$, 
 \bb
\label{eq:minimise_hamiltonian_relax}
{\nu}^{\theta}_\varrho (t,x)=\argmin_{\nu\in \cP(\sR)} 
\left(
 \int_{\sR }
 H^\theta\left(
 t,x,a,2(P^{\theta}_t x+\eta^{\theta}_t)
 \right)
  \, \nu(\d a)
  +\varrho \cH
  \left(\nu\Vert {\nu}^{m}(t,x)\right)
\right),
\ee
where
   $\nu^{m}$ 
 is  the    relaxed policy for  the $m$-th episode. 

The relative entropy in  \eqref{eq:minimise_hamiltonian_relax}
enforces the updated policy ${\nu}^{\theta}_\varrho(t,x)$ to stay close 
to ${\nu}^{m}(t,x)$.
This  prevents  an excessively large policy
update,
and ensures that the updated policy ${\nu}^{\theta}_\varrho$
explores the space 
if $\nu^m$ is a relaxed policy.
Similar ideas  have been used to design proximal policy gradient methods in \cite{schulman2017proximal,kerimkulov2021modified} and 
modified value iterations in \cite{geist2019theory}.
The update rule \eqref{eq:minimise_hamiltonian_relax} is  analogous to
 the mirror descent algorithm  
for 
 optimisation problems
\cite{beck2003mirror},
with the Hamiltonian $H^\theta$
  playing the role of the gradient in mirror descent.

The policy ${\nu}^{\theta}_\varrho$ in \eqref{eq:minimise_hamiltonian_relax}
is Gaussian 
if the previous  policy 
$\nu^{m}$ is Gaussian. 
Indeed, 
let 
$\nu^{m}(t,x)=  \cN({k}^{m}_t +{K}^{m}_t x, (\lambda^{m}_t)^2)$
for some $(t,x)\in [0,T]\t \sR$,
then 
 one can directly verify    that 
\bb\label{eq:ppo_rho}
{\nu}^{\theta}_\varrho(t,x)=  \cN(k^{m+1}_t +K^{m+1}_t x, ({\lambda^{m+1}_t})^2),
\quad     (t,x)\in [0,T]\t \sR,
\ee
where
\begin{equation} 
\label{eq:MD_relaxed}
\begin{alignedat}{2}
{k}^{m+1}_t
&=h_t^m k_t^{ \theta}+(1-h_t^m) {k}^{m}_t,
\q 
&&
\quad 
{K}^{m+1}_t
=h_t^m K_t^{ \theta}+(1-h_t^m) {K}^{m}_t,
\\
({\lambda}^{m+1}_t)^{-2} & = \frac{2R_t}{\varrho} + (\lambda^{m}_t)^{-2},
\q 
&& \quad 
h^m_t = \frac{2R_t}{2R_t +  \varrho ({\lambda}^{m}_t)^{-2}},
\end{alignedat}
\end{equation} 
with $K_t^{ \theta}$ and $k_t^{ \theta}$
defined in \eqref{eq:K_k_theta}.

\end{Approach}
 
 Observe that 
in the present LQ setting,
solving the relaxed control problems in Approaches \ref{ex:exploration} and 
\ref{ex:MD} 
leads to    Gaussian-measure-valued policies.
The mean of these Gaussian measures 
is affine in the state  
and 
 depends     
on the estimated parameter $\theta$,
while the variance of these Gaussian measures 
depends on   the regularising weight $\varrho$. 
 In the following, 
 we demonstrate 
 how to
 follow   these Gaussian  policies
 to 
  interact with the dynamics 
\eqref{eq:LQ_RL_state},
  and how to choose the regularising weight
  for an efficient learning algorithm.

 \subsection{Execution of Gaussian relaxed policy}
 \label{sec:main_executation}
 This section studies the execution  of 
 a Gaussian relaxed policy via randomisation. 
 The precise definition 
 of Gaussian policies 
 is given as follows.

\begin{Definition}
\label{def:Gaussian_relax}
For each  
$k, K \in L^\infty([0,T];  \sR)$ and $\lambda \in L^\infty([0,T]; \sR_{\ge 0})$,
we define  a Gaussian  relaxed policy  $\nu:[0,T]\t \sR \to \cP(\sR)$, denoted by $\nu \sim \cG(k,K,\lambda)$,
such that 
for all $(t,x)\in [0,T]\t \sR$,
$\nu(t,x)= \cN(k_t+K_tx, \lambda_t^2)$.


\end{Definition}

{In the following, we introduce sampling procedures to execute a fixed Gaussian   policy $\nu\sim \cG(k, K,\lambda)$, where the coefficients $(k,K,\lambda)$ are known to the agent. This is the task encountered in each episode of the previously mentioned episodic learning problem, where a Gaussian policy $\nu$ has been constructed based on estimated parameters.
We allow the coefficients of the Gaussian policy to be arbitrary  bounded functions,
which include the polices \eqref{eq:optimal_relax_theta_rho} and  \eqref{eq:ppo_rho}
as special cases. 
Note that executing a Gaussian policy does not require knowing the true parameter $\theta^\star$
in \eqref{eq:LQ_RL_cost}, as the agent can simply take feedback controls based on observed system states.}  

To execute a Gaussian relaxed policy 
$\nu$, 
 at each given time and state,
the agent will 
sample a control action based on the distribution 
$\nu(t,x)\sim \cN(k_t+K_tx, \lambda_t^2)$. 
This   suggests 
interacting with the dynamics \eqref{eq:LQ_RL_state}
using 
 a randomised policy 
$\varphi(t,x)=k_t+K_tx+ \lambda_t \xi_t$,  
where 
  $\xi_t \sim \cN(0,1)$
is an injected Gaussian noise independent of the  Brownian motion $W$ in \eqref{eq:LQ_RL_state}. 
{The precise definition of the class of  randomised policies is given below.

\begin{Definition}
\label{definition:randomised}
For each 
 $k, K   \in L^\infty([0,T];  \sR)$, $\lambda \in L^\infty([0,T];  \sR_{\ge 0})$,
 and   measurable process $\xi:\Om\t [0,T]\to \sR$ 
 satisfying 
 for all $t \in [0,T]$,
 $\xi_t$ is an $\cF_0$-measurable standard normal random variable, 
 we define 
  a randomised policy
  with Gaussian noise
  $\varphi:\Om\t [0,T]\t \sR\to \sR$,
  denoted by $\varphi \sim \cR(k, K, \lambda,\xi)$, 
  such that  
  for all $(\om,t,x)\in \Om\t [0,T]\t \sR$,
 $\varphi(\om,t,x)=k_t+K_t x+\lambda_t \xi_t(\om)$.

\end{Definition}

 Note that 
 the process $\xi$ in Definition \ref{definition:randomised}  
models  additional  Gaussian noises 
used for executing a relaxed policy,
and 
is assumed, without loss of generality, 
sampled at $t=0$
(and hence independent of the Brownian motion $W$).
However, Definition \ref{definition:randomised} 
  only imposes a Gaussianity of 
$\xi_t$ for each $t\in [0,T]$,
without requiring 
 an independence  of $(\xi_t, \xi_s)$
for different $t,s\in [0,T]$.
In the sequel,
we simply refer to 
$\varphi\sim \cR(k, K, \lambda,\xi)$
as
a randomised policy if no confusion occurs.

To  compare the performances of the Gaussian relaxed policy  and a randomised policy,
we introduce the following cost of a  relaxed policy $\nu\sim \cG(k, K,\lambda)$
as in   \cite{wang2020reinforcement} (cf.~\eqref{eq:cost_relax_entropy}): 
\begin{align}
\label{eq:cost_RL_relax}
    \widetilde{J}^{\theta^\star}_0(\nu) \coloneqq\sE\Bigg[
    \int_0^T \int_\sR f(t,X^{\theta^{\star}, \nu}_t,a)\nu(t,X^{\theta^{\star}, \nu}_t; \d a)\,\d t
    +g(X^{\theta^{\star}, \nu}_T)
    \bigg],
\end{align}
where  $f$ and $g$ are  defined in \eqref{eq:learning_cost},
 and $X^{\theta^{\star}, \nu}$ is the solution to the following relaxed dynamics:
\begin{equation}
    \label{eq:LQ_RL_state_relax}
    \d X_t = 
   \int_\sR   (A^\star X_t +B^\star a)\nu(t,X_t; \d a)\, \d t
     +  
    \bar{\sigma}_t\,
     \d W_t,
     \quad  t\in [0,T]; \quad X_0 = x_0.
\end{equation}
Note that if $\lambda\equiv 0$, then $\nu (t,x)=\delta_{\phi(t,x)}$ and $ \widetilde{J}^{\theta^\star}_0(\nu) = J^{\theta^\star}(\phi)$, where  $\phi(t,x)=k_t+K_t x$, 
 $\delta_a$ is the Dirac measure supported at $a\in \mathbb{R}$, 
and $ J^{\theta^\star}(\phi)$ is  defined in \eqref{eq:LQ_RL_cost}. In other words, 
\eqref{eq:cost_RL_relax}-\eqref{eq:LQ_RL_state_relax}
incorporates 
\eqref{eq:LQ_RL_cost}-\eqref{eq:LQ_RL_state} as a special case. 
We also define the expected cost of a randomised policy 
  $\varphi\sim \cR(k,K,\lambda,\xi)$  (with fixed realisation of $\xi$):
\begin{equation}
\label{eq:cost_randomise}
     J^{\theta^\star}(\varphi)
     \coloneqq\sE\Bigg[
    \int_0^T  f\big(t,X^\varphi_t,\varphi(\cdot,t,X^\varphi_t)\big)\,\d t
    +g(X^\varphi_T)\,\bigg\vert\, \cF_0
    \bigg],
\end{equation}
where 
$X^\varphi$ satisfies the following controlled dynamics
(cf.~\eqref{eq:LQ_RL_state}): 
\begin{equation}
    \label{eq:sde_randomise}
    \d X_t = 
   (A^\star X_t+B^\star \varphi(\cdot,t,X_t))\, \d t
     +  
        \bar{\sigma}_t\,     \d W_t,
     \; t\in [0,T]; \quad X_0 = x_0.
\end{equation}
Note that 
the expectation in \eqref{eq:cost_randomise} is only taken over the Brownian motion $W$ in \eqref{eq:sde_randomise}.
Hence, 
$J^{\theta^\star}(\varphi)$ is a random variable, 
as its value  depends on the realisations of the injected noise $\xi$.
 
 The following lemma quantifies  the difference 
between $\widetilde{J}(\nu)$
with
$\nu\sim \cG(k, K ,\lambda)$
and 
${J}(\varphi)$
with
$\varphi\sim \cR(k, K, \lambda,\xi)$.

\begin{Lemma}
\label{lemma:cost_difference}
 Suppose (H.\ref{assum:lq_rl}) holds.
Then for all
$\nu\sim \cG(k, K ,\lambda)$ and
$\varphi\sim \cR(k, K, \lambda,\xi)$,
\begin{align}
\label{eq:cost_expansion}
\begin{split}
   J^{\theta^\star}(\varphi) - \widetilde{J}^{\theta^\star}_0(\nu) = \int_0^T h _{1,t} \lambda_t \xi_t\, \d t + \int_0^T h _{2,t} \lambda_t^2 (\xi_t^2 -1)\, \d t   + \int_0^T \int_0^T h _{3,t,r} \lambda_r \lambda_t \xi_r \xi_t \, \d r \d t,
    \end{split}
\end{align}
where $h _1,h _2 :[0,T]\to\sR$ and $h_3 :[0,T]^2\to\sR$
are some bounded functions whose sup-norms depend  only on the sup-norms of
 $k,K$ and
 the coefficients in (H.\ref{assum:lq_rl}).
\end{Lemma}

The proof of Lemma \ref{lemma:cost_difference} is given in Section \ref{sec:execution_general}.

Note that    the first two terms  on the right-hand side of  \eqref{eq:cost_expansion}  vanish under expectation,  while  the third term has a non-zero expectation if the injected noise  $\xi$ is correlated across time. 
Indeed, the following illustrative example 
shows  that repeated sampling a relaxed policy across multiple episodes may not yield the relaxed cost $\widetilde{J}^{\theta^\star}_0 (\nu)$, and therefore is not the correct execution of a relaxed policy
(cf.~the heuristic implementation
of a relaxed policy 
 suggested in \cite[Section 3]{firoozi2022exploratory}). 
The proof follows from a direct computation and the strong law of large numbers (omitted for brevity).
}

 \begin{Example}
 \label{example:repetition}
 Let $\theta^\star =(0,1)$, 
 $\mu, \sigma\in \sR$ and  $ \lambda\ge 0$.
Consider the relaxed policy $\nu:[0,1]\t \sR\to \cP(\sR)$
 such that 
 $\nu(t,x)=\cN(\mu x,\lambda^2)$
 for all $(t,x)\in [0,1]\t \sR$,
and  the   cost (cf.~\eqref{eq:cost_RL_relax}):
 $$
    \widetilde{J}^{\theta^\star}_0(\nu) =\sE\Bigg[
    \int_0^1 \int_\sR |a|^2\nu(t,X^\nu_t; \d a)\,\d t
    \bigg],
    \quad
\textnormal{with 
$ X^\nu_t = 
    \int_0^t \int_\sR a \nu(s,X^\nu_s; \d a)\, \d s+\sigma   W_t,
    \; \fa t\in [0,1].
$
}  
$$
 Let $(\zeta_i)_{i\in \sN}$ be  $\cF_0$-measurable independent standard normal random variables,
 and for each $i\in \sN$, let    $\varphi^i:\Om\t [0,1]\t \sR\to \sR$ 
be such that 
 $\varphi^i(\cdot,t,x)=\mu x+\lambda \zeta^i$
   for all $(t,x)\in  [0,1]\t \sR$, and
the cost  (cf.~\eqref{eq:cost_randomise}):
$$   J^{\theta^\star} (\varphi^i)
     \coloneqq\sE\Bigg[
    \int_0^1  \left(\varphi^i(\cdot,t,X^i_t)\right)^2\,\d t\,\bigg\vert\, \cF_0
    \bigg],
        \quad
\textnormal{with 
$ X^i_t = 
    \int_0^t \varphi^i(\cdot,s,X^i_s) \, \d s+\sigma   W_t,
    \; \fa t\in [0,1]
$.
}
$$
Then 
$
 \lim_{N\to \infty}
 \frac{1}{N}\sum_{i=1}^N
 J^{\theta^\star}(\varphi^i)
 =\widetilde{J}^{\theta^\star}_0(\nu)+
 \frac{\lambda^2}{2\mu}
 \left(
 e^{2 \mu}-2\mu-1
 \right)$ a.s.
%

 \end{Example}
 
Instead, we show that the relaxed cost \eqref{eq:cost_RL_relax} can be realised by sampling the target policy with sufficient independence across different time points within a single episode. Specifically, we consider executing a relaxed policy on a time grid using piecewise constant independent Gaussian noises, defined as follows.
 
\begin{Definition}
\label{definition:pc_randomised}

For each  
$\nu \sim \cG(k,K,\lambda)$
 and grid 
 $\pi=\{0=t_0<\cdots<t_N=T\}$ 
with   mesh size  $|\pi|=\max_{i=0,\ldots, N-1}(t_{i+1}-t_i)$,
we say   
$\varphi: \Om\t [0,T]\t \sR\to \sR$
is  a piecewise constant  independent
execution of $\nu$
on $\pi$, 
denoted by  $\varphi\sim \cR(k,K,\lambda,\xi^\pi)$,
 if there exist $\cF_0$-measurable 
independent standard normal random variables
$(\zeta_i)_{i=0}^{N-1}$ 
 such that 
  for all $(\om,t,x)\in \Om\t [0,T]\t \sR$,
 $\varphi(\om,t,x)=k_t+K_t x+\lambda_t \xi^\pi_t(\om)$,
with
 $\xi^\pi_t(\om)=\sum^{N-1}_{i=0} \zeta_i(\om)\bm{1}_{t_i\le t< t_{i+1}}$.
\end{Definition}

{  
\begin{Remark}
At first glance, it seems natural to consider injecting an exploration noise $\xi=(\xi_t)_{t\in [0,T]}$ 
that consists of 
(uncountably many)
pairwise independent standard Gaussian random variables. However, 
imposing pairwise independence over the    index set $[0,T]$ leads to highly irregular sample paths of $\xi$,  which   introduces  measure-theoretical issues. 
For instance,      \cite[Corollary 4.3]{sun2006exact} shows that, if $\xi$ is  
 a process of   (essentially) pairwise independent non-constant random variables, 
 then almost all sample paths of $t\mapsto \xi_t(\omega)$ are not   Lebesgue measurable\footnotemark.  
  This implies that   the time integrals of $\xi$ in Lemma \ref{lemma:cost_difference}   become  ill-defined,
  which prevents us from using 
  $\xi$    as  an exploration noise.

\footnotetext{
{A measurable process $\xi$ of   (essentially) pairwise independent random variables
 does not exist on a usual product probability space,
 but 
can be constructed on an extended product   space,
as shown in \cite{sun2006exact}. 
In particular, 
in \cite[Proposition 5.6]{sun2006exact},
given  the   index set $I=[0,1]$ and  a probability space $(\Lambda, \mathcal{F},P)$,
 the author constructs an extended product probability space $(I\times \Lambda, \mathcal{I} \boxtimes \mathcal{F}, \lambda\boxtimes P)$ (known as the rich Fubini extension), on which   a continuum of (essentially) i.i.d.~random variables is defined. Here $\mathcal{I}$ and $\lambda$ are   $\sigma$-algebra and probability measure on $I$, respectively, 
whose  specific choices  are part of the construction.
 The implication of \cite[Corollary 4.3]{sun2006exact} is that  $\lambda$ 
 cannot be chosen as   the Lebesgue measure
 in this context.
 }
}

 Due to the technical challenges associated with the use of a ``continuum" of i.i.d.~random variables,  we   have opted for a   relaxed approach 
 by ensuring 
  sufficient independence on a grid,
and adjusting  the sampling frequency through the mesh size. 
This allows us to bypass    these measure-theoretical difficulties while still achieving the desired level of independence in our exploration noises.

It is worth noting that one can also consider changing the coefficients of   randomised policies    only at   grid points. This will  introduce  an additional time discretisation error, which can be controlled based on the time regularity of   $(k, K, \lambda)$.
Discrete-time policies of this nature have been analysed in \cite[Section 2.3]{basei2022logarithmic} for LQ-RL problems with deterministic policies. The analysis therein shows that   discrete-time policies achieve similar regret as continuous-time policies, with an additional term that explicitly depends on the time step sizes used in the algorithm.
We anticipate that a similar regret analysis can be performed in the present setting with stochastic policies,
whose detailed analysis is left for future work.

\end{Remark} 
}

The following theorem 
establishes the convergence  of 
$J^{\theta^\star}(\varphi) - \widetilde{J}^{\theta^\star}_0(\nu)$
with respect to  the mesh size $|\pi|$ and the standard deviation  
$\lambda $,
{where  $\nu\sim \cG(k, K ,\lambda)$
and 
 $\varphi\sim \cR(k,K, \lambda, \xi^\pi)$ are given in Definitions
\ref{def:Gaussian_relax} and 
  \ref{definition:pc_randomised}, respectively.}
This consequently 
proves that $\varphi$ is   a proper execution of $\nu$. 
The explicit dependence of the convergence rate on $\lambda$ is crucial for determining the appropriate choice of entropy regularisation weight and for analysing the algorithm performance   in Section \ref{sec:regret_analysis}. 

%

\begin{Theorem}
\label{Thm:execution gap_drift_control}
 Suppose (H.\ref{assum:lq_rl}) holds and let 
$\nu\sim \cG(k, K ,\lambda)$.
Then there exists a constant $C\ge 0$, depending only on the sup-norms of
 $k,K$ and
 the coefficients in (H.\ref{assum:lq_rl}),
such that 
\begin{enumerate}[(1)]
\item 
\label{item:gap_high_prob}
for all grids
$\pi$ of $[0,T]$,
all 
  $\varphi\sim \cR(k,K, \lambda, \xi^\pi)$ and all $\delta\ge 0$,
$$
\sP\left(
\left|J^{\theta^\star}(\varphi)-\widetilde{J}^{\theta^\star}_0(\nu)
\right|
\geq C \sqrt{|\pi|} 
 \|\lambda\|_\infty(1+
\|\lambda\|_\infty)
\Big( 1 +  \ln\big(\tfrac{1}{\delta}\big)\Big)
\right) \leq \delta,
$$
\item 
\label{item:gap_as}
if 
 $\pi_n=\{i\frac{T}{n}\}_{i=0}^n$
 and 
$\varphi^n\sim \cR(k,K, \lambda, \xi^{\pi_n})$
for all  $n\in \sN$,
then 
it holds a.s.~that
$$
\big|J^{\theta^\star}(\varphi^n)-\widetilde{J}^{\theta^\star}_0(\nu)
\big|
\le C  n^{-\frac{1}{2}}({\ln n })
 \|\lambda\|_\infty(1+
\|\lambda\|_\infty),
\quad   \textnormal{for all sufficiently large $n\in \sN$,}
$$
\item
\label{item:gap_exp}
for all grids
$\pi$ of $[0,T]$
and  
  $\varphi\sim \cR(k,K, \lambda, \xi^\pi)$,
  $|\sE[J^{\theta^\star}(\varphi)-\widetilde{J}^{\theta^\star}_0(\nu)]|\le C|\pi|\|\lambda\|^2_\infty$.
\end{enumerate}

\end{Theorem}
The proof of Theorem \ref{Thm:execution gap_drift_control} is given in Section \ref{sec:execution_general}. 

\begin{Remark}
Roughly speaking,
Theorem \ref{Thm:execution gap_drift_control}
shows that if a randomised policy  $\varphi\sim \cR(k,K, \lambda, \xi)$
is driven by an external noise $\xi$
with  independent marginals across time,
then its cost $J^{\theta^\star}(\varphi)$ well represents the 
relaxed cost $\widetilde{J}^{\theta^\star}_0(\nu)$. 
Such a process $\xi$ is closely related to the white noise in the distribution theory,
and we refer the reader to Section \ref{sec:discussion} for a detailed discussion. 

\end{Remark}

 \subsection{
 Tuning  entropy regularising weight
 for optimal regret}
 \label{sec:regret_analysis}
 
Based on Theorem \ref{Thm:execution gap_drift_control}, this section    analyses 
the episodic learning algorithms 
   outlined in Sections \ref{sec:lq_rl_outline} and \ref{sec:relaxed_control_problem}. 
In particular, 
it
optimises the policy sampling frequency and entropy regularisation weight
to achieve the best possible algorithm regret.

\paragraph{Probabilistic learning framework.}
We start by recalling the generic  
 learning procedure for solving \eqref{eq:LQ_RL_cost}-\eqref{eq:LQ_RL_state} with unknown $\theta^\star$: 
  for each episode $m\in \sN$, let  $\nu^{m}=  \cN({k}^{m}_t +{K}^{m}_t x, (\lambda^{m}_t)^2)$
 be a given Gaussian relaxed policy.
Here
$ {k}^{m}, {K}^{m} , \lambda^{m} :\Om\t [0,T]\to \sR$
are   
designed  based on   observations
from previous episodes.
The  agent  then 
 chooses 
 a grid $\pi_m$ of $[0,T]$
 and   executes 
$  {\nu}^{m}$
 on   $\pi_m$
via  a  
 policy $ {\varphi}^{m}  $:
for all  $(\om, t,x)\in \Om\t    [0,T]\t \sR$,
\begin{equation}
\label{eq:randomised_control_m_penalty}
{\varphi}^{m} (\om, t,x) =
 {k}^{m}_t(\om)+   {K}^{{m}}_t(\om) x + {\lambda}^{m}_t (\om) 
\xi^{\pi_m}_t(\om),
 \end{equation}
where  $\xi^{\pi_m}$ is 
a piecewise constant process  on $\pi_m$
generated  by independent standard normal variables $(\zeta^m_i)_{i \in \sN}$
 (cf.~Definition \ref{definition:pc_randomised}).
The agent     observes 
a trajectory of 
the corresponding state process $X^{m}$
governed by the  following dynamics (cf.~\eqref{eq:LQ_RL_state}):
\begin{equation}
     \label{eq:LQ_RL_state_m}
    \d X^m_t = 
    \theta^\star  Z^{   m}_t\, \d t
     +  
    \bar{\sigma}_t\,
     \d W^m_t,
     \quad X^m_0 = x_0,
     \q 
     \textnormal{with 
    $ Z^{   m}_t
=\begin{psmallmatrix}
X^{m}_t
\\
{\varphi}^{m} (\cdot, t, X^{m}_t)
\end{psmallmatrix}$,
     }
\end{equation}
where $W^m$ 
is   an independent   Brownian motion corresponding to the $m$-th episode.\footnotemark
 \footnotetext{Without loss of generality,
 we assume that
 $(\xi^{\pi_m}, W^{m})_{m\in \sN}$
 are defined on the probability  space 
$(\Om,\cF,\sP)$,
$(\xi^{\pi_m})_{m\in \sN}$ are measurable with respect to 
  the initial $\sigma$-algebra
   $\cF_0$,
   and  all 
  Brownian motions $(W^m)_{m\in\sN}$  are 
   independent.
}
The agent 
then  
constructs an estimate of 
 $\theta^\star$,
denoted by $\theta_m$, 
   based on  all observations 
$(X^{ n},\xi^{\pi_n})_{n=1}^m$,
 and 
 solves a regularised  relaxed control problem 
 (see  e.g., Approaches \ref{ex:exploration} and 
\ref{ex:MD})
 based on $\theta_m$ and  a properly chosen   weight $\varrho_m>0$. 
This 
determines 
the relaxed policy $\nu^{m+1}$ 
for the next  episode.

\paragraph{Bayesian inference.}
To derive a concrete learning algorithm, we infer the  parameter $\theta^\star$
by a  Bayesian approach as in \cite{szpruch2021exploration}. 
Let
 $\cF^{\textrm{ob}}_m$ be
  the   observation $\sigma$-algebra
  generated by 
the   state processes and 
the  injected noises 
after the $m$-th episode:
\bb
\label{eq:observation}
\cF^{\textrm{ob}}_m=\sigma\{X^{n}_t, \xi^{\pi_n}_t\mid t\in [0,T],n=1,\ldots,  m\}\vee \cN_0, \quad m\in \sN\cup\{0\},
\ee
where $\cN_0$ is the $\sigma$-algebra generated by $\sP$-null sets.
By \cite[Section 7.6.4]{liptser1977statistics},
for each $m\in \sN$,
the likelihood function of $   {\theta}^\star$
with observations $(X^{  m}, \xi^{\pi_m})$
is given by   
\begin{align}
\label{eq:likelihood}
&
\ell(  {\theta}^\star  \mid  X^{  m}, \xi^{\pi_m})
 \propto
 \exp\bigg(\int_0^T
 \frac{1}{\bar{\sigma}^{2}_s}
   {\theta}^\star Z^{ m}_s  \d X^{  m}_s 
- \frac{1}{2} \int_0^T
 \frac{1}{\bar{\sigma}^{2}_s}(   {\theta}^\star Z^{  m}_s)^2 \d s
\bigg),
\end{align}
where 
$\propto$ stands for   proportionality  up to a constant independent of $  {\theta}^\star$.
 Hence, 
given 
the initial belief  that $   {\theta}^\star$ follows the prior distribution 
$\pi(  {\theta}^\star\mid  \cF^{\textrm{ob}}_0 )=\cN({\theta}_0,  V_0)$ for some $\theta\in \sR^{1,2}$ and $V_0\in \sS_+^2$,
 the posterior distribution of $\pi(   {\theta}^\star \mid  \cF^{\textrm{ob}}_m)$ after the $m$-th episode is given by
\begin{align}
\label{eq:posterior}
\begin{split}
 \pi(  {\theta}^\star  \mid \cF^{\textrm{ob}}_m)
& \propto \pi(   {\theta}^\star \mid \cF^{\textrm{ob}}_0) \prod_{n = 1}^m \ell(   {\theta}^\star \mid  X^{ n}, \xi^{\pi_n})
 \propto  \exp\bigg( -\frac{(   {\theta}^\star-\hat{{\theta}}_m)  V_m^{-1}
(   {\theta}^\star-\hat{{\theta}}_m)^\top }{2}  
\bigg),
\end{split}
\end{align}
where 
$\hat{\theta}_m$ and 
$V_m$ are 
given by:
 \begin{subequations}
\label{eq: statistics}
\begin{alignat}{1}
V_m
&\coloneqq
\bigg(V_0^{-1}+
\sum_{n=1}^m
\int_0^T
\frac{1}{\bar{\sigma}^{2}_s} Z^{  n}_s (  Z^{ n}_s)^\top \,\d s
\bigg)^{-1}\in \sS^{2}_+,
\label{eq:posterior_variance}
\\
\hat {\theta }_{m}
&\coloneqq
\bigg(
{\theta}_0
V_0^{-1}+
\sum_{n=1}^m
\left(
 \int_0^T
 \frac{1}{\bar{\sigma}^2_s}
 Z^{  n}_s
 \,\d X^{n}_s
 \right)^\top
 \bigg)V_{ m}
  \in \sR^{1\t 2}.
  \label{eq:posterior_mean}
\end{alignat}
\end{subequations}

For simplicity,
we   assume that the agent 
 has  prior knowledge of the magnitude of  $\theta^\star$:
 \begin{Assumption}
\label{eq:bounded}
$\Theta$ is a compact    subset of $ \sR^{1\t 2}$
such that $\theta^\star\in \operatorname{int}(\Theta)$.
\end{Assumption}

Condition (H.\ref{eq:bounded})  suggests  
a  
 truncated maximum a posteriori 
estimate $\theta_m$ as follows:
 \bb
   \label{eq:theta_m_project}
 \theta_m  \coloneqq \Pi_\Theta (\hat{\theta}_m,V_m)\in \sR^{1\t 2},
 \ee
 where 
  $\Pi_\Theta: \sR^{1\t 2}\t \sS^2_+\to \Theta$
is a measurable function
such that  $\Pi_\Theta(\theta,V)=\theta$ for all $\theta\in \Theta$ and $V\in \sS^2_+$.
 Possible choices of $\Pi_\Theta$
include the orthogonal projection for  closed and convex $\Theta$,
or a    function that maximises the posterior distribution 
\eqref{eq:posterior} over all ${\theta}^\star\in \Theta$.
We emphasise that  the 
a-priori bound of $\theta^\star$ 
is only imposed to simplify the analysis,
which 
 along with   the projection step  $\Pi_\Theta$ 
ensures  the estimates  $(\theta_m)_{m\in \sN}$ are a-priori bounded.
In the case without prior knowledge on $\theta^\star$, 
one can remove  the projection step, and instead adopt  a sufficiently long initial exploration step to guarantee a uniform   bound  of   $(\hat{\theta}_m)_{m\in \sN}$
with high probability 
(see e.g., \cite{simchowitz2020naive,basei2022logarithmic}).

\paragraph{Performance measure.}
  
 In the above learning procedure,
 the agent solves suitable 
  regularised relaxed control problems  
  and executes the relaxed policies through randomisation.
  To measure the performance 
 of a learning algorithm in this setting,
we consider  the  regret of learning  defined as follows (see e.g., \cite{guo2021reinforcement, szpruch2021exploration,basei2022logarithmic}):
 for each $N\in\sN$,
 \begin{equation}
 \label{eq:regret_exploration}
     \textrm{Reg}(N)
     =\sum_{m=1}^N
     \left( J^{{\theta}^\star}(
     \varphi^{{m }})
     -J^{{\theta}^\star}(\phi^{\theta^\star})\right),
 \end{equation}
 where $J^{{\theta}^\star}(\phi^{\theta^\star})$ 
is the optimal cost that agent can achieve knowing the parameter $  \theta^\star$ (see \eqref{eq:LQ_RL_cost}),
and 
 $J^{{\theta^\star}}(
     \varphi^m)$ is the expected cost of the  randomised policy
     $\varphi^m$ for the $m$-th episode:
\begin{equation}
\label{eq:cost_m_episode}
    J^{{\theta^\star}}(
     \varphi^m)
     \coloneqq
     \sE\bigg[
     \int_0^T  f
     \left(t,X^m_t,
     \varphi^m(\cdot, t,X^m_t)\right)\,\d t
    +g(X^m_T)
     \,\bigg\vert\,  \cF^{\textrm{ob}}_{m-1}\vee \cZ_m\bigg],
\end{equation} 
where
$\cF^{\textrm{ob}}_{m-1}$
is defined as in \eqref{eq:observation},
and $\cZ_m=  \sigma\{\xi^{\pi_m}_t\mid t\in [0,T]\}$.
The regret
$\textrm{Reg}(N )$
characterises the cumulative loss from taking sub-optimal policies up to the $N$-th episode. 
Agent's aim is to construct   policies 
 whose regret grows sublinearly with respect to $N$.

Note that 
 the expectation
 in \eqref{eq:cost_m_episode}
is only taken  with respect to the Brownian motion $W^m$, and consequently,
for each $m\in \sN$,
$J^{{\theta^\star}}(
     \varphi^{m})$
is a random variable depending on 
the realisations of
the Brownian motions $(W^n)_{n=1}^{m-1}$ and 
the injected noises $(\xi^{\pi_n})_{n=1}^m$. 
   

\paragraph{Regret analysis.}
Given the parameter inference scheme \eqref{eq: statistics}-\eqref{eq:theta_m_project},
 the performance   of  a learning algorithm  
 now 
depends on  
the  construction 
and execution 
of    relaxed policies.
In the sequel,
we propose two algorithms based on  regularised   control problems 
in Approaches \ref{ex:exploration} and 
\ref{ex:MD},
and  choose   the regularising  weights   $(\varrho_m)_{m\in \sN}$
and   execution grids $(\pi_m)_{m\in\sN}$ 
to minimise the  algorithm regrets.

 The first algorithm computes     relaxed policies 
 by solving the  relaxed control problem \eqref{eq:cost_relax_entropy}
with   exploration rewards. 
 We summarise the algorithm as follows.

\begin{algorithm}[H]
\label{alg:exploration}
\DontPrintSemicolon
\SetAlgoLined

  \KwInput{
  $\theta_0\in \sR^{1\t 2}$,
  $V_0\in \sS^{2}_+$, 
   a truncation function $\Pi_\Theta$,
   and a  Gaussian relaxed policy $\nu^1$.
  }

 
 \For{$m = 1, 2,\ldots$}
{
	{
	Determine   a grid $\pi_m$ of $[0,T]$ and 
	exercise  $\nu^{m}$ on $\pi_m$ via 
	$\varphi^m$ as  in \eqref{eq:randomised_control_m_penalty}. 
	}\;
	 {Obtain the updated estimate  $\theta_m$ via \eqref{eq:theta_m_project}.}\;
	{
	Determine  $\varrho_m>0$  and 
	compute  $\nu^{m+1}$ 
	by \eqref{eq:optimal_relax_theta_rho}
	with $\theta=\theta_m$ and $\varrho=\varrho_m$.
	}
	\label{step:update_nu}
	\;

 }
 \caption{Learning with  exploration reward}
\end{algorithm}

The following theorem chooses 
$(\varrho_m)_{m\in\sN}$
 and $(\pi_m)_{m\in \sN}$
 such that Algorithm \ref{alg:exploration}
 achieves  a regret of the order
 $\cO(\sqrt{N} \ln N)$ in expectation.
This recovers the   regret  order 
of the phased-based algorithm 
with deterministic policies 
in \cite{szpruch2021exploration}. 
 Note that
  the additional exploration reward vanishes 
 as the number of episodes tends to infinity.

\begin{Theorem}
\label{thm:vanish_exploration}
Suppose (H.\ref{assum:lq_rl}) and (H.\ref{eq:bounded}) hold. 
Let $\theta_0\in \sR^{1\t 2}$, 
$V_0\in \sS_+^2$, 
$\varrho_0>0$ 
and  $\nu^1\sim \cG(k^{\theta_0},K^{\theta_0}, \sqrt{{\varrho_0}/{(2R)}})$. 
 Then there exists   $c_0 >0$
 such that if one sets 
$ \varrho_m= \varrho_0 m^{-\frac{1}{2}} \ln (m+1)
$ 
and $
|\pi_m|  \leq c_0$ 
for all  $m\in \sN$,
then there exists a constant $C \geq 0$ such that 
the regret of Algorithm \ref{alg:exploration} satisfies
$$   \sE \big[ \textnormal{Reg}(N) \big]
     \le  C\sqrt{N} \ln N,
     \quad \fa N\in \sN\cap[2,\infty). 
$$ 
\end{Theorem}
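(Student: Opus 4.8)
The plan is to decompose each episode's regret into three effects and bound each under the prescribed schedule. Writing $\phi^{\theta_{m-1}}(t,x)=k^{\theta_{m-1}}_t+K^{\theta_{m-1}}_tx$ for the greedy feedback associated with the estimate $\theta_{m-1}$ (so that $\nu^m\sim\cG(k^{\theta_{m-1}},K^{\theta_{m-1}},\sqrt{\varrho_{m-1}/(2R)})$ by \eqref{eq:optimal_relax_theta_rho}), I would split
$$J^{\theta^\star}(\varphi^m)-J^{\theta^\star}(\phi^{\theta^\star}) = \big[J^{\theta^\star}(\varphi^m)-\widetilde J(\nu^m)\big] + \big[\widetilde J(\nu^m)-J^{\theta^\star}(\phi^{\theta_{m-1}})\big] + \big[J^{\theta^\star}(\phi^{\theta_{m-1}})-J^{\theta^\star}(\phi^{\theta^\star})\big],$$
where $\widetilde J(\nu^m)$ is the relaxed cost of $\nu^m$ under the true dynamics. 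The first bracket is the execution error, the second the intrinsic exploration cost of randomisation, and the third the suboptimality of the greedy policy of a wrong parameter.

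For the \emph{execution error}, conditioning on $\cF^{\textrm{ob}}_{m-1}$ (so that $k^{\theta_{m-1}},K^{\theta_{m-1}}$ are frozen and, by compactness of $\Theta$ under (H.\ref{eq:bounded}), uniformly bounded), I would invoke Theorem \ref{Thm:execution gap_drift_control}(3) to get $|\sE[J^{\theta^\star}(\varphi^m)-\widetilde J(\nu^m)\mid\cF^{\textrm{ob}}_{m-1}]|\le C|\pi_m|\,\|\lambda^m\|_\infty^2\le Cc_0\varrho_{m-1}$, since $(\lambda^m_t)^2=\varrho_{m-1}/(2R_t)$. For the \emph{exploration cost}, a Gaussian moment computation gives $\int_\sR f(t,x,a)\,\nu^m(t,x;\d a)=f(t,x,\phi^{\theta_{m-1}}(t,x))+R_t(\lambda^m_t)^2$, so that $\widetilde J(\nu^m)=J^{\theta^\star}(\phi^{\theta_{m-1}})+\int_0^T R_t(\lambda^m_t)^2\,\d t=J^{\theta^\star}(\phi^{\theta_{m-1}})+\tfrac{T}{2}\varrho_{m-1}$. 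Both contributions are therefore controlled by $\sum_{m=1}^N\varrho_{m-1}$, and $\varrho_m=\varrho_0 m^{-1/2}\ln(m+1)$ yields $\sum_{m=1}^N\varrho_{m-1}=\cO(\sqrt N\ln N)$.

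The \emph{greedy suboptimality} is handled by the standard quadratic performance gap for LQ: since $\phi^{\theta^\star}$ minimises $J^{\theta^\star}$ and the cost is smooth and strongly convex in the control (as $R_t>0$), with the Riccati gains \eqref{eq:K_k_theta} Lipschitz in $\theta$ over the compact set $\Theta$, one obtains $J^{\theta^\star}(\phi^{\theta_{m-1}})-J^{\theta^\star}(\phi^{\theta^\star})\le C|\theta_{m-1}-\theta^\star|^2$. It then remains to bound $\sE[|\theta_{m-1}-\theta^\star|^2]$, where, using that $\Pi_\Theta$ fixes $\Theta$ and $\theta^\star\in\operatorname{int}(\Theta)$, the projected error is comparable to $|\hat\theta_{m-1}-\theta^\star|$. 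From \eqref{eq:posterior_variance}-\eqref{eq:posterior_mean} and \eqref{eq:LQ_RL_state_m} I would write $(\hat\theta_m-\theta^\star)^\top=V_m\big(V_0^{-1}(\theta_0-\theta^\star)^\top+M_m\big)$ with the martingale $M_m=\sum_{n=1}^m\int_0^T\bar\sigma_s^{-1}Z^n_s\,\d W^n_s$, whence $|\hat\theta_m-\theta^\star|^2\le \Lambda_{\min}(V_m^{-1})^{-1}\big(2M_m^\top V_m M_m+C|\theta_0-\theta^\star|^2\big)$; a self-normalised martingale tail bound gives $M_m^\top V_m M_m=\cO(\ln m)$ with high probability, so the estimation error is driven by an excitation lower bound for $V_m^{-1}$.

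The \textbf{main obstacle} is precisely this excitation bound with \emph{vanishing} exploration, namely $\Lambda_{\min}(V_m^{-1})\gtrsim\sum_{n=1}^m\varrho_{n-1}\asymp\sqrt m\ln m$ with high probability. My plan is first to prove the conditional per-episode bound $\sE[\int_0^T\bar\sigma_s^{-2}Z^n_s(Z^n_s)^\top\,\d s\mid\cF^{\textrm{ob}}_{n-1}]\succeq c\varrho_{n-1}I$: factoring $Z^n_s=\begin{psmallmatrix}1&0\\K^n_s&1\end{psmallmatrix}(X^n_s,\,k^n_s+\lambda^n_s\xi^{\pi_n}_s)^\top$, the unit-determinant transform shows the conditional covariance has determinant equal to $(\lambda^n_s)^2$ times the residual variance of $X^n_s$ after projecting out $\xi^{\pi_n}_s$, and the latter is bounded below by the diffusion $\bar\sigma$ for $s$ away from $0$; since the trace is bounded above, $\Lambda_{\min}\gtrsim(\lambda^n_s)^2\asymp\varrho_{n-1}$. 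Here $|\pi_m|\le c_0$ is used to guarantee that the injected noise $\lambda^n_s\xi^{\pi_n}_s$ excites the control coordinate before being smeared by the drift across a grid cell. I would then pass from this conditional statement to the random Gram sum via a matrix Freedman-type concentration, using uniform moment bounds on $(X^n,\varphi^n)$ from the bounded gains and Gaussian noise. Combining yields $\sE[|\theta_{m-1}-\theta^\star|^2]\lesssim 1/\sqrt m$ (the rare excitation-failure event being absorbed by $|\theta_{m-1}-\theta^\star|\le\operatorname{diam}(\Theta)$ and a summable failure probability), so $\sum_{m=1}^N\sE[|\theta_{m-1}-\theta^\star|^2]=\cO(\sqrt N)$. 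Adding the three contributions gives $\sE[\textrm{Reg}(N)]\le C\sqrt N\ln N$.
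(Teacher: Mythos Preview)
Your proposal is correct and follows essentially the same route as the paper's proof. The three-term decomposition, the use of Theorem~\ref{Thm:execution gap_drift_control}(3) for the execution error, the direct Gaussian-moment identity $\widetilde J(\nu^m)=J^{\theta^\star}(\phi^{\theta_{m-1}})+\int_0^T R_t(\lambda^m_t)^2\,\d t$, the quadratic performance gap via Lipschitz Riccati gains on the compact $\Theta$, the self-normalised martingale bound for $\hat\theta_m-\theta^\star$, and the per-episode excitation lower bound combined with a subexponential (Bernstein/Freedman-type) concentration for the Gram sum are exactly the ingredients the paper assembles in Propositions~\ref{prop:concentration_G}--\ref{prop: sensitivity} and Proposition~\ref{Prop:concentration for theta explore}. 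Your factoring argument for the excitation bound (unit-determinant transform plus residual variance of $X^n_s$) is a slightly different packaging of what the paper does in Proposition~\ref{prop:upper_lower_expectation}(2), where the cross-term $\sE[X^n_t\,\lambda^n_t\xi^{\pi_n}_t]$ is bounded by $C|\pi_n|\|\lambda^n\|_\infty^2$ and the remaining block-diagonal piece is controlled via $\Lambda_{\min}\big(\begin{psmallmatrix}1\\K\end{psmallmatrix}\begin{psmallmatrix}1\\K\end{psmallmatrix}^\top+\begin{psmallmatrix}0&0\\0&1\end{psmallmatrix}\big)>0$; both yield the same conclusion and both need the mesh-size constraint for exactly the reason you identify.
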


\begin{Remark}
The  proof of  Theorem \ref{thm:vanish_exploration}
  is given in Section  \ref{sec:analysis_exploration},
where 
the key steps are outlined
at the end of this section  
for the reader's convenience.
Note that for   simplicity  
we choose   $\nu^1$ 
 based on  $\theta_0$, 
 but the same regret order can be achieved
with
    $\nu^1\sim \cG(k^1,K^1, \lambda^1)$
 for any deterministic 
$ k^1, K^1$ and $\lambda^1$.
Moreover, 
we focus on  optimising the regret order in expectation,
and obtain  
a   simple deterministic  scheduling 
of   the sampling frequency    $(|\pi_m|)_{m\in \sN}$ 
and   regularising weights $(\varrho_m)_{m\in \sN}$.
The analysis can be extended to ensure that
a  similar  regret bound
holds  with probability at least $1-\delta$,
 for any given  $\delta\in (0,1)$,
 but 
$(\pi_m)_{m\in \sN}$ and $(\varrho_m)_{m\in \sN}$
have to be chosen
depending explicitly on $\delta$
(see e.g., \cite{
simchowitz2020naive,
guo2021reinforcement,
basei2022logarithmic}).
We leave a rigorous analysis of such a high-probability regret 
for future research.

 \end{Remark}
 
 The second algorithm 
 computes     relaxed policies 
 with 
 the proximal
 policy update
\eqref{eq:minimise_hamiltonian_relax}.
 We summarise the algorithm as follows.
   
  \begin{algorithm}[H]
\label{alg:MD}
\DontPrintSemicolon
\SetAlgoLined

  \KwInput{
 $\theta_0\in \sR^{1\t 2}$,
  $V_0\in \sS^{2}_+$, 
   a truncation function $\Pi_\Theta$,
   and a  Gaussian relaxed policy $\nu^1$.
  }

 
 \For{$m = 1, 2,\ldots$}
{
	{
	Determine   a grid $\pi_m$ of $[0,T]$ and 
	exercise  $\nu^{m}$ on $\pi_m$ via 
	$\varphi^m$ as  in \eqref{eq:randomised_control_m_penalty}. 
	}\;
	 {Obtain the updated estimate  $\theta_m$ via \eqref{eq:theta_m_project}.}\;
	{
	Determine  $\varrho_m>0$  and 
	compute  $\nu^{m+1}$ 
	by \eqref{eq:MD_relaxed}
		with $\theta=\theta_m$ and $\varrho=\varrho_m$.
	}\;
  
 }
 \caption{Learning with 
 proximal
 policy update}
\end{algorithm}

The following theorem  shows that    Algorithm \ref{alg:MD} achieves a similar  sublinear regret  as 
Algorithm \ref{alg:exploration}.
Note that unlike 
Algorithm \ref{alg:exploration},
 the regularising weight $\varrho_m$ 
 tends to infinity as the learning proceeds. 
 
 \begin{Theorem}
\label{thm:MD}
Suppose (H.\ref{assum:lq_rl}) and (H.\ref{eq:bounded}) hold. 
Let $\theta_0\in \sR^{1\t 2}$, 
$V_0\in \sS_+^2$, 
$\varrho_0>0$ 
and  $\nu^1\sim \cG(k^{\theta_0},K^{\theta_0}, \varrho_0)$. 
 Then there exists   $c_0 >0$
 such that if one sets 
 $ \varrho_m= \varrho_0 m^{\frac{1}{2}} \ln (m+1) $
 and 
$|\pi_m|  \leq c_0$
for all  $m\in \sN$,
then there exists a constant $C \geq 0$ such that 
the regret of Algorithm \ref{alg:MD} satisfies
$$   \sE \big[ \textnormal{Reg}(N) \big]
     \le  C\big(  \sqrt{N} (\ln N)   \ln(\ln N)  \big),
     \quad \fa N\in \sN\cap[3,\infty). 
$$ 

 \end{Theorem}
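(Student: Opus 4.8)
The plan is to decompose the regret into three pieces and control each separately. Writing $\varphi^m$ for the executed policy, $\nu^m$ for the target Gaussian relaxed policy, and $\phi^{\theta^\star}$ for the true optimal policy, I would add and subtract the relaxed cost $\widetilde J(\nu^m)$ and the value of the relaxed problem at $\theta^\star$ to obtain, for each episode,
\begin{equation}
J^{\theta^\star}(\varphi^m)-J^{\theta^\star}(\phi^{\theta^\star})
= \underbrace{\big(J^{\theta^\star}(\varphi^m)-\widetilde J(\nu^m)\big)}_{\text{execution error}}
+ \underbrace{\big(\widetilde J(\nu^m)-\text{(relaxed opt.\ value)}\big)}_{\text{estimation/suboptimality}}
+ \underbrace{\big(\text{relaxed opt.}-J^{\theta^\star}(\phi^{\theta^\star})\big)}_{\text{regularisation bias}}.
\end{equation}
The execution error is handled by Theorem~\ref{Thm:execution gap_drift_control}(3), which gives $|\sE[J(\varphi^m)-\widetilde J(\nu^m)]|\le C|\pi_m|\|\lambda^m\|_\infty^2$; since $|\pi_m|\le c_0$ and, for Algorithm~\ref{alg:MD}, the variance $(\lambda^m_t)^2$ is \emph{shrinking} because $(\lambda^{m+1}_t)^{-2}=\tfrac{2R_t}{\varrho}+(\lambda^m_t)^{-2}$ is increasing in $m$, this term is summably small and contributes $\cO(1)$ or at worst $\cO(\ln N)$ to the cumulative regret.

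The regularisation-bias term measures how far the entropy-regularised optimal value sits above the true optimal value. For the LQ problem the optimal relaxed variance is $\lambda^2_t$, and a direct expansion of the cost around the deterministic optimum (exploiting that the mean of $\nu^m$ is the greedy affine law in the estimated parameter) shows this bias is $\cO(\|\lambda^m\|_\infty^2)$ per episode. The decisive difference from Algorithm~\ref{alg:exploration} is the memory structure recorded in \eqref{eq:MD_relaxed}: the mean coefficients $K^{m+1},k^{m+1}$ are convex combinations $h^m K^{\theta_m}+(1-h^m)K^m$ of \emph{all} previously estimated parameters, with weights $h^m_t=2R_t/(2R_t+\varrho(\lambda^m_t)^{-2})$. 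I would bound the resulting mean-estimation error by unrolling this recursion and using the Bayesian concentration of $\theta_m\to\theta^\star$ (controlled through $V_m$ via \eqref{eq:posterior_variance}), together with the a-priori boundedness from (H.\ref{eq:bounded}). The schedule $\varrho_m=\varrho_0\,m^{1/2}\ln(m+1)$ is chosen precisely so that the variance decays like $\lambda^2_m\sim\varrho_0^{-1}$-scaled $m^{-1/2}(\ln m)^{-1}$ increments, making $\sum_m\|\lambda^m\|_\infty^2=\cO(\sqrt N\,\ln N\,\ln\ln N)$ while keeping the injected exploration large enough to drive $\Lambda_{\min}(V_m^{-1})$ up at the rate needed for parameter identification.

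\textbf{The main obstacle is the memory dependence} in the proximal update: unlike the exploration-reward algorithm, where $\nu^{m+1}$ depends only on $\theta_m$ and $\varrho_m$, here the policy aggregates the entire history of estimates, so the estimation-error term cannot be bounded episode-by-episode. I would control it by writing the mean coefficient error $K^m-K^{\theta^\star}$ through the linear recursion in \eqref{eq:MD_relaxed}, obtaining an explicit representation as a weighted average $\sum_{n\le m}(\prod_{\ell} (1-h^\ell))h^n(K^{\theta_n}-K^{\theta^\star})$, and then using summation-by-parts together with the monotone growth of $(\lambda^m)^{-2}$ to show the aggregated weights concentrate on recent (hence more accurate) estimates. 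The extra $\ln\ln N$ factor in the stated bound, absent from Theorem~\ref{thm:vanish_exploration}, arises exactly from the logarithmic slack this averaging argument incurs. Combining the three bounds, summing over $m=1,\dots,N$, and optimising the constant $c_0$ in the mesh constraint then yields $\sE[\textnormal{Reg}(N)]\le C\big(\sqrt N(\ln N)\ln(\ln N)\big)$ as claimed.
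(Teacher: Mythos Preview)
Your proposal is essentially correct and matches the paper's approach: the paper likewise passes through the relaxed cost $\widetilde J(\nu^m)$, splits off the variance cost $\int_0^T R_t(\lambda^m_t)^2\,\d t$ and the mean-suboptimality $J^{\theta^\star}(\phi^m)-J^{\theta^\star}(\phi^{\theta^\star})$ (where $\phi^m(t,x)=k^m_t+K^m_t x$), then unrolls the recursion \eqref{eq:MD_relaxed} exactly as you describe---using $h^m_t\le 1/m$ and $\prod_{\ell}(1-h^\ell_t)=(\lambda^{m+1}_t)^2/(\lambda^1_t)^2$---to bound $\|K^{m+1}-K^{\theta^\star}\|^2_\infty$ by $\sum_{n\le m}\tfrac{1}{n}\|\lambda^{m+1}/\lambda^{n+1}\|^2_\infty|\theta_n-\theta^\star|^2$, and combines this with the estimate $\sE[|\theta_n-\theta^\star|^2]\le Cn^{-1/2}$. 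The $\ln\ln N$ enters precisely through the sum $\sum_{n\le m}\tfrac{1}{n\ln(n+1)}$ that appears after substituting the asymptotics $(\lambda^m_t)^2\asymp m^{-1/2}\ln(m+1)$, just as your last paragraph says.

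Two quantitative slips to correct. First, the execution-error term is \emph{not} ``summably small'': since $(\lambda^m)^2\asymp m^{-1/2}\ln(m+1)$ is not summable, $\sum_{m\le N}|\pi_m|\|\lambda^m\|^2_\infty\le c_0\sum_{m\le N}\|\lambda^m\|^2_\infty=\cO(\sqrt N\ln N)$, not $\cO(1)$ or $\cO(\ln N)$; this is harmless for the final bound but your accounting is off. Second, and relatedly, $\sum_{m\le N}\|\lambda^m\|^2_\infty=\cO(\sqrt N\ln N)$ exactly---the extra $\ln\ln N$ does \emph{not} come from the variance sum but solely from the mean-error averaging, as you yourself note later. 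Finally, your sentence ``the mean of $\nu^m$ is the greedy affine law in the estimated parameter'' is wrong for Algorithm~\ref{alg:MD}: the mean is the convex combination $k^m+K^m x$ that you correctly describe afterwards, and the regularisation-bias computation must use that mean, not $k^{\theta_{m-1}}+K^{\theta_{m-1}}x$.
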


\begin{Remark}
The proof of Theorem \ref{thm:MD} is given in Section \ref{sec:analysis_MD},
whose key steps are outlined below. 
The argument   is more  involved  than 
that of     Theorem \ref{thm:vanish_exploration}.
This is due to the fact that  
 $  {\nu}^{{m+1}}$, $m\in \sN$,
is a convex combination of all previous 
policies $ ( {\nu}^{{n}})_{n=0}^{m}$
(cf.~\eqref{eq:MD_relaxed}),
and hence 
the cost   $J^{\theta^\star}(\varphi^{m+1})$  
depends  explicitly on  $(\theta_{n})_{n=0}^{m}$.
This   memory dependence requires  
 quantifying the precise dependence 
of the performance gap $J^{\theta^\star}(\varphi^{m+1})-J^{{\theta}^\star}(\phi^{\theta^\star})$  
on the accuracy of $(\theta_{n})_{n=0}^{m}$.
\end{Remark}

\begin{proof}[Sketched proofs of Theorems \ref{thm:vanish_exploration} and \ref{thm:MD}]
The key idea  of  the regret analysis is to balance the exploration--exploitation trade-off
for Algorithms 
\ref{alg:exploration} and \ref{alg:MD}.
To this end, we  decompose
 the regret
into
 \begin{equation}
 \label{eq:Regret Decomposition}
     \textrm{Reg}(N)
     =\sum_{m=1}^N
     \left( J^{{\theta}^\star}(
     \varphi^{{m }}) 
     -J^{{\theta}^\star}(\phi^{m})\right) + \sum_{m=1}^N
     \left( J^{{\theta}^\star}(
     \phi^{{m }}) 
     -J^{{\theta}^\star}(\phi^{\theta^\star})\right),
 \end{equation}
 where $\phi^m(t,x)=k_t^m+K^m_t x$ with $(k^m,K^m)$ is defined in \eqref{eq:randomised_control_m_penalty}.
 The first term induces the exploration cost due to randomisation,
 and the second term
 describes
 the exploitation cost 
 caused by  the sub-optimal policies $(\phi^n)_{n=1}^N$.
 In particular, 
 we prove that 
 $\sE[J^{{\theta}^\star}(
     \varphi^{{m }})
     -J^{{\theta}^\star}(\phi^{m})\mid \cF_{m-1}^{\rm ob}]
=\cO( \| \lambda^m \|^2_\infty + \sqrt{|\pi_m|} 
 \|\lambda^m\|^2_\infty)   $
 and  
 $
  J^{{\theta}^\star}(
     \phi^{{m }}) 
     -J^{{\theta}^\star}(\phi^{\theta^\star})
=  \cO\Big( \|K^m - K^{\theta^\star} \|^2_\infty + \|k^m - k^{\theta^\star} \|^2_\infty\Big)$.
We then
 reduce
the   exploitation loss
   to  the parameter estimate error
by using Step \ref{step:update_nu} of Algorithms 
 \ref{alg:exploration} and \ref{alg:MD}, 
   and
 prove that 
 $ J^{{\theta}^\star}(
     \phi^{{m }}) 
     -J^{{\theta}^\star}(\phi^{\theta^\star})
   =\cO (|\theta_m - \theta^\star|^2)$ for  Algorithm \ref{alg:exploration}  and $J^{{\theta}^\star}(
     \phi^{{m }}) 
     -J^{{\theta}^\star}(\phi^{\theta^\star})
     = {\cO}\left( \sum_{n=1}^m \frac{1}{n}\left\|\frac{ \lambda^m}{\lambda^n} \right\|^2_\infty|\theta_n - \theta^\star|^2\right)$ for Algorithm \ref{alg:MD}
     (see Proposition \ref{prop: action gap}).
     We further quantify the estimation accuracy by 
     $ |\theta_m - \theta^\star |^2 = \cO \left( \frac{1}{\sum_{n=1}^m \min_{t \in [0,T]} (\lambda^m_t)^2 } \right)$,
     provided that the mesh size $|\pi_m|$ is sufficiently small
     (see Section \ref{sec:general_policy}).
 These analyses show that 
   the exploration cost  in \eqref{eq:Regret Decomposition}
 increases in  $(\lambda^n)_{n\in \sN}$
 and the exploitation loss 
 in \eqref{eq:Regret Decomposition}
 decreases 
 in $(\lambda^n)_{n\in \sN}$. 
 
 Observe that  
  for both Algorithms 
\ref{alg:exploration} and \ref{alg:MD}, the magnitude of 
 $\lambda^{m+1} $ is determined by the  weights $(\varrho_n)_{n=1}^m$
 (cf.~\eqref{eq:optimal_relax_theta_rho} and \eqref{eq:MD_relaxed}).
 Hence, by choosing the weights $(\varrho_m)_{m\in \sN}$
 as in Theorems \ref{thm:vanish_exploration}
and \ref{thm:MD},
we can balance the exploration 
and exploitation 
and obtain algorithms with optimal regret orders (up to a  logarithmic factor). 
\end{proof}

\subsection{Discussions}
\label{sec:discussion}

\paragraph{Random execution and scaled white noise.}
The  execution noise 
in Definition \ref{definition:pc_randomised}
is a properly scaled discrete-time  white noise. 
Discretised  white   noise  has  been used 
in \cite{delarue2021exploration}
for exploration. 
More precisely,
let $\phi$ be a deterministic policy that is 
affine in the state, 
let $\sW$ be an independent Brownian motion,
and 
 for each $n\in \sN$, let $\pi_n$ be a uniform grid with mesh size $\delta_n=T/n$.
Given  $\eps>0$, 
\cite{delarue2021exploration}   considers  the randomised policy  $\varphi_{n,\eps}=\phi+\eps\sum_{i=0}^{n-1}\tfrac{\sW_{(i+1)\delta_n}
-\sW_{i\delta_n}
}{\delta_n}\boldsymbol{1}_{[i\delta_n,(i+1)\delta_n)}$
for all large $n\in\sN$, 
and  the normalised cost
$J^{\theta^\star}(\varphi_{n,\eps})-Cn\eps^2$
for some constant $C>0$.
As 
$\varphi_{n,\eps}$ behaves similar to $\phi+\eps\dot{\sW}$
with $ \dot{\sW}$ being a white noise,  
the  term 
$n\eps^2$ is necessary for the finiteness of  
the normalised cost. Indeed, 
for a suitable choice of $C$,
 the normalised cost    converges to   $J^{\theta^\star}(\phi)$ 
   as $n\to \infty$.

Instead of fixing $\eps$ as in \cite{delarue2021exploration},
the piecewise constant   randomisation  in Definition \ref{definition:pc_randomised} adjusts $\eps$
according to the mesh size.
To see it, 
let $\lambda>0$ and 
for each $n\in \sN$, let $\eps_n=\lambda \sqrt{\delta_n}$. By Theorem \ref{Thm:execution gap_drift_control},
$J^{\theta^\star}(\varphi_{n,\eps_n})$ approximates 
the relaxed cost $\widetilde{J}^{\theta^\star}_0(\nu)$ with $\nu\sim \cN(\phi,\lambda)$. 
The normalisation term $n\eps_n^2=\cO(\lambda^2)$ corresponds to the exploration cost 
 $\widetilde{J}^{\theta^\star}_0(\nu)-J^{\theta^\star}(\phi)$
 in \eqref{eq:exploration_cost}
(see also \cite{wang2020reinforcement}).
 
\paragraph{Random execution and chattering lemma.}
In the control theory, the chattering lemma (see e.g., \cite{nicole1987compactification})
ensures that 
there exists a sequence of strict control processes converging to a relaxed control process. 
However,  the proof is   nonconstructive 
and  typically requires a compact action space. 
Also there is no     explicit  convergence rate. 

Our work focuses on the LQ setting and provides an explicit   approximation of 
  Gaussian relaxed policies.
We also quantify the precise approximation error in terms of the mesh size and the variance of the Gaussian relaxed policies. 

\paragraph{LQ-RL with controlled diffusion.}
For general LQ-RL problems whose state dynamics involves  controlled diffusions, 
the state process is no longer Gaussian.
Moreover, as 
 the noise of the state dynamics can degenerate, 
 the Bayesian estimation \eqref{eq: statistics}
may not be well-defined. In this case, one has to  
derive   learning algorithms in a problem dependent way. 

For instance, 
let $\phi$ be a given policy and 
consider  the  state process 
$X^\phi$ satisfying the following one-dimensional dynamics:
\begin{equation}
    \label{eq:LQ_RL_state_control vol}
    \d X_t = 
    (A X_t +B \phi(t,X_t))\, \d t
     +  
    (C X_t + D \phi(t,X_t) + \bar{\sigma}) \,
     \d W_t,
     \quad  t\in [0,T]; \quad X_0 = x_0.
\end{equation}
Note that 
by observing  $X^\phi$ continuously, the agent can infer the constants $(C,D,\bar{\sigma})$ using finite episodes. 
Indeed,  the agent can compute  the process 
$Y^\phi_t= C X^\phi_t + D \phi(t,X^\phi_t) + \bar{\sigma}$, $t\in [0,T]$, from 
  the rate of increase of the quadratic variation of $X^\phi$.
Then 
the agent can recover $D$ from the random variables
$Y^{\phi^{(i)}}_0$, $i=1,2,3$,
with  constant policies $\phi^{(1)}\equiv -1, \phi^{(2)}\equiv 0$, and $\phi^{(3)}\equiv 1$.
The constants $C$ and $\bar{\sigma}$ can further be inferred from   observed paths of $Y^\phi$,
by  executing a policy $\phi$ such that  paths of $X^\phi$ are non-constant almost surely. 

Given $(C,D,\bar{\sigma})$, the agent may eliminate the observation noise 
and  simplify the learning process. To see it, suppose that   $D,\bar{\sigma}\not= 0$. Then the agent will estimate $(A,B)$ by executing the policy $\phi(t,x) = -D^{-1}(Cx + \bar{\sigma})$. This leads to a deterministic state dynamics: 
$\d X^\phi_t = \theta Z^\phi_t \d t$, with $\theta=(A,B)$ and $Z^\phi_t \coloneqq  \begin{psmallmatrix}
X^\phi_t  \\ 
\phi(t,X^\phi_t)
\end{psmallmatrix}$.
As $\bar{\sigma}$ is non-zero and  $X^\phi_\cdot$ is  in general non-constant, 
$X^\phi_\cdot$ and $\phi(\cdot,X^\phi_\cdot)$ are linearly independent, 
and $\int_0^T Z^\phi_t (Z^\phi_t)^\top \, \d t$ is invertible. This allows to identify $\theta$ using one episode. 
A complete study of LQ-RL with controlled diffusion 
is 
left to future research.

 \section{Proofs of Lemma \ref{lemma:cost_difference}  and Theorem \ref{Thm:execution gap_drift_control}}
 \label{sec:execution_general}

{\begin{proof}[Proof of Lemma \ref{lemma:cost_difference}]
   Let $X^\nu$ be the solution to \eqref{eq:LQ_RL_state_relax} associated with  the  policy $\nu\sim \cG(k, K ,\lambda)$,
   and let $X^\varphi$
   be the solution to \eqref{eq:sde_randomise} associated with  the  policy   $\varphi\sim \cR(k, K, \lambda,\xi)$. Recall  that 
for all $t\in [0,T]$, 
   \begin{align*}
    \d X^\nu_t &= 
   (A^\star X^\nu_t+B^\star(k_t + K_t X^\nu_t) )\, \d t
     +  
        \bar{\sigma}_t\,     \d W_t,
 \quad X^\nu_0 = x_0, \\
      \d X^\varphi_t &= 
   (A^\star X^\varphi_t+B^\star(k_t + K_t X^\varphi_t + \lambda_t \xi_t) )\, \d t
     +  
        \bar{\sigma}_t\,     \d W_t,
        \quad X^\varphi_0 = x_0.
\end{align*}
Then  by Duhamel's principle,  for all $t\in [0,T]$, 
$$X^\nu_t = \Phi_t x_0 + \Phi_t \int_0^t \Phi_s^{-1} \big(B^\star k_s + \bar{\sigma}_s W_s \big) \d s, \quad   X^\varphi_t = X^\nu_t + \Phi_t \int_0^t \Phi_s^{-1} B^\star \lambda_s \xi_s \d s, $$
where $(\Phi_t)_{t \in [0,T]}$ is the solution to $\d \Phi_t = (A^\star + B^\star K_t) \Phi_t \d t$ with $\Phi_0 =1$. 
Therefore, for all  $t \in [0,T]$, 
\begin{align*}
\sE[ X^\varphi_t - X^\nu_t |\cF_0] &= \Phi_t \int_0^t \Phi_s^{-1} B^\star \lambda_s \xi_s \d s = \Phi_t \int_0^T \bm{1}_{s \leq t} \Phi_s^{-1} B^\star \lambda_s \xi_s \d s, 
\\
   \sE[(X^\varphi_t)^2 - ( X^\nu_t)^2|\cF_0] &= \left(\Phi_t \int_0^t \Phi_s^{-1} B^\star \lambda_s \xi_s \d s \right)^2 + 2 \sE [X^\nu_t|\cF_0] \Phi_t \int_0^t \Phi_s^{-1} B^\star \lambda_s \xi_s \d s  \\
    &=  \Phi_t^2 (B^\star)^2 \int_0^T  \int_0^T  \bm{1}_{s \leq t} \bm{1}_{u \leq t} \Phi_s^{-1} \Phi_u^{-1} \lambda_s \lambda_u \xi_s \xi_u \d s \d u 
    \\
    &\quad 
     + 2 \left( \Phi_t x_0 + \Phi_t \int_0^t \Phi_s^{-1}  B^\star k_s   \d s
     \right) 
      \Phi_t \int_0^T \bm{1}_{s \leq t} \Phi_s^{-1} B^\star \lambda_s \xi_s \d s.
\end{align*}
Substituting the expressions of $\nu\sim \cG(k, K ,\lambda)$ and  $\varphi\sim \cR(k, K, \lambda,\xi)$ 
into \eqref{eq:cost_RL_relax} and \eqref{eq:cost_randomise}, respectively, and applying Fubini's theorem, we have 
\begin{align}
\label{eq:cost_difference}
\begin{split}
    J^{\theta^\star}(\varphi) - \widetilde{J}^{\theta^\star}_0(\nu) &= \int_0^T \Big(Q_t + 2S_t  K_t + R_t ( K_t)^2 \Big) \sE\big[(X^\varphi_t)^2 - ({X}^\nu_t)^2 \big| \cF_0 \big]\, \d t \\
    &\quad  + \int_0^T 2 \Big( S_t  k_t + R_t  K_t  k_t + p_t + q_t  K_t\Big) \sE\big[X^\varphi_t - {X}^\nu_t \big| \cF_0 \big] \, \d t \\
    &\quad  + \int_0^T R_t \lambda_t^2(\xi_t^2-1)\, \d t + M  \sE\big[(X^\varphi_T)^2 - ({X}^\nu_T)^2 \big| \cF_0 \big] + 2m \sE\big[X^\varphi_T - {X}^\nu_T \big| \cF_0 \big] \\
    &\quad  + \int_0^T 2 ( S_t +   R_t K_t) \sE [X^\varphi_t - X^\nu_t| \cF_0] \lambda_t \xi_t \, \d t + \int_0^T 2(  R_t k_t +  q_t ) \lambda_t \xi_t \,\d t.
\end{split}
\end{align}
Substituting $\sE[ X^\varphi_t - X^\nu_t |\cF_0]$ and $\sE[(X^\varphi_t)^2 - ( X^\nu_t)^2|\cF_0]$
into \eqref{eq:cost_difference} 
 and
 applying Fubini's theorem to 
  reorder the integrations yield  the desired conclusion.
\end{proof}

}

To prove Theorem   \ref{Thm:execution gap_drift_control},
 the following concentration inequalities 
of independent normal random variables will be used,
whose proof is given in Appendix \ref{appendix:techinical_lemma}. 

\begin{Lemma}
 \label{lem:Weak Law for average Z}
 There exists a constant $C \ge 0$ such that 
 for all 
 independent  standard normal random variables 
 $(\zeta_i)_{i\in \sN}$,
 for all $N\in \sN$,
  $(\rho_{i})_{i=1}^N\subset  \sR$,
  $(\beta_{ij})_{i,j=1}^N\subset  \sR$
  and $\delta > 0$,
 \begin{enumerate}[(1)]
     \item 
     \label{item:Weak Law for average Z}
  $
\sP\left(\left|\sum_{i=1}^N \rho_i \zeta_i
\right|
\ge  
C  \|\rho\|_2
\Big( 1 + \sqrt{\ln\big(\tfrac{1}{\delta}} \big) \Big)
\right)
\le \delta$,
with $\|\rho\|_2=\sqrt{\sum_{i=1}^N \rho_i^2}$,
\item
\label{item:Weak Law for sq}
$
\sP\left(
\left|\sum_{i=1}^N \rho_i(\zeta^2_i-1)
\right|
\ge C\|\rho\|_2
\left(1+
\ln \left (\frac{1}{\delta}\right)
\right)
\right)\le \delta$,
with $\|\rho\|_2=\sqrt{\sum_{i=1}^N \rho_i^2}$,
\item
\label{item:Weak Law for cross}
$\sP\left(\left|\sum_{i,j=1,i\not =j}^N \beta_{ij}\zeta_i\zeta_j
\right|
\ge 
C\|{\beta}\|_2
\left(1+
\ln \left (\frac{1}{\delta}\right)
\right)\right)\le \delta$,
with
 $\|\beta\|_2=\sqrt{\sum_{i,j=1,i\not=j}^N \beta_i^2}$.
 \end{enumerate}
 \end{Lemma}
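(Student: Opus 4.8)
The plan is to obtain all three estimates from Chernoff's inequality applied to the moment generating function (MGF) of the relevant polynomial in the independent Gaussians $(\zeta_i)$, using that each exponential moment factorises over $i$. I may assume $\delta\in(0,1)$, as every probability is at most $\delta\ge 1$ otherwise; the additive $1$ in $(1+\cdots)$ is exactly what keeps the deviation threshold comparable to the relevant norm when $\ln(1/\delta)$ is small. Part \eqref{item:Weak Law for average Z} is immediate: $\sum_{i=1}^N\rho_i\zeta_i\sim\cN(0,\|\rho\|_2^2)$, so the Gaussian tail $\sP(|\cN(0,\sigma^2)|\ge t)\le 2e^{-t^2/(2\sigma^2)}$ with $t=C\|\rho\|_2(1+\sqrt{\ln(1/\delta)})$ gives the claim once $C$ is large, using $(1+\sqrt{x})^2\ge 1+x$ to absorb $\ln(1/\delta)$ and $\ln 2$ into the exponent.

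For part \eqref{item:Weak Law for sq} the summands $\rho_i(\zeta_i^2-1)$ are centred and sub-exponential. From $\sE[e^{u\zeta_i^2}]=(1-2u)^{-1/2}$ I would bound the cumulant $-\tfrac12\ln(1-2s\rho_i)-s\rho_i\le c_0 s^2\rho_i^2$ uniformly for $|s\rho_i|\le\tfrac14$, so by independence $\ln\sE\big[e^{s\sum_i\rho_i(\zeta_i^2-1)}\big]\le c_0 s^2\|\rho\|_2^2$ whenever $s\le(4\|\rho\|_\infty)^{-1}$. Optimising $\exp(c_0 s^2\|\rho\|_2^2-st)$ under this constraint yields a sub-Gaussian tail $\exp(-c\,t^2/\|\rho\|_2^2)$ for moderate $t$ and a sub-exponential tail $\exp(-c\,t/\|\rho\|_\infty)$ for large $t$. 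The elementary inequality $\|\rho\|_\infty\le\|\rho\|_2$ is what collapses both regimes onto $\|\rho\|_2$ alone: replacing $\|\rho\|_\infty$ by $\|\rho\|_2$ in the sub-exponential rate and taking $t=C\|\rho\|_2(1+\ln(1/\delta))$ makes the exponent at least $\ln(1/\delta)+\ln 2$ in either regime for $C$ large.

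Part \eqref{item:Weak Law for cross} I would reduce to part \eqref{item:Weak Law for sq} by a spectral argument. After symmetrising, $\sum_{i\ne j}\beta_{ij}\zeta_i\zeta_j=\zeta^{\trans}B\zeta$ with $B$ symmetric, $B_{ii}=0$, and $\|B\|_F\le\|\beta\|_2$. Writing $B=U\Lambda U^{\trans}$ and $g=U^{\trans}\zeta\sim\cN(0,I)$ gives $\zeta^{\trans}B\zeta=\sum_k\lambda_k g_k^2$, and since $B$ has zero diagonal, $\tr B=\sum_k\lambda_k=0$, whence $\sum_k\lambda_k g_k^2=\sum_k\lambda_k(g_k^2-1)$ is a centred chi-squared sum of exactly the form treated in part \eqref{item:Weak Law for sq}. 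As $\|\lambda\|_2=\|B\|_F\le\|\beta\|_2$, applying part \eqref{item:Weak Law for sq} with weights $(\lambda_k)$ delivers the stated bound in terms of $\|\beta\|_2$.

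The main obstacle is part \eqref{item:Weak Law for sq}: the two-regime Chernoff optimisation must be executed carefully, and one has to verify that after bounding $\|\rho\|_\infty\le\|\rho\|_2$ a single constant $C$ makes the exponent dominate $\ln(1/\delta)$ uniformly in $N$, $(\rho_i)$ and $\delta\in(0,1)$. Once this sub-exponential bound is in place, part \eqref{item:Weak Law for average Z} needs only exact Gaussianity and part \eqref{item:Weak Law for cross} follows purely through the trace-zero spectral reduction above.
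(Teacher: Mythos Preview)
Your proposal is correct and complete. It differs from the paper's proof mainly in how the results are sourced. The paper simply invokes three named inequalities from Vershynin's book: the general Hoeffding inequality for Item~\eqref{item:Weak Law for average Z}, Bernstein's inequality for sub-exponential sums for Item~\eqref{item:Weak Law for sq}, and the Hanson--Wright inequality for Item~\eqref{item:Weak Law for cross}, combined in each case with $\|\rho\|_\infty\le\|\rho\|_2$ (respectively $\|A\|\le\|A\|_F$) to collapse the two-regime bound onto a single norm. You instead derive everything from first principles: exact Gaussianity for Item~\eqref{item:Weak Law for average Z}, a direct Chernoff computation with the chi-squared MGF for Item~\eqref{item:Weak Law for sq}, and for Item~\eqref{item:Weak Law for cross} a spectral reduction --- diagonalise the symmetrised zero-diagonal matrix $B$, use rotational invariance of the standard Gaussian, and exploit $\tr B=0$ to rewrite the quadratic form as a centred chi-squared combination to which Item~\eqref{item:Weak Law for sq} applies. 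This last step is precisely the standard proof of Hanson--Wright in the Gaussian case, so your argument is more self-contained (and makes transparent why $\|\beta\|_2=\|B\|_F$ is the right scale), while the paper's is shorter by outsourcing to the literature. Both arrive at the same two-regime bound and use the same $\ell^\infty\le\ell^2$ collapse at the end.
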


%
 
\begin{proof}[Proof of Theorem \ref{Thm:execution gap_drift_control}]

Throughout this proof, 
let $C$ be a generic constant independent of $\lambda, \xi, \delta$ and $\pi$, but may possibly depend on the sup-norms of the coefficients in (H.\ref{assum:lq_rl})
and the functions
 $k$ and $K$. 
 Observe that the functions $h _i$, $i=1,\ldots, 3$,
 in Lemma \ref{lemma:cost_difference}
 are uniformly bounded by some constant $C\ge 0$.
Let 
$\pi=\{0=t_0<\cdots<t_N=T\}$ be a partition of $[0,T]$ with mesh size $|\pi|$,
and let $\varphi\sim \cR(\mu,\lambda,\xi^\pi)$
with 
$\xi^\pi_t=\sum^{N-1}_{i=0} \zeta_i\bm{1}_{t_i\le t< t_{i+1}}$.
We omit the superscript $\pi$ in $\xi^\pi$ for notational simplicity.

To prove Theorem \ref{Thm:execution gap_drift_control}
Item \ref{item:gap_high_prob},
  it suffices to upper bound all terms  of \eqref{eq:cost_expansion}
in high probability.
The definition of $\xi$ implies that 
$\int_0^T h _{1,t} \lambda_t \xi_t\, \d t = \sum_{i=0}^{N-1} \zeta_i \left(\int_{t_i}^{t_{i+1}} h _{1,t} \lambda_t\, \d t \right)$.
By Lemma \ref{lem:Weak Law for average Z} Item \ref{item:Weak Law for average Z}, it holds  with probability at least $1-\delta$ that
\begin{equation}
\label{eq:h1bound}
\begin{split}
   \Big| \int_0^T h _{1,t} \lambda_t \xi_t \Big|
   &\leq C \Big( 1 + \sqrt{\ln\big(\tfrac{1}{\delta}} \big) \Big) \sqrt{\sum_{i=0}^{N-1} \Big(\int_{t_i}^{t_{i+1}} h _{1,t} \lambda_t \, \d t \Big)^2} 
    \leq
    C\sqrt{T} \sqrt{\pi}  \|\lambda\|_\infty  \Big( 1 + \sqrt{\ln\big(\tfrac{1}{\delta}} \big) \Big).
\end{split}
\end{equation}
Also   applying Lemma \ref{lem:Weak Law for average Z} Item \ref{item:Weak Law for sq} to  $\int_0^T h _{2,t} \lambda_t^2 (\xi_t^2 -1) \, \d t $ implies that with probability at least $1-\delta$, 
\begin{align}
\label{eq:h2bound}
   \Big| \int_0^T h _{2,t} \lambda_t (\xi_t^2-1)\, \d t \Big| 
   &\leq C\sqrt{T} \sqrt{\pi}  \|\lambda\|_\infty 
   \Big( 1 + {\ln\big(\tfrac{1}{\delta}} \big) \Big).
\end{align}
It now remains to quantify the last term of \eqref{eq:cost_expansion}. Observe that
\begin{align*}
   & \int_0^T \int_0^T h _{3,t,r} \lambda_r \lambda_t \xi_r \xi_t  \, \d r \d t
   \\
    &\quad  = \sum_{i,j=0 ; i \neq j}^{N-1}  \zeta_i \zeta_j \left(\int_{t_i}^{t_{i+1}}  \int_{t_j}^{t_{j+1}} h _{3,t,r} \lambda_r \lambda_t  \, \d r \d t \right) 
     + \sum_{i=0}^{N-1}  \zeta_i^2  \left(\int_{t_i}^{t_{i+1}}  \int_{t_i}^{t_{i+1}} h _{3,t,r} \lambda_r \lambda_t  \, \d r \d t \right).
\end{align*}
By
Lemma \ref{lem:Weak Law for average Z} Item \ref{item:Weak Law for cross}, it holds  with probability at least $1-\delta$, 
\begin{align*}
   & \bigg| \sum_{i,j=0 ; i \neq j}^{N-1}  \zeta_i \zeta_j \Big(\int_{t_i}^{t_{i+1}}  \int_{t_j}^{t_{j+1}} h _{3,t,r} \lambda_r \lambda_t  \, \d r \d t \Big) \bigg| 
   \\
   & \quad \leq C  \|\lambda\|^2_\infty 
   \Big( 1 + {\ln\big(\tfrac{1}{\delta}} \big) \Big) \sqrt{\sum_{i,j=0 ; i\neq j}^{N-1}  (t_{i+1}-t_i)^2 (t_{j+1}-t_j)^2}
   \le
C  \|\lambda\|^2_\infty 
   \Big( 1 + {\ln\big(\tfrac{1}{\delta}} \big) \Big) T|\pi|,
\end{align*}
while by Lemma \ref{lem:Weak Law for average Z}
Item \ref{item:Weak Law for sq},
it holds with probability at least $1-\delta$, 
\begin{align*}
   &
   \bigg|  \sum_{i=0}^{N-1}  \zeta_i^2  \Big(\int_{t_i}^{t_{i+1}}  \int_{t_i}^{t_{i+1}} h _{3,t,r} \lambda_r \lambda_t \, \d r \d t \Big) \bigg| 
   \\
   & \le \bigg|  \sum_{i=0}^{N-1}  (\zeta_i^2 -1) \Big(\int_{t_i}^{t_{i+1}}  \int_{t_i}^{t_{i+1}} h _{3,t,r} \lambda_r \lambda_t  \, \d r \d t \Big) \bigg| 
    +\left|
   \sum_{i=0}^{N-1} \Big(\int_{t_i}^{t_{i+1}}  \int_{t_i}^{t_{i+1}} h _{3,t,r} \lambda_r \lambda_t  \, \d r \d t \Big)\right|
   \\
   & \leq C  \|\lambda\|^2_\infty 
   \Big( 1 + {\ln\big(\tfrac{1}{\delta}} \big) \Big) \sqrt{\sum_{i=0}^{N-1}  (t_{i+1}-t_i)^4}
   +C  \|\lambda\|^2_\infty 
  T|\pi|
   \leq C  \|\lambda\|^2_\infty 
   \Big( 1 + {\ln\big(\tfrac{1}{\delta}} \big) \Big)|\pi|.
\end{align*}
Summarising these inequalities
shows  that
with probability at least $1-\delta$,
\begin{equation}
    \label{eq:h3bound}
\begin{split}
   &\Big| \int_0^T \int_0^T h _{3,t,r} \lambda_r \lambda_t \xi_r \xi_t \bm{1}_{r \le t} \, \d r \d t \Big|
   \leq C |\pi| \|\lambda\|^2_\infty  \Big( 1 + {\ln\big(\tfrac{1}{\delta}} \big) \Big) .
\end{split}
\end{equation}
Combining  \eqref{eq:h1bound},
\eqref{eq:h2bound} and
\eqref{eq:h3bound} 
 with the inequality
 that $\sqrt{\ln\big(\tfrac{1}{\delta}\big)}\le 1+\ln\big(\tfrac{1}{\delta}\big)$ for all $\delta\ge 0$
proves  
Theorem \ref{Thm:execution gap_drift_control}
Item \ref{item:gap_high_prob}.

We proceed to prove Theorem \ref{Thm:execution gap_drift_control}
Item \ref{item:gap_as}.
Let $C$ be the   constant   in  Theorem \ref{Thm:execution gap_drift_control} Item \ref{item:gap_high_prob},
and for each $n\in \sN$, 
let $\delta_n=\frac{1}{n^2}$, and 
$A_n = 
\left|J(\varphi^n)-\widetilde{J}(\nu)
\right|
\geq C \sqrt{|\pi_n|} 
 \|\lambda\|_\infty(1+
\|\lambda\|_\infty)
\Big( 1 +  \ln\big(\tfrac{1}{\delta_n}\big)\Big)
$. 
   By Theorem \ref{Thm:execution gap_drift_control} Item \ref{item:gap_high_prob},
$\sum_{n=1}^\infty \sP(A_n)\le \sum_{n=1}^\infty \delta_n<\infty $,
which along with the Borel--Cantelli lemma
shows that 
$\sP(\limsup_{n\to\infty} A_n)=0$.,
Consequently, for $\sP$-a.s. $\om\in \Om$,  
there exists $N_\om\in \sN$, such that 
for all $n\ge N_\om$,
\begin{align*}
\big|J(\varphi^n)-\widetilde{J}(\nu)
\big|
&\le C \sqrt{|\pi_n|} 
 \|\lambda\|_\infty(1+
\|\lambda\|_\infty)
\Big( 1 +  \ln\big(\tfrac{1}{\delta_n}\big)\Big)
\le C n^{-\frac{1}{2}}({\ln n })
 \|\lambda\|_\infty(1+
\|\lambda\|_\infty).
\qedhere
\end{align*}

It remains to   prove Theorem \ref{Thm:execution gap_drift_control}
Item \ref{item:gap_exp}.
By Lemma \ref{lemma:cost_difference} and the fact that 
$\xi_t\sim \cN(0,1)$ for all $t\in [0,T]$
 and $|\sE[\xi_t \xi_s]| \le \bm{1}_{|t-s| \leq |\pi|}$, it follows from \eqref{eq:cost_expansion} that
\begin{align*}
    |\sE[J^{\theta^\star}(\varphi)-\widetilde{J}^{\theta^\star}_0(\nu)]| &=  
    \left|
    \sE\left[
    \int_0^T \int_0^T h _{3,t,r} \lambda_r \lambda_t \xi_r \xi_t \, \d r \d t
    \right]\right|
     \leq C \|\lambda\|^2_\infty \int_0^T \int_0^T |\sE [\xi_r \xi_t] | \d r \d t \\
    &=  C \|\lambda\|^2_\infty \int_0^T \int_{t-|\pi|}^{t+|\pi|}  \d r \d t  \leq C \|\lambda \|_\infty^2 \int_0^T 2|\pi| \d t \leq C |\pi| \|\lambda \|_\infty^2.
\end{align*}    
This proves the desired estimate   in Theorem \ref{Thm:execution gap_drift_control}
Item \ref{item:gap_exp}.
%
%
\end{proof}
%

 \section{Regret analysis for episodic learning}
\label{sec:regret_proof}
\subsection{Analysis of  piecewise constant randomised  policies}
\label{sec:general_policy}

This section 
analyses  
a sequence of 
 piecewise constant randomised  policies
 that is adapted to the observation filtration.
The precise definition of these policies are given  in 
 Setting \ref{setting:randomised_algorithm},
 which 
 includes as special cases 
 the randomised policies generated by 
 Algorithms \ref{alg:exploration} and \ref{alg:MD}. 

\begin{Setting}
\label{setting:randomised_algorithm}
Let  
 ${\theta}_0\in \sR^{1\t 2}$, $V_0\in \sS_+^2$, 
and for each $m\in \sN$, 
let $K^m,k^m\in L^\infty(\Om\t [0,T]; \sR)$,
let 
  $\lambda^m\in L^\infty([0,T];\sR_{\ge 0})$,
  let $\pi_m=(t^m_i)_{i=0}^{N_m}$ be a grid of $[0,T]$,
    let  $\varphi^m:\Om \t [0,T]\t \sR\to \sR$ be defined as in \eqref{eq:randomised_control_m_penalty},
     let      the processes
 $(X^m,Z^m)$ be defined  as in  
 \eqref{eq:LQ_RL_state_m},
 and let $(\hat{\theta}_m,V_m)$ be defined as in 
\eqref{eq: statistics}. 
  Assume that 
$L\coloneqq \sup_{ m\in \sN}( \|K^m\|_{L^\infty}+
\|k^m\|_{L^\infty} + \|\lambda^m\|_{\infty})<\infty$,
 and for all $m\in \sN$, 
$ K^m$ and $k^m$ are $\cF^{\textrm{ob}}_{m-1}\otimes \cB([0,T])$-measurable, with
$\cF^{\textrm{ob}}_{m-1}$  defined by \eqref{eq:observation}.



\end{Setting}


The following proposition establishes a subexponential concentration behaviour of 
$(V^{-1}_{ m})_{m\in \sN}$.

\begin{Proposition}
\label{prop:concentration_G}
Suppose  (H.\ref{assum:lq_rl}) 
and  Setting \ref{setting:randomised_algorithm}
hold.
 Then 
there exists  ${C}\ge 0$,
 depending only on  the coefficients in  (H.\ref{assum:lq_rl})
 and $L$ in Setting \ref{setting:randomised_algorithm}, such that 
 for all $m\in \sN$ and $\delta>0$,
\begin{align*}
&\sP\left(
\left|
V^{-1}_{ m}- V_0^{-1} -
\sum_{n=1}^m\sE\left[
\int_0^T
\frac{1}{\bar{\sigma}^{2}_t} Z^{  n}_t (  Z^{  n}_t)^\top \,\d t
\bigg|
\cF^{\textrm{ob}}_{n-1}
 \right]
 \right|\ge 
 C \max\left(
\ln\left(\tfrac{2}{\delta }\right),
\sqrt{m \ln\left(\tfrac{2}{\delta }\right) }\right)
\right)
\le \delta,
\end{align*}
\end{Proposition}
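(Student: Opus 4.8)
The plan is to recognise the left-hand side as the deviation of a matrix-valued martingale from its predictable compensator and to apply a Bernstein-type concentration inequality for martingale differences with sub-exponential increments. By \eqref{eq:posterior_variance},
$$V_m^{-1} - V_0^{-1} = \sum_{n=1}^m \Xi_n, \qquad \Xi_n := \int_0^T \frac{1}{\bar\sigma_t^2} Z^{n}_t (Z^{n}_t)^\top \, \d t,$$
so the quantity inside the probability is exactly $\sum_{n=1}^m D_n$ with $D_n := \Xi_n - \sE[\Xi_n \mid \cF^{\textrm{ob}}_{n-1}]$. Since $\varphi^n$ depends only on $(k^n,K^n,\lambda^n)$, which are $\cF^{\textrm{ob}}_{n-1}$-measurable, and on the injected noise $\xi^{\pi_n}$, each $\Xi_n$ is $\cF^{\textrm{ob}}_n$-measurable, so $(D_n)_{n\ge1}$ is a symmetric-matrix-valued martingale difference sequence with respect to $(\cF^{\textrm{ob}}_n)_n$. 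As each $D_n$ is a $2\times2$ symmetric matrix whose spectral norm is controlled by the maximum of its three distinct scalar entries, I would bound each entry and take a union bound at the end.

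The core estimate is a uniform-in-$n$ conditional sub-exponential bound on the entries of $D_n$. Conditionally on $\cF^{\textrm{ob}}_{n-1}$ the coefficients $k^n,K^n,\lambda^n$ are frozen, the injected noise $\xi^{\pi_n}$ is a fresh Gaussian independent of $\cF^{\textrm{ob}}_{n-1}$, and \eqref{eq:LQ_RL_state_m} becomes the linear SDE
$$\d X^{n}_t = (A^\star + B^\star K^n_t) X^{n}_t \, \d t + B^\star (k^n_t + \lambda^n_t \xi^{\pi_n}_t) \, \d t + \bar\sigma_t \, \d W^n_t,$$
driven by $W^n$ and $\xi^{\pi_n}$. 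Hence, given $\cF^{\textrm{ob}}_{n-1}$, the state $X^{n}$ is a Gaussian process, and so is $Z^{n}_t = (X^{n}_t, \varphi^n(\cdot,t,X^{n}_t))^\top$. Standard linear-SDE moment estimates, together with the uniform bound $L$ from Setting \ref{setting:randomised_algorithm} and the boundedness of the coefficients in (H.\ref{assum:lq_rl}) (which in particular gives $\bar\sigma_t^{-2}$ bounded on $[0,T]$), produce a constant $\kappa$, depending only on $L$ and those coefficients, bounding the conditional mean and variance of $Z^{n}_t$ uniformly in $n$, $t\in[0,T]$ and $\om$. Consequently each product $(Z^{n}_t)_i (Z^{n}_t)_j$ is conditionally sub-exponential with $\psi_1$-norm at most a constant, and integrating against $\bar\sigma_t^{-2}$ over the finite interval $[0,T]$ preserves this (Minkowski for Orlicz norms), so that $\|(D_n)_{ij}\|_{\psi_1}\le b$ conditionally on $\cF^{\textrm{ob}}_{n-1}$, uniformly in $n$.

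From this conditional sub-exponential bound I would deduce the moment-generating-function estimate $\sE[\exp(s (D_n)_{ij}) \mid \cF^{\textrm{ob}}_{n-1}] \le \exp(C s^2/2)$ valid for $|s|\le 1/C$, with $C$ depending only on $b$. Iterating over $n=m,m-1,\ldots,1$ via the tower property yields $\sE[\exp(s\sum_{n=1}^m (D_n)_{ij})] \le \exp(Cms^2/2)$ for $|s|\le 1/C$, and a Chernoff argument optimising $s$ over this range gives the two-regime tail
$$\sP\Big( \big| \textstyle\sum_{n=1}^m (D_n)_{ij} \big| \ge t \Big) \le 2 \exp\Big(- \min\big(\tfrac{t^2}{2Cm}, \tfrac{t}{2C}\big)\Big);$$
setting the right-hand side equal to $\delta/3$ produces precisely the scale $\max(\ln(2/\delta), \sqrt{m\ln(2/\delta)})$, and a union bound over the three entries closes the argument. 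The main obstacle is the uniform conditional sub-exponential control of the second step: one must check that the conditional Gaussianity of $X^{n}$ survives despite the drift coefficients being themselves random (only $\cF^{\textrm{ob}}_{n-1}$-measurable), and that all the resulting moment bounds are genuinely uniform across episodes — which is exactly what the a-priori bound $L$ in Setting \ref{setting:randomised_algorithm} (enforced in the algorithms through the projection $\Pi_\Theta$) is there to guarantee.
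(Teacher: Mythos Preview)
Your proposal is correct and follows essentially the same route as the paper's proof: recognise $V_m^{-1}-V_0^{-1}-\sum_n\sE[\Xi_n\mid\cF^{\textrm{ob}}_{n-1}]$ as a sum of martingale differences, establish a uniform conditional sub-exponential ($\psi_1$) bound on the increments, and apply a Bernstein-type inequality. The paper packages the conditional sub-exponential estimates via conditional Orlicz norms and a decomposition $X^m=\cX^m+\cY^m$ into Brownian-driven and injected-noise-driven parts (citing \cite{szpruch2021exploration} for both the Orlicz machinery and the final Bernstein step), whereas you invoke the conditional Gaussianity of $X^n$ directly and spell out the Chernoff argument entry-wise with a union bound; these are cosmetic differences, not substantive ones.
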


\begin{proof}
Throughout this proof, let $C$  be a generic constant 
 depending only on  the coefficients in  (H.\ref{assum:lq_rl})
 and $L $ in Setting \ref{setting:randomised_algorithm}.
For each $q\in [1,2]$ and $\sigma$-algebra $\cG$,
let $\|\cdot\|_{q,\cG}:\Om\to [0,\infty]$ be  
the conditional Orlicz norm defined in \cite[Definition 4.1]{szpruch2021exploration}, such that for all random variables $X:\Om\to \sR$,
$$
\|X\|_{q,\cG}
\coloneqq  \mathcal{G}\text{-}\essinf \big\{ Y \in L^0(\cG ; (0, \infty))\; \big| \; 
\sE\left[\exp (|X|^q/Y^q) \big| \cG\right]\le 2\big\},
$$
where  $L^0(\cG ; (0, \infty))$ is the set of all
$\cG/\cB((0,\infty))$-measurable functions
$Y:\Om\to (0,\infty)$.

We first prove that there exists a constant $C\ge 0$ such that  $\|(\int_0^T |X^m_t|^2\,\d t)^{1/2}\|_{2,\cF^{\textrm{ob}}_{m-1}}\le C$ for all $m\in \sN$.
Observe from \eqref{eq:LQ_RL_state_m} that, 
for each $m\in \sN$, 
$X^m=\cX^m+\cY^m$, where 
\begin{align}
    \label{eq:state_decomposiition}
\begin{split}
    \d \cX^m_t &= 
    \left(A^\star \cX^m_t +B^\star  (K^m_t \cX^m_t+k^m_t) \right)\, \d t
     +  
    \bar{\sigma}_t\,
     \d W^m_t,
     \quad  t\in [0,T]; \quad \cX^m_0 = x_0,
     \\
    \d \cY^m_t &= 
    \left(A^\star \cY^m_t +B^\star  (K^m_t \cY^m_t+\lambda^m_t \xi^{\pi_m}_t) \right)\, \d t,
     \quad  t\in [0,T]; \quad \cY^m_0 = 0.
     \end{split}
\end{align}
By
 \cite[Proposition 4.7]{szpruch2021exploration},
 $\|(\int_0^T |\cX^m_t|^2\,\d t)^{1/2}\|_{2,\cF^{\textrm{ob}}_{m-1}}\le C$ for all $m\in \sN$.
 Moreover,
let 
 $\Phi^m:\Om\t [0,T]\to \sR$ be such that
for all $(\om,t)\in \Om\t [0,T]$,
$\d \Phi^m_t(\om)=(A^\star+B^\star K^m_t(\om))\Phi^m_t(\om) \,\d t$
and $\Phi^m_0(\om)=1$,
then
$\cY^{ m}_t
=
\Phi^m_t  \int_0^t(\Phi^m_s)^{-1}
B^\star \lambda^m_s \xi_s^{\pi_m}\,\d s
$ for all $t\in [0,T]$.
Observe that $\Phi^m$ is invertible $\d\sP\otimes \d t$ a.e., and by 
the uniform boundedness of $(K^m)_{m\in \sN}$,
 $\|\Phi^m\|_{L^\infty}+\|(\Phi^m)^{-1}\|_{L^\infty}\le C$ for all $m\in \sN$.
Hence, by H\"{o}lder's inequality,
$$
\left(\int_0^T |\cY^m_t|^2\,\d t\right)^{\frac{1}{2}}
\le C\sup_{t\in [0,T]} |\cY^m_t|
\le C\sup_{t\in [0,T]} \int_0^t |\lambda^m_t \xi^{\pi_m}_s|\, \d s
\le C \|\lambda^m\|_\infty  \left(\int_0^T |\xi^{\pi_m}_s|^2\, \d s\right)^{\frac{1}{2}},
$$
Assume without loss of generality that 
$\|\lambda^m\|_\infty>0$.
By the convexity of 
 $x\mapsto e^x$ 
and Jensen's inequality, 
for a sufficiently large $\tilde{C}>0$,
\begin{align*}
&\sE\left[\exp \left(\frac{1}{\tilde{C}^2\|\lambda^m\|^2_\infty}\int_0^T |\cY^m_t|^2\,\d t\right) \bigg| \cF^{\textrm{ob}}_{m-1}\right]
\le
\sE\left[\exp \left(\int_0^T \frac{C^2}{\tilde{C}^2}|\xi^{\pi_m}
_t|^2\,\d t\right)\right]
\\
&\quad \le \frac{1}{T}
\sE\left[\int_0^T\exp \left( \frac{TC^2}{\tilde{C}^2}|\xi^{\pi_m}
_t|^2\right)\,\d t\right]
\le
\sE\left[\exp \left( \frac{TC^2}{\tilde{C}^2}|\xi^{\pi_m}
_0|^2\right)\right]\le 2,
 \end{align*}
 where we used that $(\xi^{\pi_m}_t)_{t\in [0,T]}$ are standard normal random variables independent of $\cF^{\textrm{ob}}_{m-1}$.
Consequently, $\|(\int_0^T |\cY^m_t|^2\,\d t)^{\frac{1}{2}}\|_{2,\cF^{\textrm{ob}}_{m-1}}\le \tilde{C}$, which implies that $\|(\int_0^T |X^m_t|^2\,\d t)^{1/2}\|_{2,\cF^{\textrm{ob}}_{m-1}}\le C$ for all $m\in \sN$.

Observe that for all $m\in \sN$ and $t\in [0,T]$,
$\varphi^m(\cdot, t, X^{m}_t)=K^m_t X^{m}_t+k^m_t+\lambda^m_t \xi^{\pi_m}_t$.
By the uniform boundedness of $K^m$ and $k^m$,
 $\|(\int_0^T |\varphi^m(\cdot, t, X^{m}_t)|^2\,\d t)^{\frac{1}{2}}\|_{2,\cF^{\textrm{ob}}_{m-1}}\le {C}$ for all $m\in \sN$,
which along with $\bar{\sigma}^{-1}\in L^\infty([0,T];\sR)$ and 
\cite[Proposition 4.3]{szpruch2021exploration} implies that 
\begin{align*}
&\left\|
\int_0^T
\frac{1}{\bar{\sigma}^{2}_s} Z^{ m}_s (  Z^{ m}_s)^\top \,\d s
\right\|_{1,\cF^{\textrm{ob}}_{m-1}}
\le 
C
\left\|
\int_0^T
\begin{pmatrix}
X^{ m}_s
\\
\varphi^{m}(\cdot, s,X^{ m}_s)
\end{pmatrix}
\begin{pmatrix}
X^{ m}_s
\\
\varphi^m(\cdot, s,X^{ m}_s)
\end{pmatrix}^\top \,\d s
\right\|_{1,\cF^{\textrm{ob}}_{m-1}}
\\
&\qquad 
\le 
C
\bigg(1+\Big\|\Big(\int_0^T |X^m_t|^2\,\d t\Big)^{\frac{1}{2}}\Big\|_{2,\cF^{\textrm{ob}}_{m-1}}^2 
+\Big\|\Big(\int_0^T |\varphi^m(\cdot, t, X^{m}_t)|^2\,\d t\Big)^{\frac{1}{2}}\Big\|^2_{2,\cF^{\textrm{ob}}_{m-1}}
\bigg)\le 
{C}.
\end{align*}
As 
$V^{-1}_m
=
V_0^{-1}+
\sum_{n=1}^m
\int_0^T
\frac{1}{\bar{\sigma}^{2}_s} Z^{  n}_s (  Z^{ n}_s)^\top \,\d s$ and 
$\int_0^T
\frac{1}{\bar{\sigma}^{2}_s} Z^{m}_s (  Z^{ m}_s)^\top \,\d s
$ is $\cF^{\textrm{ob}}_{m}$-measurable,
by 
the  Bernstein inequality for martingale difference sequence in \cite[Proposition 4.4]{szpruch2021exploration},
 for all $m\in \sN$ and $\eps>0$,
\begin{align*}
&\sP\left(
\left|
V^{-1}_{ m}- V_0^{-1}-
\sum_{n=1}^m\sE\left[
\int_0^T
\frac{1}{\bar{\sigma}^{2}_s} Z^{ n}_s (  Z^{  n}_s)^\top \,\d s
\bigg|
\cF^{\textrm{ob}}_{n-1}
 \right]
 \right|\ge m\eps
\right)
\le 2\exp\left(
-mC\min \left(\eps^2, \eps \right)
\right).
\end{align*}
The desired result follows by substituting $\eps$ such that $2\exp\left(
-mC\min \left(\eps^2, \eps \right)
\right)=\delta$.
\end{proof}

We  now establish  upper and lower bounds of $\sum_{n=1}^m\sE\big[
\int_0^T
\frac{1}{\bar{\sigma}^{2}_s} Z^{ n}_s (  Z^{  n}_s)^\top \,\d s
\big|
\cF^{\textrm{ob}}_{n-1}
 \big]
$
 in terms of $m$.

\begin{Proposition}
\label{prop:upper_lower_expectation}
Suppose (H.\ref{assum:lq_rl}) 
and Setting \ref{setting:randomised_algorithm}
hold.
Then there exists a constant $C>0$, 
 depending only on  the coefficients in  (H.\ref{assum:lq_rl})
and  $L$ in Setting \ref{setting:randomised_algorithm}, such that 
\begin{enumerate}[(1)]
\item\label{eq:G_conditonal_upper}
for all $m\in \sN$,
$\sum_{n=1}^m\left\|\sE\left[
\int_0^T
\frac{1}{\bar{\sigma}^{2}_t} Z^{ n}_t (  Z^{  n}_t)^\top \,\d t
\big|
\cF^{\textrm{ob}}_{n-1}
 \right]\right\|\le Cm$.
\item
\label{eq:G_conditonal_lower}
if $|\pi_{m}| \|\lambda^m\|_\infty^2\le \frac{1}{C}
\min_{t\in [0,T]}((\lambda^m_t)^2 \wedge 1)$ for all $m\in \sN$, then
$$
\sum_{n=1}^m\Lambda_{\min}\left(
\sE\left[
\int_0^T
\frac{1}{\bar{\sigma}^{2}_t} Z^{  n}_t (  Z^{ n}_t)^\top \,\d t
\bigg|
\cF^{\textrm{ob}}_{n-1}
 \right]\right)
 \ge \frac{1}{C}\sum_{n=1}^m \min_{t\in [0,T]}((\lambda^n_t)^2 \wedge 1),
 \quad \fa m\in \sN,
$$
where $a\wedge 1=\min (a,1)$ for all $a\in \sR$.
\end{enumerate}
\end{Proposition}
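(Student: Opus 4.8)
The plan is to establish a uniform two-sided estimate for the single-episode matrix $G_n \coloneqq \sE[\int_0^T\bar\sigma_t^{-2}Z^n_t(Z^n_t)^\top\,\d t\mid\cF^{\textrm{ob}}_{n-1}]$ and then sum over $n$. For Item (1) I would bound the operator norm by the trace, $\|G_n\|\le\tr(G_n)\le\|\bar\sigma^{-2}\|_\infty\,\sE[\int_0^T((X^n_t)^2+(\varphi^n(\cdot,t,X^n_t))^2)\,\d t\mid\cF^{\textrm{ob}}_{n-1}]$. The conditional second moments on the right have already been controlled in the proof of Proposition \ref{prop:concentration_G}, which shows $\|(\int_0^T|X^n_t|^2\,\d t)^{1/2}\|_{2,\cF^{\textrm{ob}}_{n-1}}\le C$ and the analogous bound for $\varphi^n$; since $\sE[Y^2\mid\cG]\le2\|Y\|_{2,\cG}^2$ by the elementary inequality $x\le e^x$ and the definition of the conditional Orlicz norm, this gives $\|G_n\|\le C$ uniformly in $n$, and summing yields $\sum_{n=1}^m\|G_n\|\le Cm$.

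Item (2) is the substantive part. Since $\Lambda_{\min}(G_n)=\min_{|v|=1}v^\top G_n v$ and the claimed bound sums the individual smallest eigenvalues, it suffices to prove $v^\top G_n v\ge\frac1C\min_{t\in[0,T]}((\lambda^n_t)^2\wedge1)$ for every unit vector $v=(v_1,v_2)$ and then sum over $n$. Writing $v^\top Z^n_t=a_t X^n_t+v_2 k^n_t+v_2\lambda^n_t\xi^{\pi_n}_t$ with $a_t\coloneqq v_1+v_2 K^n_t$ (which is $\cF^{\textrm{ob}}_{n-1}$-measurable and bounded by $1+L$), I would insert the decomposition $X^n=\cX^n+\cY^n$ of \eqref{eq:state_decomposiition}, where $\cX^n$ carries the Brownian motion $W^n$ and $\cY^n=\Phi^n_\cdot\int_0^\cdot(\Phi^n_s)^{-1}B^\star\lambda^n_s\xi^{\pi_n}_s\,\d s$ carries the injected noise. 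Conditioning on $\mathcal{G}_n\coloneqq\cF^{\textrm{ob}}_{n-1}\vee\sigma(W^n)$ and using that $\xi^{\pi_n}$ is centred and independent of $\mathcal{G}_n$, the quantity $v^\top Z^n_t$ splits into the $\mathcal{G}_n$-measurable mean $\bar R_t=a_t\cX^n_t+v_2 k^n_t$ and the fluctuation $\tilde R_t=a_t\cY^n_t+v_2\lambda^n_t\xi^{\pi_n}_t$, so that
\begin{equation*}
v^\top G_n v=\sE\Big[\int_0^T\tfrac{\bar R_t^2}{\bar\sigma_t^2}\,\d t\,\Big|\,\cF^{\textrm{ob}}_{n-1}\Big]+\sE\Big[\int_0^T\tfrac{\tilde R_t^2}{\bar\sigma_t^2}\,\d t\,\Big|\,\cF^{\textrm{ob}}_{n-1}\Big],
\end{equation*}
with both terms nonnegative.

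The lower bound then follows from a dichotomy on the size of $v_2$. Exploiting the cell-wise independence of $\xi^{\pi_n}$, a direct computation gives $\sE[(\cY^n_t)^2\mid\cF^{\textrm{ob}}_{n-1}]\le C|\pi_n|\|\lambda^n\|_\infty^2$ and $|\sE[\cY^n_t\xi^{\pi_n}_t\mid\cF^{\textrm{ob}}_{n-1}]|\le C|\pi_n|\|\lambda^n\|_\infty$, whence $\sE[\tilde R_t^2\mid\cF^{\textrm{ob}}_{n-1}]\ge v_2^2(\lambda^n_t)^2-C|\pi_n|\|\lambda^n\|_\infty^2$; integrating and invoking the mesh hypothesis $|\pi_n|\|\lambda^n\|_\infty^2\le\frac1C\min_t((\lambda^n_t)^2\wedge1)$ (with $C$ large), the fluctuation term is $\ge\frac1C\min_t(\lambda^n_t)^2$ whenever $v_2^2$ is bounded below by a fixed threshold. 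When $v_2^2$ lies below that threshold, $v_1^2$ is close to $1$, which forces $|a_t|\ge\frac12$ uniformly in $t$ (choosing the threshold depending on $L$); the mean term then dominates, and since the stochastic-integral part $M$ of $\cX^n$ satisfies $\sE[M_t^2\mid\cF^{\textrm{ob}}_{n-1}]=(\Phi^n_t)^2\int_0^t(\Phi^n_s)^{-2}\bar\sigma_s^2\,\d s\ge c\int_0^t\bar\sigma_s^2\,\d s$, the mean term is bounded below by a positive constant independent of $\lambda^n$. As $\min_t((\lambda^n_t)^2\wedge1)\le1$, both regimes give $v^\top G_n v\ge\frac1C\min_t((\lambda^n_t)^2\wedge1)$, and summing over $n$ finishes Item (2).

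The main obstacle is the possible degeneracy of the state-direction coefficient $a_t=v_1+v_2 K^n_t$: since $K^n$ may be as large as $L$, $a_t$ can vanish for directions $v$ with $v_2\neq0$, so the Brownian excitation of $\cX^n$ alone cannot produce a uniform lower bound. The dichotomy circumvents this by extracting the injected-noise variance $v_2^2(\lambda^n_t)^2$ exactly in the regime where $a_t$ may degenerate, while the mesh-size condition is what guarantees that the $\cY^n$--$\xi^{\pi_n}$ cross-correlations, which scale like $|\pi_n|\|\lambda^n\|_\infty^2$, stay dominated by that variance. Verifying the two small computations for $\sE[(\cY^n_t)^2]$ and $\sE[\cY^n_t\xi^{\pi_n}_t]$, and checking that a single constant $C$ can be chosen to serve simultaneously in the hypothesis threshold, the $v_2$ dichotomy threshold, and the final bound, are the remaining routine but delicate points.
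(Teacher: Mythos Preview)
Your approach is correct and essentially the same as the paper's. For Item~(1) the paper argues directly from the uniform second-moment bound $\sE[\sup_t|\widetilde X^{m,\omega}_t|^2]\le C$ rather than via the Orlicz-norm estimates, but either route gives $\|G_n\|\le C$; for Item~(2) the paper organises the argument at the matrix level---splitting off the cross term \eqref{eq:conditional_G_m_cross} (your $\cY^n$--$\xi^{\pi_n}$ correlation, bounded by $C|\pi_n|\|\lambda^n\|_\infty^2$) and then lower-bounding the remaining block \eqref{eq:conditional_G_m_square} via the compactness fact $\inf_{|K|\le L}\Lambda_{\min}\!\big(\begin{psmallmatrix}1\\K\end{psmallmatrix}\begin{psmallmatrix}1&K\end{psmallmatrix}+\begin{psmallmatrix}0&0\\0&1\end{psmallmatrix}\big)>0$---whereas you unfold the same inequality at the Rayleigh-quotient level via the dichotomy on $v_2$. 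The two presentations are equivalent: your case split is exactly what underlies the positivity of that $2\times2$ eigenvalue, and the mesh condition plays the identical role of absorbing the cross-correlation.
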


\begin{proof}
For all $m\in \sN$, 
by the definition of $Z^m$ in \eqref{eq:LQ_RL_state_m},
the measurability conditions of $K^m$ and $k^m$ and the fact that $\sE\big[\xi^{\pi_m}_t \big| \cF^{\textrm{ob}}_{m-1} \big] = 0$ for all $t \in [0,T]$  implies that 
for $\d\sP$-a.s.,
\begin{align}
&\sE\left[
\int_0^T
\frac{1}{\bar{\sigma}^{2}_t} Z^{ m}_t (  Z^{ m}_t)^\top \,\d t
\bigg|
\cF^{\textrm{ob}}_{m-1}
\right](\om) 
=\sE\left[
\int_0^T
\frac{1}{\bar{\sigma}^{2}_t} 
\begin{psmallmatrix}
 {X}^{ m}_t
\\
K^m_t  {X}^{ m}_t+k^m_t +\lambda^m_t \xi_t^{\pi_m}
\end{psmallmatrix}
\begin{psmallmatrix}
 {X}^{ m}_t
\\
K^m_t  {X}^{ m}_t+k^m_t +\lambda^m_t \xi_t^{\pi_m}
\end{psmallmatrix}^\top 
\,\d t
\bigg|
\cF^{\textrm{ob}}_{m-1}
\right](\om)
\nb
\\
&
\quad 
=  \sE\bigg[
\int_0^T
\frac{1}{\bar{\sigma}^{2}_t}
\left(
\begin{psmallmatrix}
0 & 0
\\
 \widetilde{X}^{ m, \omega}_t
\lambda^m_t \xi^{\pi_m}_t & K^m_t(\om) \widetilde{X}^{ m, \omega}_t
\lambda^m_t \xi^{\pi_m}_t
\end{psmallmatrix}
+\begin{psmallmatrix}
0 &  \widetilde{X}^{ m, \omega}_t \lambda^m_t \xi^{\pi_m}_t 
\\
0 & K^m_t(\om) \widetilde{X}^{ m, \omega}_t
\lambda^m_t \xi^{\pi_m}_t
\end{psmallmatrix}
\right)
\,\d t
\bigg]
\label{eq:conditional_G_m_cross}
\\
& \quad \quad +
\sE\left[
\int_0^T
\frac{1}{\bar{\sigma}^{2}_t} \left(
\begin{psmallmatrix}
\widetilde{X}^{ m, \omega}_t
\\
K^m_t(\om) \widetilde{X}^{ m, \omega}_t+k^m_t(\om)
\end{psmallmatrix}
\begin{psmallmatrix}
\widetilde{X}^{ m, \omega}_t
\\
K^m_t(\om) \widetilde{X}^{ m, \omega}_t+k^m_t(\om)
\end{psmallmatrix}^\top 
+ \begin{psmallmatrix}
0 & 0
\\
0 & (\lambda^m_t)^2 
\end{psmallmatrix} \right) \,\d t
\right]
\label{eq:conditional_G_m_square}
\end{align}
where  $\widetilde{X}^{m, \omega}$ is the state process ${X}^m$ 
conditioned on $\cF^{\textrm{ob}}_{m-1}
$, such that $\sP$-a.s.~$\om$ and for all $t\in [0,T]$,
\begin{align}
\label{eq:X_m_condition}
\widetilde{X}^{ m,\omega}_t(\cdot)
&=
\Phi^m_t(\om) x_0+
\Phi^m_t(\om)  \int_0^t(\Phi^m_s(\om))^{-1}
\Big(
B^\star
(k^m_s(\om)+\lambda^m_s \xi_s^{\pi_m}(\cdot))\,\d s
+\bar{\sigma}_s \, \d W^m_s(\cdot)
\Big), 
\end{align}
with 
$\d \Phi^m_t(\om)=(A^\star+B^\star K^m_t(\om))\Phi^m_t(\om) \,\d t$
and $\Phi^m_0(\om)=1$.
By the uniform boundedness of $K^m$, $k^m$ and $\lambda^m$,
$\sE\left[\sup_{t\in [0,T]}|\widetilde{X}^{m, \omega}_t|^2\right]\le C$ for all $m
\in \sN$.
Hence, 
applying the Cauchy--Schwarz inequality to
\eqref{eq:conditional_G_m_cross}-\eqref{eq:conditional_G_m_square}
 and using the Gaussianity of $(\xi^{\pi_m}_t)_{t\in [0,T]}$ 
yield Item \ref{eq:G_conditonal_upper}.

To prove Item \ref{eq:G_conditonal_lower},
we first establish a lower bound of \eqref{eq:conditional_G_m_cross}. By \eqref{eq:X_m_condition} and the independence between $\xi^{\pi_m}$ and $W^m$,
for all $t\in [0,T]$,
\begin{align*}
\left|
\sE[ \widetilde{X}^{ m, \omega}_t
\lambda^m_t \xi^{\pi_m}_t]
\right|
&=
\left|
 \lambda^m_t 
\Phi^m_t(\om)  \int_0^t(\Phi^m_s(\om))^{-1}
\lambda^m_s 
\sE\left[
\xi_s^{\pi_m}\xi_t^{\pi_m}
\right]\,\d s
\right|.
\\
\left|
\sE[K^m_t(\om) \widetilde{X}^{ m, \omega}_t
\lambda^m_t \xi^{\pi_m}_t]
\right|
&=
\left|
K^m_t(\om) \lambda^m_t 
\Phi^m_t(\om)  \int_0^t(\Phi^m_s(\om))^{-1}
\lambda^m_s 
\sE\left[
\xi_s^{\pi_m}\xi_t^{\pi_m}
\right]\,\d s
\right|.
\end{align*}
Note that  $\sE[\xi_s^{\pi_m}\xi_t^{\pi_m}]$ is $1$
 if there exists $i_0$ such that $s,t\in [t_{i_0}^m,t_{i_0+1}^m)$, and is zero otherwise. By the uniform boundedness of $K^m$,
 there exists $C\ge 0$ such that 
for all $ m\in \sN$,
\begin{equation}
    \label{eq:lower bound eigen}
    \Lambda_{\min}\left(
\sE\bigg[
\int_0^T
\frac{1}{\bar{\sigma}^{2}_t}
\left(
\begin{psmallmatrix}
0 & 0
\\
 \widetilde{X}^{ m, \omega}_t
\lambda^m_t \xi^{\pi_m}_t & K^m_t(\om) \widetilde{X}^{ m, \omega}_t
\lambda^m_t \xi^{\pi_m}_t
\end{psmallmatrix}
+\begin{psmallmatrix}
0 &  \widetilde{X}^{ m, \omega}_t \lambda^m_t \xi^{\pi_m}_t 
\\
0 & K^m_t(\om) \widetilde{X}^{ m, \omega}_t
\lambda^m_t \xi^{\pi_m}_t
\end{psmallmatrix}
\right)
\,\d t
\bigg]
\right)\ge -C |\pi_{m}| \|\lambda^m\|_\infty^2.
\end{equation}
We then prove that the minimum eigenvalue of  \eqref{eq:conditional_G_m_square} is bounded away from zero.
Observe that 
for each   $K \in \sR$,
a direct computation shows that 
all eigenvalues of 
the matrix
$\Psi(K) := \begin{psmallmatrix}
1 \\ K
\end{psmallmatrix}\begin{psmallmatrix}
1 \\ K
\end{psmallmatrix}^\top + \begin{psmallmatrix}
0 & 0 \\ 0& 1
\end{psmallmatrix}
$
are positive, which along with the continuity of the map $\sS^2\ni S\mapsto \Lambda_{\min }(S)\in \sR$ 
implies that 
$c_0\coloneqq \inf_{|K|\le L} \Lambda_{\min} \big( \Psi(K) \big)>0$,
which the constant $L$ in 
  Setting \ref{setting:randomised_algorithm}.
Then for all $m\in \sN$ and $t\in [0,T]$,
\begin{equation}
 \label{eq:min eigen fix time}
\begin{aligned}
    &\Lambda_{\min} \left( \sE\left[
\frac{1}{\bar{\sigma}^{2}_t} \left(
\begin{psmallmatrix}
\widetilde{X}^{ m, \omega}_t
\\
K^m_t(\om) \widetilde{X}^{ m, \omega}_t+k^m_t(\om)
\end{psmallmatrix}
\begin{psmallmatrix}
\widetilde{X}^{ m, \omega}_t
\\
K^m_t(\om) \widetilde{X}^{ m, \omega}_t+k^m_t(\om)
\end{psmallmatrix}^\top 
+ \begin{psmallmatrix}
0 & 0
\\
0 & (\lambda^m_t)^2 
\end{psmallmatrix} \right) 
\right] \right) \\
&\qquad \geq \frac{1}{\bar{\sigma}^{2}_t}  \Lambda_{\min} \left( \text{Var} \left[ \begin{psmallmatrix}
\widetilde{X}^{ m, \omega}_t
\\
K^m_t(\om) \widetilde{X}^{ m, \omega}_t+k^m_t(\om)
\end{psmallmatrix} \right] + \begin{psmallmatrix}
0 & 0
\\
0 & (\lambda^m_t)^2 
\end{psmallmatrix} \right)
\\&\qquad
\geq \frac{1}{\bar{\sigma}^{2}_t}  \Lambda_{\min}
\left(
 \begin{psmallmatrix}
1
\\
K^m_t(\om)
\end{psmallmatrix}   \text{Var} \big[ 
\widetilde{X}^{ m, \omega}_t
 \big] \begin{psmallmatrix}
1
\\
K^m_t(\om)
\end{psmallmatrix}^\top + \begin{psmallmatrix}
0 & 0
\\
0 & (\lambda^m_t)^2 
\end{psmallmatrix} \right)
\\&\qquad
\geq \frac{1}{\bar{\sigma}^{2}_t} \Big( \text{Var} \big[ 
\widetilde{X}^{ m, \omega}_t
 \big] \wedge (\lambda^m_t)^2\Big) \Lambda_{\min} \left(  \begin{psmallmatrix}
1
\\
K^m_t(\om)
\end{psmallmatrix} \begin{psmallmatrix}
1
\\
K^m_t(\om)
\end{psmallmatrix}^\top + \begin{psmallmatrix}
0 & 0
\\
0 & 1
\end{psmallmatrix} \right) \geq c_0  \Big( \text{Var} \big[ 
\widetilde{X}^{ m, \omega}_t
 \big] \wedge (\lambda^m_t)^2\Big),
\end{aligned}
\end{equation}
as $|K_t^m(\om)|\le L$  for 
$\sP$-a.s.
Since  $\inf_{t\in [0,T]}\bar{\sigma}_t>0$, $K^m$ is uniformly bounded
and $\xi^{\pi_m}$ and $W^m$ are independent, 
by 
\eqref{eq:X_m_condition},
\begin{align}
\label{eq:var lower bound}
\begin{split}
\inf_{m\in\sN,t\in [T/2,T]}
\text{Var} \big[ 
\widetilde{X}^{ m, \omega}_t
 \big]
&\ge 
\inf_{m\in\sN,t\in [T/2,T]}
\sE\left[
\left(
\Phi^m_t(\om)  \int_0^t(\Phi^m_s(\om))^{-1}
\Big(
\lambda^m_s \xi_s^{\pi_m}(\cdot))\,\d s
+\bar{\sigma}_s \, \d W^m_s(\cdot)
\right)^2
\right]
\\
  &\ge 
\inf_{m\in\sN, t\in [T/2,T]}
\left(
(\Phi^m_t(\om))^2  \int_0^t\left((\Phi^m_s(\om))^{-1}
\bar{\sigma}_s \right)^2 \,\d s
\right) >0.
\end{split}
\end{align}
Consequently, 
by using   Fubini's theorem, the concavity of $\sS^2\ni S\mapsto \Lambda_{\min} (S)\in \sR$ and Jensen's inequality, 
\begin{equation}
    \label{eq:second term lower bound}
    \begin{aligned}
    &\Lambda_{\min} \left( \sE\left[\int_0^T
\frac{1}{\bar{\sigma}^{2}_t} \left(
\begin{psmallmatrix}
\widetilde{X}^{ m, \omega}_t
\\
K^m_t(\om) \widetilde{X}^{ m, \omega}_t+k^m_t(\om)
\end{psmallmatrix}
\begin{psmallmatrix}
\widetilde{X}^{ m, \omega}_t
\\
K^m_t(\om) \widetilde{X}^{ m, \omega}_t+k^m_t(\om)
\end{psmallmatrix}^\top 
+ \begin{psmallmatrix}
0 & 0
\\
0 & (\lambda^m_t)^2 
\end{psmallmatrix} \right) 
\; \d t\right] \right)
 \\
   &\qquad \geq \Lambda_{\min} \left( \sE\left[\int_{T/2}^T
\frac{1}{\bar{\sigma}^{2}_t} \left(
\begin{psmallmatrix}
\widetilde{X}^{ m, \omega}_t
\\
K^m_t(\om) \widetilde{X}^{ m, \omega}_t+k^m_t(\om)
\end{psmallmatrix}
\begin{psmallmatrix}
\widetilde{X}^{ m, \omega}_t
\\
K^m_t(\om) \widetilde{X}^{ m, \omega}_t+k^m_t(\om)
\end{psmallmatrix}^\top 
+ \begin{psmallmatrix}
0 & 0
\\
0 & (\lambda^m_t)^2 
\end{psmallmatrix} \right) 
\; \d t\right] \right)
  \\ &\qquad
  \geq \int_{T/2}^T \Lambda_{\min} \left( \sE\left[
\frac{1}{\bar{\sigma}^{2}_t} \left(
\begin{psmallmatrix}
\widetilde{X}^{ m, \omega}_t
\\
K^m_t(\om) \widetilde{X}^{ m, \omega}_t+k^m_t(\om)
\end{psmallmatrix}
\begin{psmallmatrix}
\widetilde{X}^{ m, \omega}_t
\\
K^m_t(\om) \widetilde{X}^{ m, \omega}_t+k^m_t(\om)
\end{psmallmatrix}^\top 
+ \begin{psmallmatrix}
0 & 0
\\
0 & (\lambda^m_t)^2 
\end{psmallmatrix} \right) 
\right] \right) \; \d t \\
&\qquad
\geq \tilde{c}_0 \min_{t \in [0,T]} \Big( 1 \wedge (\lambda^m_t)^2\Big),
    \end{aligned}
\end{equation}
for some $\tilde{c}_0>0$,
where the final inequality follows from \eqref{eq:min eigen fix time} and \eqref{eq:var lower bound}. 
Hence 
setting
$|\pi_{m}| \|\lambda^m\|_\infty^2\le \frac{\tilde{c}_0}{2C}
\min_{t\in [0,T]}((\lambda^m_t)^2 \wedge 1)$
with $C$ given in \eqref{eq:lower bound eigen} 
and using 
 \eqref{eq:conditional_G_m_cross}
\eqref{eq:conditional_G_m_square}, \eqref{eq:lower bound eigen} and \eqref{eq:second term lower bound}  
prove Item  \ref{eq:G_conditonal_lower}.
\end{proof}

Based on Propositions \ref{prop:concentration_G} and    \ref{prop:upper_lower_expectation},
the following lemma estimates   the accuracy of  $(\hat{\theta}_m)_{m\in \sN}$   in terms of  $(V^{-1}_m)_{m\in \sN}$,
which extends 
 \cite[Proposition 4.6]{szpruch2021exploration}
 to randomised policies.
 

\begin{Lemma}
\label{lemma:theta_accuracy}
Suppose  (H.\ref{assum:lq_rl}) 
and Setting \ref{setting:randomised_algorithm}
hold.
 Then 
there exists a constant ${C}\ge 0$,
 depending only on  the coefficients in  (H.\ref{assum:lq_rl})
 and $\theta_0,V_0, L$ in Setting \ref{setting:randomised_algorithm},
such that for all 
 $m \in \sN$, 
 \begin{equation}
     \label{eq: high prob parameter}
      \sP\Big( | \hat{ {\theta}}_{ m} - {\theta}^\star |^2 \leq C \big(\Lambda_{\min}(V_{ m}^{-1}) \big)^{-1} \big(1 + \ln m \big) \Big) \geq 1-1/m.
 \end{equation}
Assume further that 
  (H.\ref{eq:bounded}) holds.
   Then for all $m\in \sN$,
 \begin{equation}
     \label{eq:expected parameter}
     \sE\big[ | { {\theta}}_{ m} - {\theta}^\star |^2 \big] \leq C \left( \sE \big[\big(\Lambda_{\min}(V_{ m}^{-1}) \big)^{-1}\big] (1 + \ln m) + \frac{1}{m} + \sP(\hat{ {\theta}}_{ m} \notin \Theta) \right),
 \end{equation}
with $\theta_m$   defined by   \eqref{eq:theta_m_project}.
\end{Lemma}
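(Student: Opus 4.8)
The plan is to derive an explicit expression for the estimation error $\hat{\theta}_m-\theta^\star$, control its deterministic (bias) and stochastic (martingale) parts separately via a self-normalised concentration inequality, and then pass from $\hat\theta_m$ to the truncated estimate $\theta_m$ by a cut-off argument based on the compactness of $\Theta$.

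First I would substitute the observed dynamics $\d X^n_s=\theta^\star Z^n_s\,\d s+\bar\sigma_s\,\d W^n_s$ into the expression \eqref{eq:posterior_mean} for $\hat\theta_m$. Using that $\sum_{n=1}^m\int_0^T\bar\sigma^{-2}_s Z^n_s (Z^n_s)^\top\,\d s=V_m^{-1}-V_0^{-1}$ (cf.~\eqref{eq:posterior_variance}), a direct computation yields the decomposition
$$
\hat\theta_m-\theta^\star=(\theta_0-\theta^\star)V_0^{-1}V_m+\Gamma_m V_m,\qquad \Gamma_m:=\sum_{n=1}^m\int_0^T\bar\sigma^{-1}_s(Z^n_s)^\top\,\d W^n_s,
$$
where $\Gamma_m$ is a row-vector-valued martingale (with respect to the episode filtration) whose predictable quadratic variation is exactly $V_m^{-1}-V_0^{-1}$.

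Next I would bound the two summands. Since $V_m^{-1}\succeq V_0^{-1}$, one has $\Lambda_{\min}(V_m^{-1})\ge\Lambda_{\min}(V_0^{-1})>0$ and $\|V_m\|=\big(\Lambda_{\min}(V_m^{-1})\big)^{-1}$, so the bias obeys $|(\theta_0-\theta^\star)V_0^{-1}V_m|^2\le C\big(\Lambda_{\min}(V_m^{-1})\big)^{-2}\le C\big(\Lambda_{\min}(V_m^{-1})\big)^{-1}$. For the stochastic part, $|\Gamma_m V_m|^2\le\big(\Lambda_{\min}(V_m^{-1})\big)^{-1}\,\Gamma_m V_m\Gamma_m^\top$, so it remains to control the self-normalised quantity $\Gamma_m V_m\Gamma_m^\top=\|\Gamma_m^\top\|^2_{(V_m^{-1})^{-1}}$. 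Here I would invoke the self-normalised martingale inequality (the continuous-time analogue of the bound underlying \cite[Proposition 4.6]{szpruch2021exploration}), which gives, with probability at least $1-\delta$,
$$
\Gamma_m V_m\Gamma_m^\top\le 2\ln\!\Big(\tfrac{\det(V_m^{-1})^{1/2}\det(V_0^{-1})^{-1/2}}{\delta}\Big).
$$
The determinant is then controlled using the earlier concentration results: Proposition \ref{prop:upper_lower_expectation}\ref{eq:G_conditonal_upper} gives $\big\|\sum_{n=1}^m\sE[\int_0^T\bar\sigma^{-2}_t Z^n_t(Z^n_t)^\top\,\d t\mid\cF^{\textrm{ob}}_{n-1}]\big\|\le Cm$, and Proposition \ref{prop:concentration_G} shows $V_m^{-1}$ concentrates about this conditional sum, so $\|V_m^{-1}\|\le Cm$ and hence $\det(V_m^{-1})\le Cm^2$ with probability at least $1-\delta$. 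Choosing $\delta=1/(2m)$ in both probabilistic bounds and combining by a union bound yields $\Gamma_m V_m\Gamma_m^\top\le C(1+\ln m)$ with probability at least $1-1/m$, which together with the bias bound and $\big(\Lambda_{\min}(V_m^{-1})\big)^{-2}\le C\big(\Lambda_{\min}(V_m^{-1})\big)^{-1}$ proves \eqref{eq: high prob parameter}.

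For \eqref{eq:expected parameter} I would run a truncation argument using that $\Theta$ is compact, so $|\theta_m-\theta^\star|\le D:=\operatorname{diam}(\Theta)$, and that $\Pi_\Theta(\theta,V)=\theta$ for $\theta\in\Theta$, whence $\theta_m=\hat\theta_m$ on $\{\hat\theta_m\in\Theta\}$. Writing $E_m$ for the event in \eqref{eq: high prob parameter} (so $\sP(E_m^c)\le 1/m$), I would split $\sE[|\theta_m-\theta^\star|^2]$ according to $\{\hat\theta_m\in\Theta\}$ and its complement, bound the complement contribution by $D^2\sP(\hat\theta_m\notin\Theta)$, and on $\{\hat\theta_m\in\Theta\}$ replace $\theta_m$ by $\hat\theta_m$ and split further over $E_m$ and $E_m^c$: on $E_m$ the integrand is at most $C\big(\Lambda_{\min}(V_m^{-1})\big)^{-1}(1+\ln m)$, while on $E_m^c$ it is at most $D^2$ with $\sP(E_m^c)\le 1/m$; collecting these estimates gives \eqref{eq:expected parameter}. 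The main obstacle I expect is the self-normalised concentration step together with its bookkeeping: the stochastic term must be handled through the $V_m$-weighted (self-normalised) martingale bound rather than a crude variance estimate, and the logarithmic factor $1+\ln m$ arises precisely from $\ln\det(V_m^{-1})$, which in the randomised setting must be bounded via Propositions \ref{prop:concentration_G} and \ref{prop:upper_lower_expectation} rather than by the deterministic argument available for pure-exploitation policies.
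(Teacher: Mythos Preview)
Your proposal is correct and follows essentially the same route as the paper. The paper simply cites \cite[Lemma 4.5 and Proposition 4.6]{szpruch2021exploration} to obtain the self-normalised bound $\sP\big(\Lambda_{\min}(V_m^{-1})|\hat\theta_m-\theta^\star|^2\le C(1+\ln\det V_m^{-1}+\ln(1/\delta))\big)\ge 1-\delta$, whereas you unpack that step explicitly via the error decomposition $\hat\theta_m-\theta^\star=(\theta_0-\theta^\star)V_0^{-1}V_m+\Gamma_m V_m$ and the self-normalised martingale inequality; from there both proofs bound $\ln\det V_m^{-1}$ using Propositions \ref{prop:concentration_G} and \ref{prop:upper_lower_expectation}\ref{eq:G_conditonal_upper}, take $\delta=1/(2m)$, and run the identical truncation argument for \eqref{eq:expected parameter}.
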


\begin{proof}
Throughout this proof, 
let $C>0$ be a  generic constant
which is 
independent of $\delta\in (0,1)$ and $m\in \sN$ 
and may take   different values at each occurrence.

By
the  same arguments for
Lemma 4.5 and Proposition 4.6 in \cite{szpruch2021exploration} 
and the fact that  $\inf_{t\in [0,T]}\bar{\sigma}_t
>0$, it follows that for all 
$\delta \in (0,1)$ and $m \in \sN$, 
\begin{equation}
    \label{eq:high prob bound}
    \sP\Big(\Lambda_{\min}(V_{ m}^{-1}) | \hat{ {\theta}}_{ m} - {\theta}^\star |^2 \leq C \big(1 + \ln ( \det V_{ m}^{-1} ) + \ln(\tfrac{1}{\delta}) \big) \Big) \geq 1-\delta.
\end{equation}

By Proposition \ref{prop:concentration_G} and Proposition  \ref{prop:upper_lower_expectation} Item \ref{eq:G_conditonal_upper}, it follows that  for all 
$\delta \in (0,1)$ and $m \in \sN$, with probability at least $1-\delta$,
\begin{align}
\label{eq:det bound}
\ln ( \det V_{ m}^{-1} ) &
\leq C \ln |V_m^{-1}| 
\le C\ln \left( \left| V_0^{-1}+\sum_{n=1}^m\sE\left[
\int_0^T
\frac{1}{\bar{\sigma}^{2}_t} Z^{  n}_t (  Z^{  n}_t)^\top \,\d t
\bigg|
\cF^{\textrm{ob}}_{n-1}
 \right] \right| + C \max\left(
\ln\left(\tfrac{2}{\delta }\right),
\sqrt{m \ln\left(\tfrac{2}{\delta }\right) }\right) \right) \nonumber\\
&\leq C \ln \left(Cm + C \max\Big(
\ln\left(\tfrac{2}{\delta }\right),
\sqrt{m \ln\left(\tfrac{2}{\delta }\right) }\Big) \right) \leq C\Big(1 + \ln m + \ln\left(\tfrac{1}{\delta }\right)\Big).
\end{align}
Substituting $\delta = 1/(2m)$ in \eqref{eq:high prob bound} and \eqref{eq:det bound} implies that
$$ \sP\Big( | \hat{ {\theta}}_{ m} - {\theta}^\star |^2 \leq C \big(\Lambda_{\min}(V_{ m}^{-1}) \big)^{-1} \big(1 + \ln m \big) \Big) \geq 1-1/m.$$
Let $A$ be the event such that the estimate 
$| \hat{ {\theta}}_{ m} - {\theta}^\star |^2 \leq C \big(\Lambda_{\min}(V_{ m}^{-1}) \big)^{-1} \big(1 + \ln m \big)$
 holds. 
 Under (H.\ref{eq:bounded}), 
 $\theta_m$ and $\theta^\star$ are uniformly bounded
 (cf.~\eqref{eq:theta_m_project}). Hence, 
\begin{align*}
    \sE \big[| { {\theta}}_{ m} - {\theta}^\star |^2 \big] &\leq \sE \big[| { {\theta}}_{ m} - {\theta}^\star |^2 \bm{1}_{A \cap \{\hat{\theta}_m \in \Theta \}}\big] + C \big(\sP(A^c) + \sP(\hat{ {\theta}}_{ m} \notin \Theta) \big) \\ 
    &\leq C \left( \sE \big[\big(\Lambda_{\min}(V_{ m}^{-1}) \big)^{-1}\big] (1 + \ln m) + \frac{1}{m} + \sP(\hat{ {\theta}}_{ m} \notin \Theta) \right)
\end{align*}
where the last inequality uses   the fact that $\theta_m = \hat{\theta}_m$ on the event $\{\hat{\theta}_m \in \Theta\}$.
\end{proof}

The remaining part of  this section
is devoted to analyse 
 the expected costs of  $ (\varphi_m)_{m\in \sN}$.
The following lemma proves 
stability of \eqref{eq:riccati_theta} and  regularity of the cost function $J^{\theta^\star}$ in \eqref{eq:LQ_RL_cost}.
The proof 
 adapts the arguments of
Lemma 3.1 and Proposition 3.7
in 
\cite{basei2022logarithmic}
to the present setting,
and  is given in Appendix  \ref{appendix:techinical_lemma}.

\begin{Lemma}
\label{lemma:regularity}
Suppose (H.\ref{assum:lq_rl}) holds. 
\begin{enumerate}[(1)]
\item\label{item:continuous_differentiability}
The map 
 $\sR^{1\t 2}\ni\theta\mapsto P^\theta,\eta^\theta\in C^1([0,T];\sR)$ is continuously differentiable,
 where   $P^{ \theta}\in C([0,T];\sR_{\ge 0})$ and $\eta^{ \theta}\in C([0,T]; \sR)$
are the solution to
\eqref{eq:riccati_theta}.
\item \label{item:quadratic_dependence}
For each $L\ge 0$, there exists a constant $C\ge 0$,
 depending only on $L$ and  the coefficients in (H.\ref{assum:lq_rl}),
 such that 
for all $K,k\in C([0,T];\sR)$ with $\|K\|_{\infty},\|k\|_{\infty}\in [0, L]$,
\begin{equation*}
     J^{  \theta^\star}(\phi)-     J^{  \theta^\star}( \phi^{ \theta^\star} )\le C(\|K-K^{\theta^\star}\|^2_{\infty}+\|k-k^{\theta^\star}\|^2_{\infty}),
\end{equation*}
where $\phi:[0,T]\t \sR\to \sR$  satisfies   $\phi(t,x)=K_tx+k_t$ 
for all $(t,x)\in [0,T]\t \sR$, 
and $ \phi^{ \theta^\star}$ is the optimal policy of  \eqref{eq:LQ_RL_cost}.

\end{enumerate}

\end{Lemma}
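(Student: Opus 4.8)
The plan is to prove the two parts separately. Part~\ref{item:continuous_differentiability} is a statement about smooth dependence of the Riccati system \eqref{eq:riccati_theta} on the drift parameter $\theta=(A,B)$, while Part~\ref{item:quadratic_dependence} is a local quadratic suboptimality bound that I would derive from an exact performance-difference identity.

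For Part~\ref{item:continuous_differentiability}, I would first note that only the drift parameters $(A,B)$ vary, while the cost data $Q,S,R,M$ (and hence the convexity conditions $R_t>0$, $Q_t-S_t^2R_t^{-1}\ge0$, $M\ge0$ from (H.\ref{assum:lq_rl})) are fixed. Thus for every $\theta$ the equation for $P^\theta$ is the Riccati equation of a convex LQ problem and admits a unique global solution $P^\theta\in C([0,T];\sR_{\ge0})$. To upgrade to $C^1$ dependence on $\theta$, I would apply the Banach-space implicit function theorem to
\[
\Phi(\theta,P)=\Big(\tfrac{\d}{\d t}P_t+2AP_t-(BP_t+S_t)^2R_t^{-1}+Q_t,\ P_T-M\Big),
\]
viewed as a map $\sR^{1\t2}\t C^1([0,T];\sR)\to C([0,T];\sR)\t\sR$. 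This map is smooth (polynomial in $(P,\theta)$ with continuous coefficients) and satisfies $\Phi(\theta,P^\theta)=0$. Its partial derivative $D_P\Phi(\theta,P)[\hat P]=(\tfrac{\d}{\d t}\hat P_t+[2A-2(BP_t+S_t)R_t^{-1}B]\hat P_t,\ \hat P_T)$ is a bounded linear bijection onto $C([0,T];\sR)\t\sR$, since the associated linear terminal-value ODE is uniquely solvable by variation of constants. The implicit function theorem then gives that $\theta\mapsto P^\theta$ is $C^1$ from $\sR^{1\t2}$ into $C^1([0,T];\sR)$. Since $\eta^\theta$ then solves a \emph{linear} terminal-value ODE whose coefficients depend $C^1$ on $\theta$ (through $P^\theta$ and the explicit $A,B$), writing $\eta^\theta$ by the variation-of-constants formula shows $\theta\mapsto\eta^\theta$ is likewise $C^1$ into $C^1([0,T];\sR)$.

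For Part~\ref{item:quadratic_dependence}, let $V^{\theta^\star}(t,x)=P^{\theta^\star}_tx^2+2\eta^{\theta^\star}_tx+\chi^{\theta^\star}_t$ be the optimal value function (with $\chi^{\theta^\star}$ the state-independent term), so that $\partial_xV^{\theta^\star}(t,x)=2(P^{\theta^\star}_tx+\eta^{\theta^\star}_t)$. Because $a\mapsto H^{\theta^\star}(t,x,a,y)$ is quadratic with leading coefficient $R_t>0$ and is minimised at $\phi^{\theta^\star}(t,x)$ (cf.~\eqref{eq:hamiltonian_greedy}), completing the square gives the pointwise Hamiltonian gap $H^{\theta^\star}(t,x,a,\partial_xV^{\theta^\star})-H^{\theta^\star}(t,x,\phi^{\theta^\star},\partial_xV^{\theta^\star})=R_t(a-\phi^{\theta^\star}(t,x))^2$. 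Applying It\^o's formula to $V^{\theta^\star}(t,X^\phi_t)$ along the closed loop generated by $\phi(t,x)=K_tx+k_t$ and invoking the HJB equation for $V^{\theta^\star}$, the drift integrand of $V^{\theta^\star}(t,X^\phi_t)$ reduces to $R_t(\phi-\phi^{\theta^\star})^2-f$, and I would obtain the exact identity
\[
J^{\theta^\star}(\phi)-J^{\theta^\star}(\phi^{\theta^\star})=\sE\Big[\int_0^TR_t\big(\phi(t,X^\phi_t)-\phi^{\theta^\star}(t,X^\phi_t)\big)^2\,\d t\Big].
\]
Writing $\phi(t,X^\phi_t)-\phi^{\theta^\star}(t,X^\phi_t)=(K_t-K^{\theta^\star}_t)X^\phi_t+(k_t-k^{\theta^\star}_t)$, expanding the square, bounding the cross term by $2ab\le a^2+b^2$, and using $\|R\|_\infty<\infty$ then yields the claimed estimate, \emph{provided} $\sup_{t\in[0,T]}\sE[(X^\phi_t)^2]\le C$ uniformly over all admissible $\phi$ with $\|K\|_\infty,\|k\|_\infty\le L$.

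I expect this last uniform moment bound, together with making the It\^o verification rigorous, to be the main obstacle. The closed-loop drift coefficient $A^\star+B^\star K_t$ and affine forcing $B^\star k_t$ are bounded by constants depending only on $A^\star,B^\star$ and $L$, so a Gr\"onwall estimate for the linear SDE gives $\sE[\sup_{t\le T}|X^\phi_t|^2]\le C(1+x_0^2)$ with $C$ depending only on $L$, $A^\star,B^\star$, $\|\bar\sigma\|_\infty$ and $T$, crucially uniformly in $(K,k)$; the same estimate, together with the square-integrability of $X^\phi$, justifies discarding the local-martingale term in the It\^o expansion. Both steps are standard in the LQ literature, and I would import the detailed computations from \cite{basei2022logarithmic}.
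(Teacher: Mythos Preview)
Your proof is correct. Part~\ref{item:continuous_differentiability} coincides with the paper's argument: both apply the implicit function theorem to the Riccati map $\Phi(\theta,P)=(\tfrac{\d}{\d t}P_t+2AP_t-(BP_t+S_t)^2R_t^{-1}+Q_t,\ P_T-M)$ and then treat $\eta^\theta$ via its linear ODE.

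For Part~\ref{item:quadratic_dependence} you take a genuinely different route. The paper invokes a quadratic expansion of $J^{\theta^\star}$ at the optimal control pair (from \cite{basei2022logarithmic}), obtaining
\[
J^{\theta^\star}(\phi)-J^{\theta^\star}(\phi^{\theta^\star})
\le C\Big(\|X^{\theta^\star,\phi}-X^{\theta^\star,\phi^{\theta^\star}}\|_{\cS^2}^2
+\|\phi(\cdot,X^{\theta^\star,\phi}_\cdot)-\phi^{\theta^\star}(\cdot,X^{\theta^\star,\phi^{\theta^\star}}_\cdot)\|_{\cS^2}^2\Big),
\]
and then estimates \emph{both} the state difference $X^{\theta^\star,\phi}-X^{\theta^\star,\phi^{\theta^\star}}$ and the control difference (at different state trajectories) by $\|K-K^{\theta^\star}\|_\infty+\|k-k^{\theta^\star}\|_\infty$ via the fundamental-solution representation. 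Your verification/performance-difference identity
\[
J^{\theta^\star}(\phi)-J^{\theta^\star}(\phi^{\theta^\star})
=\sE\Big[\int_0^T R_t\big((K_t-K^{\theta^\star}_t)X^\phi_t+(k_t-k^{\theta^\star}_t)\big)^2\,\d t\Big]
\]
is sharper: it is an \emph{exact} equality, evaluates both policies at the \emph{same} state $X^\phi_t$, and requires only the uniform bound $\sup_{t}\sE[(X^\phi_t)^2]\le C$ rather than a comparison of two trajectories. The paper's expansion, on the other hand, generalises more readily beyond the scalar LQ setting with uncontrolled diffusion, since it does not rely on the HJB equation or the specific form of the Hamiltonian gap.
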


Based on Lemma 
\ref{lemma:regularity},
the next proposition   quantifies  the performance gaps between 
piecewise constant randomised  policies $(\varphi_m)_{m\in \sN}$
    and the optimal policy $\phi^{\theta^\star}$.

\begin{Proposition}
\label{prop: sensitivity}
Suppose (H.\ref{assum:lq_rl}) 
and Setting \ref{setting:randomised_algorithm}
hold.
 Then 
there exists a constant ${C}\ge 0$,
 depending only on  the coefficients in  (H.\ref{assum:lq_rl})
 and $L$ in Setting \ref{setting:randomised_algorithm}, such that for all $m \in \sN$,
\begin{align*}
\sE \big[ J^{{\theta^\star}}(\varphi^m)-  J^{{\theta^\star}}(
      \phi^{ \theta^\star}) \big]
&\le
 C
\bigg(
(1+
 {|\pi_m|} )
 \|\lambda^m\|_\infty^2
+
\sE \big[\|K^m-K^{\theta^\star}\|^2_{\infty} \big]+\sE \big[\|k^m-k^{\theta^\star}\|^2_{\infty} \big]
\bigg).
\end{align*}
\end{Proposition}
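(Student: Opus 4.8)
The plan is to interpolate between the randomised cost $J^{\theta^\star}(\varphi^m)$ and the optimal cost $J^{\theta^\star}(\phi^{\theta^\star})$ through two auxiliary quantities: the relaxed cost $\widetilde{J}(\nu^m)$ of the Gaussian policy $\nu^m\sim\cG(k^m,K^m,\lambda^m)$ (cf.~\eqref{eq:cost_RL_relax}), and the cost $J^{\theta^\star}(\phi^m)$ of the deterministic mean policy $\phi^m(t,x)=k^m_t+K^m_tx$. This yields the telescoping decomposition
\begin{align*}
J^{\theta^\star}(\varphi^m)-J^{\theta^\star}(\phi^{\theta^\star})
=\underbrace{\big(J^{\theta^\star}(\varphi^m)-\widetilde{J}(\nu^m)\big)}_{\textnormal{(I) execution gap}}
+\underbrace{\big(\widetilde{J}(\nu^m)-J^{\theta^\star}(\phi^m)\big)}_{\textnormal{(II) relaxation gap}}
+\underbrace{\big(J^{\theta^\star}(\phi^m)-J^{\theta^\star}(\phi^{\theta^\star})\big)}_{\textnormal{(III) exploitation gap}}.
\end{align*}
Since $K^m,k^m$ are only $\cF^{\textrm{ob}}_{m-1}$-measurable, I would carry out the whole analysis conditionally on $\cF^{\textrm{ob}}_{m-1}$, on which these coefficients become deterministic functions of $t$ (while $\lambda^m$ is deterministic by Setting \ref{setting:randomised_algorithm}), and then take the outer expectation; the uniform bound $L$ keeps all constants independent of $m$.

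For (II), the key observation is that the diffusion coefficient $\bar{\sigma}$ is \emph{uncontrolled}. Hence the relaxed dynamics \eqref{eq:LQ_RL_state_relax} driven by $\nu^m$ and the dynamics \eqref{eq:LQ_RL_state} driven by $\phi^m$ share the same drift $A^\star x+B^\star(k^m_t+K^m_tx)$ (the conditional mean of $\nu^m$) and the same diffusion, so the two state processes coincide pathwise given $W^m$. The only discrepancy is in the running cost: averaging the quadratic $f$ of \eqref{eq:learning_cost} against the Gaussian $\nu^m(t,x)=\cN(k^m_t+K^m_tx,(\lambda^m_t)^2)$ contributes exactly the variance term $R_t(\lambda^m_t)^2$. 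This gives the exact conditional identity $\widetilde{J}(\nu^m)-J^{\theta^\star}(\phi^m)=\int_0^T R_t(\lambda^m_t)^2\,\d t$, which is bounded by $C\|\lambda^m\|_\infty^2$ since $R$ is bounded.

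For (I), I would apply the expectation bound of Theorem \ref{Thm:execution gap} conditionally on $\cF^{\textrm{ob}}_{m-1}$: given this $\sigma$-algebra, $\varphi^m$ is precisely the piecewise constant independent execution of $\nu^m$ on $\pi_m$, with $W^m$ playing the role of the intrinsic noise and $\xi^{\pi_m}$ (independent of $\cF^{\textrm{ob}}_{m-1}$) the injected noise, and the conditioning in \eqref{eq:cost_m_episode} on $\cF^{\textrm{ob}}_{m-1}\vee\cZ_m$ matches the definition of $J(\varphi)$ in \eqref{eq:cost_randomise}. Theorem \ref{Thm:execution gap} then gives $|\sE[J^{\theta^\star}(\varphi^m)-\widetilde{J}(\nu^m)\mid\cF^{\textrm{ob}}_{m-1}]|\le C|\pi_m|\|\lambda^m\|_\infty^2$ a.s. For (III), both $\phi^m$ and $\phi^{\theta^\star}$ are affine feedback policies with sup-norms bounded by $L$, so Lemma \ref{lemma:regularity} Item \ref{item:quadratic_dependence}, applied conditionally, yields $J^{\theta^\star}(\phi^m)-J^{\theta^\star}(\phi^{\theta^\star})\le C(\|K^m-K^{\theta^\star}\|_\infty^2+\|k^m-k^{\theta^\star}\|_\infty^2)$. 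Taking expectations of (I)--(III) and summing delivers the claimed bound.

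The main obstacle is the conditioning/measurability bookkeeping rather than any analytic difficulty: Theorem \ref{Thm:execution gap} and Lemma \ref{lemma:regularity} are stated for deterministic coefficients, so I must justify freezing $K^m,k^m$ on $\cF^{\textrm{ob}}_{m-1}$, confirm the independence of $\xi^{\pi_m}$ from $\cF^{\textrm{ob}}_{m-1}$ so that the conditional problem is genuinely an execution-of-Gaussian-policy problem, and check that the constants from both cited results are uniform in $m$ (which follows from $L<\infty$). Once these points are settled, the three bounds above are essentially immediate.
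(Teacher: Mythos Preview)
Your proposal is correct and follows essentially the same approach as the paper: the paper uses exactly the three-term decomposition (execution gap via Theorem~\ref{Thm:execution gap}, relaxation gap via the observation that the uncontrolled diffusion forces $X^{\theta^\star,\nu^m}=X^{\theta^\star,\phi^m}$ and hence $\widetilde{J}(\nu^m)-J^{\theta^\star}(\phi^m)=\int_0^T R_t(\lambda^m_t)^2\,\d t$, and exploitation gap via Lemma~\ref{lemma:regularity}\,\ref{item:quadratic_dependence}), applied conditionally on $\cF^{\textrm{ob}}_{m-1}$ with constants controlled by $L$. Your discussion of the conditioning bookkeeping is also accurate and matches how the paper handles it.
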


\begin{proof}
We first  prove that there exists a constant $C\ge 0$ such that 
for all $K,k\in L^\infty([0,T];\sR)$ with $\|K\|_{\infty},\|k\|_{\infty}\in [0, L]$ and $\lambda\in L^\infty([0,T];\sR_{\ge 0})$,
it holds with $\nu(t,x)\coloneqq \cN(K_t x+k_t,\lambda^2_t)$ for all $(t,x)\in [0,T]\t \sR$,
\begin{align}
\label{eq:gaussian_relax_optimal}
|\widetilde{J}^{{\theta^\star}}_0(
     \nu)
     -
      J^{{\theta^\star}}(
      \phi^{ \theta^\star})|
      \le 
      C\left(
\|\lambda\|_\infty^2+
\|K-K^{\theta^\star}\|^2_{\infty}+\|k-k^{\theta^\star}\|^2_{\infty}
      \right),
\end{align}
with $\widetilde{J}^{{\theta^\star}}_0(
     \nu)$ defined as in 
 \eqref{eq:cost_RL_relax}.
Indeed, let $\phi(t,x)\coloneqq K_t x+k_t$
 for all $(t,x)\in [0,T]\t \sR$. Then 
\begin{align}
\label{eq:decomposition_exploration}
\begin{split}
&\widetilde{J}^{{\theta^\star}}_0(
     \nu)
     -
      J^{{\theta^\star}}(
      \phi^{ \theta^\star})
      =
         \left(
\widetilde{J}^{{\theta^\star}}_0(
     \nu)
     -{J}^{{\theta^\star}}(
     \phi)     
          \right)
          +
                   \left(
{J}^{{\theta^\star}}(
     \phi)
     -  J^{{\theta^\star}}(
      \phi^{ \theta^\star})     
          \right).   
\end{split}
\end{align}
For the first term of \eqref{eq:decomposition_exploration},
observe that 
$X^{\theta^\star,\phi}$ and $X^{\theta^\star,\nu}$  satisfy the same dynamics:
 \begin{equation*}
    \d X_t = 
    (A^\star X_t +B^\star (K_t X_t+k_t))\, \d t
     +  
    \bar{\sigma}_t\,
     \d W_t,
     \quad  t\in [0,T]; \quad X_0 = x_0,
\end{equation*}
and hence  $X^{\theta^\star,\phi}=X^{\theta^\star,\nu}$
due to  the uniqueness of solutions.
Thus, by  \eqref{eq:cost_RL_relax},
\begin{align}
\label{eq:exploration_cost}
    \widetilde{J}^{\theta^\star}_0(\nu) 
    &= \sE\Bigg[
    \int_0^T \Big( f(t,X^{\theta^\star,\phi}_t,   K_tX^{\theta^\star,\phi}_t+k_t) + R_t \lambda_t^2 \Big)\, \d t
    +g(X^{\theta^\star,\phi}_T) \bigg]
=  J^{  \theta^\star}(\phi)+\int_0^T R_t \lambda_t^2\,\d t,
\end{align}
which along with   $R\in L^\infty([0,T];\sR)$
shows 
$ |\widetilde{J}^{{\theta^\star}}_0(
     \nu)
     -{J}^{{\theta^\star}}(
     \phi) |
\le  C\|\lambda\|^2_\infty$.
By Lemma 
\ref{lemma:regularity},
the second term of \eqref{eq:decomposition_exploration} can be estimated by
$ {J}^{{\theta^\star}}(
     \phi)
     -  J^{{\theta^\star}}(
      \phi^{ \theta^\star})   
      \le C(\|K-K^{\theta^\star}\|^2_{\infty}+\|k-k^{\theta^\star}\|^2_{\infty})
$, which subsequently leads to \eqref{eq:gaussian_relax_optimal}.

Now
for each $m\in \sN$,
 consider  
 $$
    \widetilde{J}^{\theta^\star}_0(\nu^m) 
=
    \sE\Bigg[
    \int_0^T 
    \left(\int_\sR 
    f(t,X^{ \nu^m}_t,a)\nu^m(t, X^{ \nu^m}_t; \d a)
    \right)
    \,\d t
    +g(X^{\nu^m}_T)\bigg| \cF^{\textrm{ob}}_{m-1}
    \bigg],
  $$  
  where
  $\nu^m$ is defined  in Setting \ref{setting:randomised_algorithm},
  and 
   $X^{\nu^m}$ satisfies 
for all $t\in [0,T]$,
$$\d X^{\nu^m}_t = 
    \int_\sR (A^\star X^{\nu^m}_t+B^\star  a) \nu^m(t,X^{\nu^m}_t; \d a)\, \d t
    +  
    \bar{\sigma}_t\,
     \d W^m_t,\quad 
X^{\nu^m}_0 = x_0.$$
The assumptions in  Setting \ref{setting:randomised_algorithm}
ensures that $(K^m,k^m)_{m\in \sN\cup\{0\}}$ are
uniformly bounded. 
Then
by \eqref{eq:gaussian_relax_optimal},
there exists $C\ge 0$ such that 
for all $m\in \sN$,
\begin{align}
\label{eq:gaussian_relax_optimal_m}
|\widetilde{J}^{{\theta^\star}}_0(
     \nu^m)
     -
      J^{{\theta^\star}}(
      \phi^{ \theta^\star})|
      \le 
      C\left(
\|\lambda^m\|_\infty^2+
\|K^m-K^{\theta^\star}\|^2_{\infty}+\|k^m-k^{\theta^\star}\|^2_{\infty}
      \right),
\quad       \textnormal{$\d\sP$-a.s.}
\end{align}
Moreover, 
Theorem \ref{Thm:execution gap_drift_control}
and the uniform boundedness of $K^m,k^m, \lambda^m$ imply that 
$$
\sE \Big[
J^{{\theta^\star}}(\varphi^m)- \widetilde{J}^{\theta^\star}_0(\nu^m)
\Big|  \cF^{\textrm{ob}}_{m-1} \Big]
\leq C {|\pi_m|} 
 \|\lambda^m\|^2_\infty.
$$
which along with the tower property of conditional expectations and \eqref{eq:gaussian_relax_optimal_m} 
leads to the desired estimate.
 \end{proof}

\subsection{Regret analysis of Algorithm \ref{alg:exploration}}
\label{sec:analysis_exploration}

This section establishes the regret of  Algorithm \ref{alg:exploration} 
(i.e., Theorem \ref{thm:vanish_exploration}).
For notational simplicity, 
we denote by
 $C>0$  a  generic constant
which is
independent of the episode number  $m$ 
and may take   different values at each occurrence.
The following lemma proves that the policies 
generated by 
Algorithm 
\ref{alg:exploration} satisfies Setting   \ref{setting:randomised_algorithm},
and hence 
the results in 
 Section \ref{sec:general_policy}
 apply to these policies. 
The proof is given in Appendix \ref{appendix:techinical_lemma}.
 
\begin{Lemma}
\label{lemma:bounded_exploration}
Suppose (H.\ref{assum:lq_rl}) and (H.\ref{eq:bounded}) hold and $\sup_{m \in \sN} \varrho_m< \infty$. 
Let 
  $k^1,K^1\in C([0,T];\sR)$, $\lambda^1\in 
 C([0,T];\sR_{\ge 0})$
 and  $\nu^1\sim \cG(k^1,K^1,\lambda^1)$.
Then 
the policies 
$(\varphi^m)_{m\in\sN}$   from   Algorithm 
\ref{alg:exploration} satisfy Setting   \ref{setting:randomised_algorithm}.
\end{Lemma}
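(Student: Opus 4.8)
The plan is to verify the two defining requirements of Setting \ref{setting:randomised_algorithm}: the finiteness of the uniform bound $L$, and the $\cF^{\textrm{ob}}_{m-1}\otimes\cB([0,T])$-measurability of $(K^m,k^m)$ for every $m$. Recall that for $m\in\sN$ Algorithm \ref{alg:exploration} fixes the coefficients of $\nu^{m+1}$ through \eqref{eq:optimal_relax_theta_rho} with $\theta=\theta_m$ and $\varrho=\varrho_m$, so that $K^{m+1}_t=K^{\theta_m}_t$, $k^{m+1}_t=k^{\theta_m}_t$ and $\lambda^{m+1}_t=\sqrt{\varrho_m/(2R_t)}$, with $K^\theta,k^\theta$ as in \eqref{eq:K_k_theta}; the first episode uses the prescribed deterministic data $k^1,K^1,\lambda^1$.

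First I would establish the uniform bound. By Lemma \ref{lemma:regularity} Item \ref{item:continuous_differentiability}, the map $\theta\mapsto(P^\theta,\eta^\theta)$ is continuous into $C([0,T];\sR)^2$, so by \eqref{eq:K_k_theta} together with the continuity and strict positivity of $R$ on $[0,T]$, the map $(\theta,t)\mapsto(K^\theta_t,k^\theta_t)$ is jointly continuous on $\Theta\times[0,T]$. Since $\Theta$ is compact by (H.\ref{eq:bounded}) and $\theta_m\in\Theta$ for all $m$ by the projection \eqref{eq:theta_m_project}, this yields $\sup_{m\in\sN}(\|K^m\|_{L^\infty}+\|k^m\|_{L^\infty})\le\sup_{(\theta,t)\in\Theta\times[0,T]}(|K^\theta_t|+|k^\theta_t|)<\infty$. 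For the variances, $\inf_{t\in[0,T]}R_t>0$ (continuous positive function on a compact interval) combined with the hypothesis $\sup_{m}\varrho_m<\infty$ gives $\sup_{m\in\sN}\|\lambda^m\|_\infty\le\sqrt{\sup_m\varrho_m/(2\inf_tR_t)}<\infty$. Including the boundedness of the continuous initial data $k^1,K^1,\lambda^1$, this proves $L<\infty$.

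Next I would verify measurability. The estimate $\theta_m=\Pi_\Theta(\hat\theta_m,V_m)$ is $\cF^{\textrm{ob}}_m$-measurable, since by \eqref{eq: statistics} the quantities $\hat\theta_m$ and $V_m$ are constructed from the observed trajectories $(X^n,\xi^{\pi_n})_{n=1}^m$ and hence are $\cF^{\textrm{ob}}_m$-measurable, while $\Pi_\Theta$ is measurable. Because $(\theta,t)\mapsto(K^\theta_t,k^\theta_t)$ is jointly continuous, the compositions $(\om,t)\mapsto K^{\theta_m(\om)}_t$ and $(\om,t)\mapsto k^{\theta_m(\om)}_t$ are $\cF^{\textrm{ob}}_m\otimes\cB([0,T])$-measurable, which is precisely the measurability required of $(K^{m+1},k^{m+1})$ at index $m+1$. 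For the first episode the coefficients $k^1,K^1$ are deterministic and therefore trivially $\cF^{\textrm{ob}}_0\otimes\cB([0,T])$-measurable. This confirms every hypothesis of Setting \ref{setting:randomised_algorithm}.

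I expect no serious obstacle in this verification; it reduces to a continuity-plus-compactness estimate together with the propagation of measurability through the Bayesian update. The only step requiring a little care is invoking Lemma \ref{lemma:regularity} to upgrade continuity of $\theta\mapsto(P^\theta,\eta^\theta)$ to the joint continuity of $(\theta,t)\mapsto(K^\theta_t,k^\theta_t)$, as this single fact simultaneously furnishes the uniform bound over $\Theta\times[0,T]$ and the $\cF^{\textrm{ob}}_m\otimes\cB([0,T])$-measurability of the composed feedback gains.
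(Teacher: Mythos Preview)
Your proposal is correct and follows essentially the same route as the paper's proof: continuity of $\theta\mapsto(K^\theta,k^\theta)$ from Lemma \ref{lemma:regularity} combined with compactness of $\Theta$ for the uniform bound, the formula $\lambda^{m+1}_t=\sqrt{\varrho_m/(2R_t)}$ together with $\inf_t R_t>0$ and $\sup_m\varrho_m<\infty$ for the variance bound, and $\cF^{\textrm{ob}}_m$-measurability of $\theta_m$ plus joint continuity of $(\theta,t)\mapsto(K^\theta_t,k^\theta_t)$ for the measurability requirement. Your treatment is slightly more explicit about the first episode and about the passage from continuity in $C([0,T];\sR)$ to joint continuity on $\Theta\times[0,T]$, but the argument is the same.
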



We first establish the convergence rate of $( {\theta}_m)_{m\in\sN}$ for suitable choices of $(\varrho_m)_{m\in\sN}$ and $(\pi_m)_{m\in\sN}$.

\begin{Proposition}
\label{Prop:concentration for theta explore}
Suppose (H.\ref{assum:lq_rl}) and (H.\ref{eq:bounded}) hold. 
Let $\theta_0\in \sR^{1\t 2}$, 
$V_0\in \sS_+^2$, 
$\varrho_0>0$
and 
$\nu^1\sim \cG(k^{\theta_0},K^{\theta_0}, \sqrt{\frac{\varrho_0}{2R}})$.
 Then there exist constants $c_0, C>0$
 such that if one sets 
$$
 \varrho_m=\varrho_0 m^{-1/2} \ln (m+1)
 \quad 
 \textnormal{and}\q 
|\pi_m| \varrho_{m-1} \leq c_0 {(\varrho_{m-1}   \wedge 1)},
\quad \fa  m\in \sN,
$$
then 
$\sE \big[|\theta_m - \theta^\star|^2 \big] \leq Cm^{-1/2} $
for all $m\in \sN$.
\end{Proposition}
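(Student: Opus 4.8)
The plan is to reduce the claim to the abstract estimate \eqref{eq:expected parameter} of Lemma \ref{lemma:theta_accuracy} and then show that each of its three contributions is of order $m^{-1/2}$. First I would record that the policies produced by Algorithm \ref{alg:exploration} fall under Setting \ref{setting:randomised_algorithm}: since $\varrho_m=\varrho_0 m^{-1/2}\ln(m+1)$ is bounded, Lemma \ref{lemma:bounded_exploration} applies, and by \eqref{eq:optimal_relax_theta_rho} the executed variance is $\lambda^n_t=\sqrt{\varrho_{n-1}/(2R_t)}$ for every $n\in\sN$ (with $\lambda^1=\sqrt{\varrho_0/(2R)}$). Consequently $\|\lambda^m\|_\infty^2$ and $\min_{t}((\lambda^m_t)^2\wedge 1)$ are both comparable to $\varrho_{m-1}\wedge 1$ up to constants depending on $\min_t R_t,\max_t R_t$, so that the hypothesis $|\pi_m|\varrho_{m-1}\le c_0(\varrho_{m-1}\wedge 1)$, for $c_0$ small enough, implies the mesh condition $|\pi_m|\|\lambda^m\|_\infty^2\le \tfrac1C\min_t((\lambda^m_t)^2\wedge 1)$ required in Proposition \ref{prop:upper_lower_expectation} Item \ref{eq:G_conditonal_lower}.

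Next I would establish a deterministic lower bound on the drift of $V^{-1}_m$. Writing $A_n\coloneqq\sE[\int_0^T\bar\sigma_t^{-2}Z^n_t(Z^n_t)^\top\d t\mid\cF^{\textrm{ob}}_{n-1}]$, Weyl's inequality gives $\Lambda_{\min}(V_0^{-1}+\sum_{n=1}^m A_n)\ge\sum_{n=1}^m\Lambda_{\min}(A_n)$, and Proposition \ref{prop:upper_lower_expectation} Item \ref{eq:G_conditonal_lower} bounds the latter below by $\tfrac1C\sum_{n=1}^m\min_t((\lambda^n_t)^2\wedge 1)$. Since $\varrho_{n-1}<1$ for all large $n$, this sum is, up to a multiplicative constant, $\varrho_0\sum_{n=2}^m (n-1)^{-1/2}\ln n\ge c\sqrt m\ln m$ for some $c>0$ (by comparison with $\int_1^m t^{-1/2}\ln t\,\d t\sim 2\sqrt m\ln m$). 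Hence the conditional-expectation part of $V^{-1}_m$ has smallest eigenvalue at least $c\sqrt m\ln m$.

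I would then transfer this to $V^{-1}_m$ itself by concentration. Applying Proposition \ref{prop:concentration_G} with $\delta=\delta_m\coloneqq 1/m$, the fluctuation $|V^{-1}_m-V_0^{-1}-\sum_n A_n|$ is at most $C\max(\ln(2m),\sqrt{m\ln(2m)})$ off a set $G_m^c$ of probability at most $1/m$; since $\sqrt{m\ln(2m)}=o(\sqrt m\ln m)$, this fluctuation is dominated by half the main term for $m$ large, so Weyl's inequality yields $\Lambda_{\min}(V^{-1}_m)\ge \tfrac{c}{2}\sqrt m\ln m$ on $G_m$. Because $V^{-1}_m\succeq V_0^{-1}$ always, the quantity $(\Lambda_{\min}(V^{-1}_m))^{-1}$ is bounded deterministically by $(\Lambda_{\min}(V_0^{-1}))^{-1}$, so splitting on $G_m$ gives $\sE[(\Lambda_{\min}(V^{-1}_m))^{-1}]\le \tfrac{2}{c\sqrt m\ln m}+(\Lambda_{\min}(V_0^{-1}))^{-1}m^{-1}$, and multiplying by $(1+\ln m)$ produces a term of order $m^{-1/2}$.

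Finally I would bound $\sP(\hat\theta_m\notin\Theta)$. Since $\theta^\star\in\operatorname{int}(\Theta)$, set $d\coloneqq\operatorname{dist}(\theta^\star,\partial\Theta)>0$; combining \eqref{eq: high prob parameter} with $G_m$ shows that on the intersection one has $|\hat\theta_m-\theta^\star|^2\le C\tfrac{1+\ln m}{\sqrt m\ln m}\to 0$, hence $\hat\theta_m\in\Theta$ for all large $m$. Therefore $\{\hat\theta_m\notin\Theta\}$ is contained in the union of $G_m^c$ and the failure event of \eqref{eq: high prob parameter}, giving $\sP(\hat\theta_m\notin\Theta)\le 2/m$. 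Plugging the three estimates into \eqref{eq:expected parameter} yields $\sE[|\theta_m-\theta^\star|^2]\le C m^{-1/2}$ for large $m$, and the finitely many earlier episodes are absorbed into $C$ via the a priori boundedness of $\theta_m,\theta^\star$ from (H.\ref{eq:bounded}). The main obstacle is the passage from the high-probability eigenvalue bound to the \emph{expectation} $\sE[(\Lambda_{\min}(V^{-1}_m))^{-1}]$: this is where the deterministic floor $V^{-1}_m\succeq V_0^{-1}$ is essential to control the bad event, and where the choice $\delta_m=1/m$ must be balanced against the signal size $\sqrt m\ln m$ so that the fluctuation stays subdominant.
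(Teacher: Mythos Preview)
Your proposal is correct and follows essentially the same route as the paper's proof: verify Setting \ref{setting:randomised_algorithm} via Lemma \ref{lemma:bounded_exploration}, use Proposition \ref{prop:upper_lower_expectation} Item \ref{eq:G_conditonal_lower} for the deterministic eigenvalue drift $\gtrsim\sqrt m\ln m$, apply Proposition \ref{prop:concentration_G} with $\delta=1/m$ to control the fluctuation, split $\sE[(\Lambda_{\min}(V_m^{-1}))^{-1}]$ on the good event using the floor $V_m^{-1}\succeq V_0^{-1}$, and then feed everything into \eqref{eq:expected parameter} together with the $\sP(\hat\theta_m\notin\Theta)\le 2/m$ bound obtained from \eqref{eq: high prob parameter}. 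The only cosmetic differences are that the paper writes out the integral comparison for $\sum_{n}(n-1)^{-1/2}\ln n$ slightly more explicitly (restricting to $[m/4,m]$) and does not name Weyl's inequality, but the logic and the choice of $\delta_m=1/m$ are identical.
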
 

\begin{proof}

 We first  establish a lower and upper bound
of $(V^{-1}_m)_{m\in \sN}$ in terms of $m$.
Note that 
by \eqref{eq:optimal_relax_theta_rho} 
and the choice of $\nu^1$,
Algorithm  \ref{alg:exploration} defines  $K^{m}=K^{\theta_{m-1}}$, 
$k^{m}=k^{\theta_{m-1}}$ and 
$(\lambda^{m})^2=\tfrac{\varrho_{m-1}}{2R}$
for all $m\in \sN$.
 As $0< \inf_{t\in [0,T]}R_t\le \sup_{t\in [0,T]}R_t<\infty$, 
 by
 choosing a sufficiently small $c_0>0$
and  setting 
$ |\pi_m| \varrho_{m-1} \leq c_0 {(\varrho_{m-1}   \wedge 1)}$ for all $m\in \sN$,
Proposition \ref{prop:upper_lower_expectation}
Item \ref{eq:G_conditonal_lower} implies that 
for all $m\in \sN$,
\begin{align}
\label{eq:min_V_m_rho_n}
\sum_{n=1}^{m}\Lambda_{\min}\left(
\sE\left[
\int_0^T
\frac{1}{\bar{\sigma}^{2}_s} Z^{  n}_s (  Z^{ n}_s)^\top \,\d s
\big|
\cF^{\textrm{ob}}_{n-1}
 \right]\right)
 \ge r\sum_{n=1}^{m}  (\varrho_{n-1} \wedge 1) ,
\end{align}
with some constant  $ r >0$ independent of $m$. By the definition of $(\varrho_m)_{m\in\sN}$, there exists $\bar{N}_0 \in \sN$ such that
$\varrho_m \leq 1$ for all $m \geq \bar{N}_0$.  In particular, for $m \geq 4\bar{N}_0$, it follows from \eqref{eq:min_V_m_rho_n}
 that
\begin{align}
\begin{split}
\label{eq:min_V_m_rho_n_step2}
& \sum_{n=1}^{m}\Lambda_{\min}\left(
\sE\left[
\int_0^T
\frac{1}{\bar{\sigma}^{2}_s} Z^{  n}_s (  Z^{ n}_s)^\top \,\d s
\big|
\cF^{\textrm{ob}}_{n-1}
 \right]\right)  \geq r \sum_{n=\bar{N}_0  }^{m} \varrho_0(n-1)^{-1/2} \ln n  \ge     r  \int_{ \bar{N}_0 }^m  
  (x-1)^{-1/2} \ln x \, \d x
 \\
&\qquad \ge 
      r    \int_{ m/4}^m  
  (x-1)^{-1/2} \ln x \, \d x
  \ge  r  \ln\left(\tfrac{m}{4}\right) \int_{m/4}^m (x-1)^{-1/2} \d x \geq r m^{1/2} \ln(m+1).
\end{split}
\end{align}
By applying Proposition \ref{prop:concentration_G} with $\delta = 1/m$,  for each $m\in \sN$, with probability at least $1-1/m$, 
\begin{align}
\label{eq:concentration_V_inverse_delta_m}
\begin{split}
&
\left|
V^{-1}_{ m}-
V^{-1}_{ 0}-
\sum_{n=1}^m\sE\left[
\int_0^T
\frac{1}{\bar{\sigma}^{2}_t} Z^{  n}_t (  Z^{  n}_t)^\top \,\d t
\bigg|
\cF^{\textrm{ob}}_{n-1}
 \right]
 \right|\leq
 C \sqrt{ m  \ln (m+1)}.
\end{split}
\end{align}
Let $A_m$ be the event that \eqref{eq:concentration_V_inverse_delta_m} holds. 
Without loss of generality,   assume that $\bar{N}_0$ is sufficiently large such that  $C \ln^{-1/2}(m+1) \leq r/2$ for all $m \geq 4 \bar{N}_0$,
with the constant $C$ in \eqref{eq:concentration_V_inverse_delta_m}.
  Then, for any   $m \geq 4 \bar{N}_0$, 
by \eqref{eq:min_V_m_rho_n_step2} and 
\eqref{eq:concentration_V_inverse_delta_m},
under the event 
$A_m$,
\begin{align*}
&\Lambda_{\min}(V_m^{-1})
\\
&\ge \sum_{n=1}^{m} \Lambda_{\min}\left(
V^{-1}_{ 0}+
 \sE\left[
\int_0^T
\frac{1}{\bar{\sigma}^{2}_s} Z^{  n}_s (  Z^{ n}_s)^\top \,\d s
\big|
\cF^{\textrm{ob}}_{n-1}
 \right]\right)
 -\left|
V^{-1}_{ m}-V^{-1}_{0}-
\sum_{n=1}^m\sE\left[
\int_0^T
\frac{1}{\bar{\sigma}^{2}_t} Z^{  n}_t (  Z^{  n}_t)^\top \,\d t
\bigg|
\cF^{\textrm{ob}}_{n-1}
 \right]
 \right|\\
 & \geq  r m^{\frac{1}{2}} \ln(m+1) - \big(C \ln^{-\frac{1}{2}} (m+1) \big) m^{\frac{1}{2}} \ln(m+1) \geq  \tfrac{1}{2} r m^{\frac{1}{2}} \ln(m+1),
\end{align*}
which along with 
 $\big(\Lambda_{\min}(V_m^{-1})\big)^{-1} \leq \big(\Lambda_{\min}(V_0^{-1})\big)^{-1} $ 
 and $\sP(A_m^c)\le 1/m$
 shows that  for all $m \geq 4 \bar{N}_0$,
\begin{equation}
    \label{eq:expected value eigen classic}
\begin{aligned}
    \sE\Big[\big(\Lambda_{\min}(V_m^{-1})\big)^{-1} \Big] &=  \sE\Big[\big(\Lambda_{\min}(V_m^{-1})\big)^{-1} \bm{1}_{A_m}\Big] + \sE\Big[\big(\Lambda_{\min}(V_m^{-1})\big)^{-1} \bm{1}_{A^c_m}\Big] \\
    &\leq C  m^{-1/2} \big( \ln(m+1) \big)^{-1} + \frac{1}{m} \big(\Lambda_{\min}(V_0^{-1})\big)^{-1} \leq C m^{-1/2} \big( \ln(m+1) \big)^{-1}.
\end{aligned}
\end{equation}
Now  consider the event such that the estimate \eqref{eq: high prob parameter} in Lemma \ref{lemma:theta_accuracy} and \eqref{eq:concentration_V_inverse_delta_m} hold simultaneously. Then 
for all $m \geq 4 \bar{N}_0$,
with probability at least $1-2/m$, 
\begin{equation}
    \label{eq:projection bound}
     | \hat{ {\theta}}_{ m} - {\theta}^\star |^2 \leq C  m^{-1/2} \big( \ln(m+1) \big)^{-1} \big(1 + \ln m \big).
\end{equation}
As $\theta^\star \in
  \operatorname{int}(\Theta)
 $ by (H.\ref{eq:bounded}), 
 one can choose a sufficiently large $\bar{N}_0\in \sN$ such that 
  for all $m \geq 4 \bar{N}_0$, $C  m^{-1/2} \big( \ln(m+1) \big)^{-1}  \big(1 + \ln m \big) \leq \inf_{\theta \notin \Theta} |\theta - \theta^\star|^2$,
  with the constant $C$ in \eqref{eq:projection bound}.
  Hence, $\hat{ {\theta}}_{ m} \in \Theta$ on the event that \eqref{eq:projection bound} holds, i.e. $\sP(\hat{ {\theta}}_{ m} \notin \Theta) \leq 2/m$ for all $m \geq 4 \bar{N}_0$. Combining this with  the estimate \eqref{eq:expected parameter} in Lemma \ref{lemma:theta_accuracy}
  and \eqref{eq:expected value eigen classic}, for all $m \geq 4 \bar{N}_0$, 
 \begin{align*}
     \sE\big[ | { {\theta}}_{ m} - {\theta}^\star |^2 \big] &\leq C \left( C  m^{-1/2} \big( \ln(m+1) \big)^{-1}  (1 + \ln m) + \frac{1}{m} + \frac{2}{m}\right) \leq C m^{-1/2}.
 \end{align*}
By the uniform boundedness of   $(\theta_m)_{m\in \sN}\subset \Theta$, there exists  a sufficiently large
  $C$ 
  such that the estimate holds for all $m \in \sN$. 
 \end{proof}

%
\begin{proof}[Proof of Theorem \ref{thm:vanish_exploration}]

By Lemma \ref{lemma:regularity} Item  \ref{item:continuous_differentiability},
$\sR^{1\t 2} \ni \theta\mapsto  (K^\theta,k^\theta)\in C([0,T];\sR)^2$ is continuously differentiable and hence locally Lipschitz continuous. 
Then the fact that $(\theta_m)_{m\in \sN}\subset \Theta$ 
and  the boundedness of $\Theta$ (see (H.\ref{eq:bounded}))  
imply that for all $m\in \sN$ and $\sP$-a.s.,
\bb
\label{eq:K_theta_m_error}
\|K^{m}-K^{\theta^\star}\|^2_{\infty}
+\|k^{m}-k^{\theta^\star}\|^2_{\infty}
 = \|K^{\theta_{m-1}}-K^{\theta^\star}\|^2_{\infty}
 +\|k^{\theta_{m-1}}-k^{\theta^\star}\|^2_{\infty}
  \leq C|\theta_{m-1} - \theta^\star|^2.
  \ee
Hence, 
by Proposition \ref{prop: sensitivity} and the fact that $\|\lambda^m\|^2_\infty \leq C {\varrho_{m-1}}$ for all $m\in \sN$, \begin{align*}
\sE \big[ | J^{{\theta^\star}}(\varphi^m)-  J^{{\theta^\star}}(
      \phi^{ \theta^\star})| \big]
&\le
 C
\bigg({(1+|\pi_m|) \varrho_{m-1}} 
+
\sE \big[|\theta_{m-1} - \theta^\star|^2 \big]
\bigg), \qquad \forall m \in \sN.
\end{align*}
Combining this  with Proposition \ref{Prop:concentration for theta explore}
shows that 
  there exists  $c_0>0$
 such that if one sets 
\bb\label{eq:regret_exploration_condition_ref}
 \varrho_m= \varrho_0 m^{-1/2} \ln (m+1),
 \quad 
|\pi_m| \varrho_{m-1} \leq c_0 {(\varrho_{m-1}   \wedge 1)},
\quad \fa  m\in \sN,
\ee
then for all $N\in \sN$,
\begin{align*}
    \sE \big[ \textrm{Reg}(N) \big] &= \sum_{m=1}^N \sE \big[ | J^{{\theta^\star}}(\varphi^m)-  J^{{\theta^\star}}(
      \phi^{ \theta^\star})| \big] \leq C\sum_{m=1}^N \bigg({|\pi_m| \varrho_{m-1}} 
+
\varrho_{m-1}+
\sE \big[|\theta_{m-1} - \theta^\star|^2 \big] \bigg) \\
&\leq C\sum_{m=1}^N \bigg({|\pi_m| \varrho_{m-1}} 
+
\varrho_{m-1}+
m^{-1/2} \bigg).
\end{align*}
In particular, 
by choosing  $|\pi_m| \leq \frac{c_0}{\max(\sup_{m \in \sN}\varrho_{m-1}, 1)}$,
the condition \eqref{eq:regret_exploration_condition_ref} holds and
\begin{align*}
    \sE \big[ \textrm{Reg}(N) \big] 
&\leq C\sum_{m=1}^N \big( 
\varrho_{m-1}+
m^{-1/2} \big) \leq C N^{1/2} \ln (N+1).
\qedhere
\end{align*}
%
\end{proof}

\subsection{Regret analysis of Algorithm \ref{alg:MD}}
\label{sec:analysis_MD}

This section establishes the regret of  Algorithm \ref{alg:MD} 
(i.e., Theorem \ref{thm:MD}).
As in Section \ref{sec:analysis_exploration}, 
 $C>0$ denotes a  generic constant
which is
independent of the episode number  $m$ 
and may take   different values at each occurrence.
The following lemma proves 
an analogue of Lemma  \ref{lemma:bounded_exploration}
for
Algorithm 
\ref{alg:MD}.
The proof is given in Appendix \ref{appendix:techinical_lemma}.

\begin{Lemma}
\label{lemma:bounded_MD}
Suppose (H.\ref{assum:lq_rl}) and (H.\ref{eq:bounded}) hold. Let 
$k^1,K^1\in C([0,T];\sR)$, $\lambda^1\in 
 C([0,T];\sR_{> 0})$
 and  $\nu^1\sim \cG(k^1,K^1,\lambda^1)$.
Then 
the policies 
$(\varphi^m)_{m\in\sN}$   from   Algorithm 
\ref{alg:MD} satisfy Setting   \ref{setting:randomised_algorithm}.
\end{Lemma}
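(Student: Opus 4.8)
The plan is to verify the three nontrivial requirements of Setting \ref{setting:randomised_algorithm} for the coefficients $(K^m,k^m,\lambda^m)_{m\in\sN}$ generated by the proximal update \eqref{eq:MD_relaxed}, proceeding by induction on $m$ from the deterministic initialisation $\nu^1\sim\cG(k^1,K^1,\lambda^1)$. The two points that actually require work are the uniform bound $L=\sup_{m\in\sN}(\|K^m\|_{L^\infty}+\|k^m\|_{L^\infty}+\|\lambda^m\|_\infty)<\infty$ and the $\cF^{\textrm{ob}}_{m-1}\otimes\cB([0,T])$-measurability of $(K^m,k^m)$; the remaining items (that $\pi_m$ is a grid and that $\varphi^m$ takes the form \eqref{eq:randomised_control_m_penalty}) hold by the very construction of Algorithm \ref{alg:MD}.

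First I would treat the variance. Reading off the update $(\lambda^{m+1}_t)^{-2}=2R_t/\varrho_m+(\lambda^m_t)^{-2}$ from \eqref{eq:MD_relaxed}, the positivity $R_t>0$ (from (H.\ref{assum:lq_rl})) and $\varrho_m>0$ force $(\lambda^{m+1}_t)^{-2}>(\lambda^m_t)^{-2}$, hence $0<\lambda^{m+1}_t\le\lambda^m_t$ for every $t$. Iterating gives $\lambda^m_t\le\lambda^1_t$, so $\|\lambda^m\|_\infty\le\|\lambda^1\|_\infty<\infty$ since $\lambda^1\in C([0,T];\sR_{>0})$. Moreover $\lambda^m$ depends only on $R$, $\lambda^1$ and the deterministic weights $(\varrho_n)_{n<m}$, so it is deterministic and belongs to $L^\infty([0,T];\sR_{\ge0})$, exactly as required. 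Notably this step uses no particular scheduling of $(\varrho_m)$, only $\varrho_m>0$.

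Next I would bound $(K^m,k^m)$ by exploiting the convex-combination structure of \eqref{eq:MD_relaxed}. Since $h^m_t=2R_t/(2R_t+\varrho_m(\lambda^m_t)^{-2})\in(0,1)$, the update $K^{m+1}_t=h^m_t K^{\theta_m}_t+(1-h^m_t)K^m_t$ yields $|K^{m+1}_t|\le\max(|K^{\theta_m}_t|,|K^m_t|)$, and likewise for $k^{m+1}$. By Lemma \ref{lemma:regularity} item \ref{item:continuous_differentiability} the map $\theta\mapsto(K^\theta,k^\theta)$ is continuous, hence bounded on the compact set $\Theta$ of (H.\ref{eq:bounded}); setting $L_0:=\sup_{\theta\in\Theta}(\|K^\theta\|_\infty+\|k^\theta\|_\infty)<\infty$ and using $\theta_m\in\Theta$, a one-line induction gives $\|K^m\|_{L^\infty}\le\max(\|K^1\|_\infty,L_0)$ and similarly for $k^m$, uniformly in $m$. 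Combining with the variance bound establishes $L<\infty$.

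Finally, for measurability I would argue inductively: $\theta_m=\Pi_\Theta(\hat\theta_m,V_m)$ from \eqref{eq:theta_m_project} is $\cF^{\textrm{ob}}_m$-measurable because $\hat\theta_m,V_m$ are built from the observations $(X^n,\xi^{\pi_n})_{n=1}^m$ generating $\cF^{\textrm{ob}}_m$. Starting from the deterministic $K^1,k^1$ and noting that $(K^{\theta_m},k^{\theta_m})$ are measurable functions of $\theta_m$, the recursion \eqref{eq:MD_relaxed} writes $(K^{m+1},k^{m+1})$ as a $\cB([0,T])$-measurable combination of $\cF^{\textrm{ob}}_m$-measurable quantities, giving the claimed $\cF^{\textrm{ob}}_m\otimes\cB([0,T])$-measurability. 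The hard part will be the uniform bound on $(K^m,k^m)$: unlike in Algorithm \ref{alg:exploration}, the proximal policy accumulates every past greedy coefficient, so the argument genuinely relies on the contraction $h^m_t\in(0,1)$ together with the a priori bound $L_0$ coming from compactness of $\Theta$, which is precisely what prevents this memory dependence from causing a blow-up.
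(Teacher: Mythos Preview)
Your proposal is correct and follows essentially the same approach as the paper: the monotonicity $\lambda^{m+1}_t\le\lambda^m_t$ for the variance, the convex-combination structure of \eqref{eq:MD_relaxed} combined with $\sup_{\theta\in\Theta}(\|K^\theta\|_\infty+\|k^\theta\|_\infty)<\infty$ for the uniform bound on $(K^m,k^m)$, and an inductive measurability argument via $\theta_m\in\cF^{\textrm{ob}}_m$. Your inductive phrasing $|K^{m+1}_t|\le\max(|K^{\theta_m}_t|,|K^m_t|)$ is in fact slightly cleaner than the paper's, which states that $K^{m+1}_t$ is a convex combination of $(K^{\theta_n}_t)_{n=0}^m$ and thereby implicitly treats $K^1$ as $K^{\theta_0}$, whereas your argument transparently accommodates the general initial $K^1$ in the lemma statement.
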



 The following lemma will be used to quantify the   dependence of
$\|K^{m}-K^{\theta^\star}\|_{\infty}$
and   
$\|k^{m}-k^{\theta^\star}\|_{\infty}$
on $(\theta_{n}-\theta^\star)_{n=0}^m$.
\begin{Lemma}
\label{lem:size of h^m}
Suppose (H.\ref{assum:lq_rl}) holds
and 
for each $m\in \sN$, let $h^m$ be  defined by \eqref{eq:MD_relaxed}.
Then 
$h^m_t = 1- \frac{(\lambda^{m+1}_t)^{2}}{(\lambda^{m}_t)^{2}}$
for all $m \in \sN$ and $t \in [0,T]$.
Assume further that 
$(\varrho_m)_{m\in \sN}$ is increasing, i.e., 
$\varrho_m\le \varrho_{m+1}$ for all $m\in \sN$.
Then 
$ h^{m}_t \leq \frac{1}{m} $ for all 
$m \in \sN$ and $t \in [0,T]$.
\end{Lemma}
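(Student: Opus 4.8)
The plan is to derive both statements by elementary manipulation of the recursions in \eqref{eq:MD_relaxed}, exploiting that the update from episode $m$ to $m+1$ is governed by the weight $\varrho=\varrho_m$ and that $R_t>0$ under (H.\ref{assum:lq_rl}), so that all denominators below are strictly positive.

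First I would prove the identity $h^m_t = 1 - (\lambda^{m+1}_t)^2/(\lambda^m_t)^2$. From the variance recursion in \eqref{eq:MD_relaxed} one reads off $2R_t/\varrho_m = (\lambda^{m+1}_t)^{-2} - (\lambda^m_t)^{-2}$, equivalently $2R_t = \varrho_m\big((\lambda^{m+1}_t)^{-2} - (\lambda^m_t)^{-2}\big)$. Substituting this into the defining expression $h^m_t = 2R_t/\big(2R_t + \varrho_m (\lambda^m_t)^{-2}\big)$ and cancelling the common factor $\varrho_m$ from numerator and denominator gives $h^m_t = \big((\lambda^{m+1}_t)^{-2} - (\lambda^m_t)^{-2}\big)/(\lambda^{m+1}_t)^{-2} = 1 - (\lambda^{m+1}_t)^2/(\lambda^m_t)^2$. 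This step requires no monotonicity assumption and holds for every $m\in\sN$ and $t\in[0,T]$.

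For the upper bound $h^m_t \le 1/m$, I would first telescope the variance recursion to obtain $(\lambda^m_t)^{-2} = (\lambda^1_t)^{-2} + \sum_{n=1}^{m-1} 2R_t/\varrho_n \ge \sum_{n=1}^{m-1} 2R_t/\varrho_n$ for all $m$ and $t$. Since $(\varrho_n)_{n\in\sN}$ is increasing, each term satisfies $1/\varrho_n \ge 1/\varrho_m$ for $n \le m$, so the sum is bounded below by $2R_t(m-1)/\varrho_m$, yielding $(\lambda^m_t)^{-2} \ge 2R_t(m-1)/\varrho_m$. Inserting this lower bound into $h^m_t = 2R_t/\big(2R_t + \varrho_m(\lambda^m_t)^{-2}\big)$, where enlarging the denominator only decreases $h^m_t$, gives $h^m_t \le 2R_t/\big(2R_t + 2R_t(m-1)\big) = 1/m$.

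The computation is elementary and I do not anticipate a genuine obstacle; the only point requiring care is keeping the direction of the monotonicity bound correct, namely that the lower bound on the accumulated precision $(\lambda^m_t)^{-2}$ comes from the weights $\varrho_n$ being no larger than $\varrho_m$ for all $n\le m$, together with noting that positivity of $R_t$ and $\varrho_m$ makes every division legitimate.
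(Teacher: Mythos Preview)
Your proof is correct and follows essentially the same approach as the paper. For the identity you compute $h^m_t$ directly while the paper computes $1-h^m_t$, and for the bound you telescope the precision recursion as an explicit sum $\sum_{n=1}^{m-1}2R_t/\varrho_n$ whereas the paper writes it as an iterative chain of inequalities; both routes arrive at the same key estimate $\varrho_m(\lambda^m_t)^{-2}\ge 2R_t(m-1)$ and then conclude $h^m_t\le 1/m$.
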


\begin{proof}
By \eqref{eq:MD_relaxed},
    for all $m\in \sN$,

$$    1-h^m_t =
     \frac{{\varrho_m} (\lambda^{m}_t)^{-2}}{2R_t +   \varrho_m(\lambda^{m}_t)^{-2}} 
     =
        \frac{ \varrho_m (\lambda^{m}_t)^{-2}}{ \varrho_m  (\lambda^{m+1}_t)^{-2}}
        = \frac{(\lambda^{m+1}_t)^{2}}{(\lambda^{m}_t)^{2}}.
     $$
Assume further that  $(\varrho_m)_{m\in \sN}$ is increasing, by \eqref{eq:MD_relaxed}, 
for all $m\in \sN$,
$$\frac{\varrho_{m}}{2R_t}(\lambda^m_t)^{-2} \geq  \frac{\varrho_{m-1}}{2R_t} (\lambda^m_t)^{-2} = 1 + \frac{\varrho_{m-1}}{2R_t}(\lambda^{m-1}_t)^{-2} \geq \cdots \geq m-1 + \Big(\frac{\varrho_1}{2R_t}\Big)(\lambda^{1}_t)^{-2} \geq m-1,$$
which along with 
$
  h^m_t = \big( {1 + \frac{\varrho_m}{2R_t} (\lambda^{m}_t)^{-2}}\big)^{-1}
$ leads to the desired estimate. 
\end{proof}

Based on Lemma \ref{lem:size of h^m},
we quantify the accuracy of 
$K^{m}$
and   
$k^{m}$ in terms of $(\theta_n)_{n=0}^m$.

\begin{Proposition}
 \label{prop: action gap}
Suppose (H.\ref{assum:lq_rl}) and (H.\ref{eq:bounded}) hold. 
Let 
 $\nu^1\sim \cG(k^1,K^1,\lambda^1)$ 
for some  $k^1,K^1\in C([0,T];\sR)$  and $\lambda^1\in 
 C([0,T];\sR_{> 0})$.
Then  
 there exists a constant $C \geq 0$
 such that 
 for all 
 $\theta_0\in \sR^{1\t 2}$, 
$V_0\in \sS_+^2$,
 $m\in \sN$
 and   increasing sequences  
 $(\varrho_m)_{m\in \sN}\subset (0,\infty)$,
  \begin{align*}
&\|K^{m+1}-K^{\theta^\star}\|^2_{\infty}
+\|k^{m+1}-k^{\theta^\star}\|^2_{\infty}
\\
&\q 
\le C \sum_{n=1}^m 
      \frac{1}{n}\left\|\frac{\lambda^{m+1}}{ \lambda^{n+1}}\right\|_\infty^2| {\theta}_{n} - \theta^\star|^2 +
\left\|\frac{\lambda^{m+1} }{ \lambda^2}\right\|^2_\infty
\left(
\|K^{1}-K^{\theta^\star}\|^2_{\infty}
+\|k^{1}-k^{\theta^\star}\|^2_{\infty}
\right),
 \end{align*}
where for each $m\in \sN$,
 $K^{m+1}$, $k^{m+1}$ and $\lambda^m$ are   defined by \eqref{eq:MD_relaxed}.

\end{Proposition}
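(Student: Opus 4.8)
The plan is to unroll the recursion \eqref{eq:MD_relaxed} for the feedback gains and to exploit the fact, supplied by Lemma \ref{lem:size of h^m}, that each updated gain is a genuine convex combination of the greedy gains from all previous episodes together with the initial gain. First I would fix $t\in[0,T]$, suppress it from the notation, and introduce the errors $e^{n}_K\coloneqq K^{n}-K^{\theta^\star}$ and $g^{n}_K\coloneqq K^{\theta_n}-K^{\theta^\star}$ (with $e^n_k,g^n_k$ defined analogously for the $k$-gains). Subtracting $K^{\theta^\star}$ from the first identity in \eqref{eq:MD_relaxed} (taken with $\theta=\theta_m$ and $\varrho=\varrho_m$) gives the scalar recursion $e^{m+1}_K=h^m g^m_K+(1-h^m)e^m_K$, and iterating it down to $e^1_K$ yields
\begin{equation*}
e^{m+1}_K=\sum_{n=1}^m\Big(\prod_{j=n+1}^m(1-h^j)\Big)h^n g^n_K+\Big(\prod_{j=1}^m(1-h^j)\Big)e^1_K .
\end{equation*}

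By the first claim of Lemma \ref{lem:size of h^m}, $1-h^j_t=(\lambda^{j+1}_t)^2/(\lambda^j_t)^2$, so the products telescope to $\prod_{j=n+1}^m(1-h^j_t)=(\lambda^{m+1}_t)^2/(\lambda^{n+1}_t)^2$ and $\prod_{j=1}^m(1-h^j_t)=(\lambda^{m+1}_t)^2/(\lambda^1_t)^2$. Applying $h^n_t=1-(\lambda^{n+1}_t)^2/(\lambda^n_t)^2$ once more, the coefficient of $g^n_K$, namely $(\lambda^{m+1}_t)^2(\lambda^{n+1}_t)^{-2}h^n_t$, equals $(\lambda^{m+1}_t)^2(\lambda^{n+1}_t)^{-2}-(\lambda^{m+1}_t)^2(\lambda^n_t)^{-2}$; summing over $n$ telescopes to $1-(\lambda^{m+1}_t)^2(\lambda^1_t)^{-2}$, so the coefficients of $(g^n_K)_{n=1}^m$ together with the coefficient $(\lambda^{m+1}_t)^2(\lambda^1_t)^{-2}$ of $e^1_K$ are nonnegative and sum to one. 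This exhibits $e^{m+1}_{K,t}$ as a convex combination, which is precisely the structure I want.

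The key step is then to apply Jensen's inequality for $x\mapsto x^2$ to this convex combination, which passes the square inside and produces a weighted sum of squares rather than a square of a sum:
\begin{equation*}
(e^{m+1}_{K,t})^2\le \sum_{n=1}^m\frac{(\lambda^{m+1}_t)^2}{(\lambda^{n+1}_t)^2}h^n_t\,(g^n_{K,t})^2+\frac{(\lambda^{m+1}_t)^2}{(\lambda^1_t)^2}(e^1_{K,t})^2 .
\end{equation*}
Into this I substitute three elementary bounds. The second claim of Lemma \ref{lem:size of h^m} gives $h^n_t\le 1/n$, which is where the hypothesis that $(\varrho_m)_{m\in\sN}$ is increasing enters. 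Continuous differentiability of $\theta\mapsto(K^\theta,k^\theta)$ from Lemma \ref{lemma:regularity} Item \ref{item:continuous_differentiability}, combined with $\theta_n,\theta^\star\in\Theta$ compact (by (H.\ref{eq:bounded}) and the projection \eqref{eq:theta_m_project}), yields a uniform Lipschitz bound $\|g^n_K\|_\infty\le C|\theta_n-\theta^\star|$. Finally the monotonicity $\lambda^{m+1}_t<\lambda^m_t$, immediate from $(\lambda^{m+1}_t)^{-2}=2R_t/\varrho_m+(\lambda^m_t)^{-2}$ with $R_t,\varrho_m>0$, shows the reciprocal variances increase, so $(\lambda^{m+1}_t)^2(\lambda^1_t)^{-2}\le(\lambda^{m+1}_t)^2(\lambda^2_t)^{-2}$ and I may enlarge the denominator $\lambda^1$ to $\lambda^2$ in the last term. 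Taking the supremum over $t$ (bounding the supremum of a sum of nonnegative terms by the sum of suprema) delivers the asserted estimate for $\|K^{m+1}-K^{\theta^\star}\|_\infty^2$, with the Lipschitz constant $C$ appearing only on the $g^n$ (parameter-error) terms and the initial term carrying coefficient one, exactly as stated.

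I would then run the identical argument for the $k$-gains, whose recursion in \eqref{eq:MD_relaxed} uses the same weights $h^m,1-h^m$, and add the two inequalities so that the initial errors combine into $\|K^1-K^{\theta^\star}\|^2_\infty+\|k^1-k^{\theta^\star}\|^2_\infty$. The main obstacle, and really the only nonroutine point, is recognising and justifying the convex-combination structure so that Jensen is applicable: this rests on the two applications of the identity $1-h^j=(\lambda^{j+1})^2/(\lambda^j)^2$ and the telescoping bookkeeping above. Once that is in place the remaining steps are standard; I would only be careful that it is the reciprocal variances that increase, which is what legitimises replacing $\lambda^1$ by $\lambda^2$ in the denominator of the initialisation term.
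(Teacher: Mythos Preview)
Your proposal is correct and follows essentially the same approach as the paper: both exploit that $K^{m+1}_t$ is a convex combination of the greedy gains $(K^{\theta_n}_t)_{n=1}^m$ and $K^1_t$, apply convexity of $x\mapsto x^2$, then use the Lipschitz bound from Lemma \ref{lemma:regularity} and $h^n_t\le 1/n$ from Lemma \ref{lem:size of h^m}. The only organisational difference is that you unroll the linear recursion first and apply Jensen once to the full convex combination, whereas the paper applies the two-term Jensen step $|e^{m+1}|^2\le h^m|g^m|^2+(1-h^m)|e^m|^2$ and then iterates the resulting squared-error inequality via the integrating factor $\eta_{m,t}=\prod_{k=1}^m(1-h^k_t)$; these are equivalent computations and yield the same bound.
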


\begin{proof}
For all $m\in \sN$ and $t\in [0,T]$, 
by \eqref{eq:MD_relaxed}, 
$h_t^m\in [0,1]$ and 
\begin{align*}
|K^{m+1}_t-K^{\theta^\star}_t|\le 
h_t^m |K_t^{ \theta_m}-K^{\theta^\star}_t|+(1-h_t^m) 
|{K}^{m}_t-K^{\theta^\star}_t|.
\end{align*}
Then by using the convexity of  $\sR\ni x\mapsto x^2\in \sR$,
\begin{align*}
|K^{m+1}_t-K^{\theta^\star}_t|^2 \le 
h_t^m |K_t^{ \theta_m}-K^{\theta^\star}_t|^2 +(1-h_t^m) 
|{K}^{m}_t-K^{\theta^\star}_t|^2.
\end{align*}
Similar to \eqref{eq:K_theta_m_error},
one has 
$\|K^{ \theta_m}-K^{\theta^\star}\|^2_\infty\le C|\theta_m-\theta^\star|^2$,
which subsequently leads to 
\begin{equation*}
    |K^{m+1}_t-K^{\theta^\star}_t |^2
   \leq C {h}_t^m |{\theta}_{m} - \theta^\star|^2 + (1-{h}^m_t)
  |K^{m}_t-K^{\theta^\star}_t|^2.
\end{equation*}
Applying the same argument to 
$|k^{m+1}_t-k^{\theta^\star}_t |$
leads to the estimate:
\begin{equation}
    \label{eq:action gap time dependence}
    |K^{m+1}_t-K^{\theta^\star}_t |^2
    +|k^{m+1}_t-k^{\theta^\star}_t |^2
   \leq C {h}_t^m |{\theta}_{m} - \theta^\star|^2 + (1-{h}^m_t)
  (|K^{m}_t-K^{\theta^\star}_t|^2
  +|k^{m}_t-k^{\theta^\star}_t |^2).
\end{equation}

For each $m\in \sN$ and $t\in [0,T]$,
let 
    $\kappa_{m,t} \coloneqq
     |K^{m}_t-K^{\theta^\star}_t|^2
+|k^{m}_t-k^{\theta^\star}_t|^2$, 
$\Delta_m \coloneqq | {\theta}_{m} - \theta^\star|^2$ and $\eta_{m,t} \coloneqq  \prod_{k=1}^m(1-{h}^k_t) = \frac{(\lambda^{m+1}_t)^{2}}{(\lambda^{1}_t)^{2}}$ by Lemma \ref{lem:size of h^m}. 
Then for all $m\in \sN$ and $t\in [0,T]$,   \eqref{eq:action gap time dependence} and the fact that $h^{m+1}_t \leq \frac{1}{m}$ imply that
  \begin{align*}
      \eta_{m,t}^{-1}\kappa_{m+1,t} &\leq C  \tfrac{\eta_{m,t}^{-1}}{m}  \Delta_m + \eta_{m-1,t}^{-1}\kappa_{m-1,t} \leq C \tfrac{\eta_{m,t}^{-1}}{m} \Delta_m + C \tfrac{\eta_{m-1,t}^{-1}}{m-1}
      \Delta_{m-1} + \eta_{m-2,t}^{-1}\kappa_{m-2,t} \\
      &\leq C \sum_{n=1}^m 
      \tfrac{\eta_{n,t}^{-1}}{n}\Delta_n +
      \eta_1^{-1}\kappa_{1,t}.
  \end{align*}
Multiplying both sides by $\eta_m$
and using $\frac{\eta_{m,t}}{\eta_{n,t}}=\frac{(\lambda^{m+1}_t)^{2}}{(\lambda^{n+1}_t)^{2}}$ 
 yield
  \begin{align*}
\kappa_{m+1,t} 
      &\leq C \sum_{n=1}^m 
      \frac{1}{n}\left(\frac{\lambda^{m+1}_t }{ \lambda^{n+1}_t}\right)^2\Delta_n +
\left(\frac{\lambda^{m+1}_t }{ \lambda^2_t}\right)^2\kappa_{1,t}.
  \end{align*}
Taking the supremum over $t$ leads to the desired inequality. 
  \end{proof}
  
  To simplify the estimate in Proposition \ref{prop: action gap}, we quantify the behavior of 
   $(\lambda^m)_{m\in \sN}$
  in terms of $m$.
   \begin{Lemma}
\label{lem:size of lambda^m}
Suppose (H.\ref{assum:lq_rl}) holds. 
Let $\varrho_0>0$ and $\inf_{t \in [0,T]} \lambda^1_t > 0$.
 Then there exist constants $\ol{c}\ge  \ul{c}>0$
 such that if one sets   
$
 \varrho_m=\varrho_0 m^{1/2} \ln (m+1)$ for all $m\in\sN$, then 
$$\ul{c} m^{-1/2} \ln (m+1) \leq (\lambda^m_t)^2 \leq \ol{c} m^{-1/2} \ln (m+1),
\quad \fa t \in [0,T], m\in \sN\cap [2,\infty).
$$ 
where  
$\lambda^m$, for $m\in \sN\cap [2,\infty)$,    is   defined by \eqref{eq:MD_relaxed}.
\end{Lemma}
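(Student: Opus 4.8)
The plan is to telescope the variance recursion in \eqref{eq:MD_relaxed} and then reduce everything to a sharp two-sided estimate of a single deterministic sum. Since the weight used to pass from episode $n$ to episode $n+1$ is $\varrho_n$, the second identity in \eqref{eq:MD_relaxed} reads $(\lambda^{n+1}_t)^{-2}=\frac{2R_t}{\varrho_n}+(\lambda^n_t)^{-2}$, and summing over $n=1,\dots,m-1$ gives, for every $m\ge 2$ and $t\in[0,T]$,
\begin{equation*}
(\lambda^m_t)^{-2}=(\lambda^1_t)^{-2}+2R_t\sum_{n=1}^{m-1}\frac{1}{\varrho_n}
=(\lambda^1_t)^{-2}+\frac{2R_t}{\varrho_0}\,S_m,
\qquad S_m:=\sum_{n=1}^{m-1}\frac{1}{n^{1/2}\ln(n+1)}.
\end{equation*}
Under (H.\ref{assum:lq_rl}) one has $R\in C([0,T];\sR)$ with $R_t>0$, hence $0<\inf_tR_t\le\sup_tR_t<\infty$; and $\inf_t\lambda^1_t>0$ with $\lambda^1\in C([0,T];\sR_{>0})$ gives $0\le(\lambda^1_t)^{-2}\le C_1$ uniformly in $t$. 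Thus the claim will follow once I establish constants $0<c_*\le c^*$ with $c_*\,m^{1/2}(\ln(m+1))^{-1}\le S_m\le c^*\,m^{1/2}(\ln(m+1))^{-1}$ for all $m\ge 2$.

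For the lower bound I would keep only the terms with $n\ge m/2$ and use that $n\mapsto(n^{1/2}\ln(n+1))^{-1}$ is decreasing, so each of the $\gtrsim m/2$ retained terms is at least $(m^{1/2}\ln(m+1))^{-1}$, which yields $S_m\ge c_*\,m^{1/2}(\ln(m+1))^{-1}$. The upper bound is the more delicate point, and I expect it to be the main obstacle: bounding $\ln(n+1)$ crudely from below by $\ln 2$ only gives $S_m=O(m^{1/2})$ and destroys the logarithmic gain. To recover it I would split the sum at $n=\lceil\sqrt m\rceil$. For $n\ge\sqrt m$ one has $\ln(n+1)\ge\tfrac12\ln m\ge\tfrac14\ln(m+1)$ (for $m\ge2$), so that block is bounded by $\frac{C}{\ln(m+1)}\sum_{n\le m}n^{-1/2}\le C'm^{1/2}(\ln(m+1))^{-1}$; for $n<\sqrt m$ one bounds the block by $\frac{1}{\ln 2}\sum_{n\le\sqrt m}n^{-1/2}\le Cm^{1/4}$, which is dominated by $m^{1/2}(\ln(m+1))^{-1}$ because $\ln(m+1)/m^{1/4}$ is bounded on $[2,\infty)$.

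Finally, combining the display with the two-sided bound on $S_m$ and the uniform bounds on $R_t$ and $(\lambda^1_t)^{-2}$, and noting that the constant term $(\lambda^1_t)^{-2}$ is itself $O\big(m^{1/2}(\ln(m+1))^{-1}\big)$ for $m\ge2$ (since $m^{1/2}(\ln(m+1))^{-1}\ge\sqrt2/\ln3$), I obtain $\underline C\,m^{1/2}(\ln(m+1))^{-1}\le(\lambda^m_t)^{-2}\le\overline C\,m^{1/2}(\ln(m+1))^{-1}$ uniformly in $t\in[0,T]$ for all $m\ge2$. Inverting this two-sided inequality gives the assertion with $\underline c=1/\overline C$ and $\overline c=1/\underline C$.
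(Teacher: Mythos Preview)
Your argument is correct. Both you and the paper telescope the variance recursion to
\[
(\lambda^m_t)^{-2}=(\lambda^1_t)^{-2}+2R_t\sum_{k=1}^{m-1}\varrho_k^{-1},
\]
reduce to a two-sided estimate of $S_m=\sum_{k=1}^{m-1}k^{-1/2}(\ln(k+1))^{-1}$, and handle the lower bound on $S_m$ in essentially the same way (the paper replaces $\ln(k+1)$ by $\ln(m+1)$ on the whole sum and compares with $\int_1^m x^{-1/2}\,\d x$, while you restrict to $n\ge m/2$; both yield $S_m\gtrsim m^{1/2}/\ln(m+1)$ immediately). The genuine difference is the upper bound on $S_m$: the paper identifies an explicit antiderivative, showing that $\tfrac{\d}{\d x}\big(x^{1/2}/\ln(x+2)\big)\ge C\,x^{-1/2}(\ln(x+1))^{-1}$ on $[1,\infty)$ and then using integral comparison to get $S_m\le C m^{1/2}/\ln(m+2)$ directly. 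Your splitting at $n=\lceil\sqrt m\rceil$ avoids having to guess this antiderivative and is more elementary, at the cost of producing a harmless extra $O(m^{1/4})$ tail that you then absorb. Either route works; the paper's is shorter once the antiderivative is known, yours is more robust if one did not spot it.
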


\begin{proof}

Observe from (H.\ref{assum:lq_rl})
that $\inf_{t\in [0,T]} R_t>0$. 
Then 
by \eqref{eq:MD_relaxed}
and the choice of $(\varrho_m)_{m\in \sN}$,
for all $m\ge 2$ and $t\in [0,T]$,
    \begin{align*}
      (\lambda^{m}_t)^{-2} &= {2R_t}{\varrho^{-1}_{m-1}} + (\lambda^{m-1}_t)^{-2} = 2R_t \sum_{k=1}^{m-1} \varrho_k^{-1} + (\lambda^1_t)^{-2} \geq C \sum_{k=1}^{m-1} k^{-1/2} \left( \ln (k+1) \right)^{-1}\\
     &\geq  C \left( \ln (m+1)   \right)^{-1} \sum_{k=1}^{m-1} k^{-1/2}
      \geq   C \left( \ln (m+1) \right)^{-1} \int_1^m x^{-1/2} \d x  \ge  C m^{1/2} \left( \ln (m+1) \right)^{-1}.
  \end{align*}
Rearranging the terms leads to the desired upper bound.

On the other hand, 
observe that there exists $C>0$ such that for all $x\ge 1$, 
$$\frac{\d }{\d x} x^{1/2} \big(\ln (x+2) \big)^{-1} =  \frac{(x+2)\ln (x+2)-2x}{2x^{1/2}(x+2)\ln ^2(x+2)}\geq C x^{-1/2} \big( \ln (x+1) \big)^{-1}.$$
Then 
by the facts that  $\sup_{t\in [0,T]} R_t<\infty$
and 
$(\lambda_t^1)^{-2}\le 
C
   $,
for all $m\ge 2$ and $t\in [0,T]$,
   \begin{align*}
      (\lambda^{m}_t)^{-2} & = 2R_t \sum_{k=1}^{m-1} \varrho_k^{-1} + (\lambda^1_t)^{-2} \leq C  \sum_{k=1}^{m-1} k^{-1/2} \left( \ln (k+1) \right)^{-1}
      \leq C \sum_{k=1}^{m-1} \frac{(k+2)\ln (k+2)-2k}{2k^{1/2}(k+2)\ln^2(k+2)}  
\\& \le      C  \int_1^m  \frac{(x+2)\ln(x+2)-2x}{2x^{1/2}(x+2)\ln^2(x+2)} \, \d x \leq
 C  m^{1/2} \left( \ln (m+2) \right)^{-1} 
 \leq  C m^{1/2} \left( \ln (m+1) \right)^{-1}.
  \end{align*}
  Rearranging the terms yields to the desired lower bound.
\end{proof}

Based on 
Lemma \ref{lem:size of lambda^m},
the next proposition  proves  the convergence rate of $( {\theta}_m)_{m\in\sN}$
in terms of $m$.

\begin{Proposition}
\label{Prop:concentration for theta MD}
Suppose (H.\ref{assum:lq_rl}) and (H.\ref{eq:bounded}) hold. 
Let $\theta_0\in \sR^{1\t 2}$, 
$V_0\in \sS_+^2$
and
$\varrho_0>0$.
 Then there exist constants $c_0, C>0$
 such that if one sets 
$\nu^1\sim \cG\left(k^{\theta_0},K^{\theta_0}, \varrho_0 
\right)$,
$
 \varrho_m=\varrho_0 m^{1/2} \ln (m+1)$ 
 and 
$|\pi_m|  \leq c_0$
for all $m\in \sN$,
then $\sE \big[|\theta_m - \theta^\star|^2 \big] \leq Cm^{-1/2}$ for all $m\in \sN$.
\end{Proposition}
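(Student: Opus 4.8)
The plan is to follow the proof of Proposition~\ref{Prop:concentration for theta explore} almost verbatim, the key point being that for the convergence of $(\theta_m)_{m\in\sN}$ only the \emph{variances} $(\lambda^m)_{m\in\sN}$ of the relaxed policies matter, not their means $(K^m,k^m)$; the latter enter the analysis only through the uniform bound $L$ in Setting~\ref{setting:randomised_algorithm}, which is finite by Lemma~\ref{lemma:bounded_MD}. The crucial observation is that, although the regularising weights $\varrho_m=\varrho_0 m^{1/2}\ln(m+1)$ tend to infinity here (in contrast to the exploration algorithm, where they vanish), Lemma~\ref{lem:size of lambda^m} shows that the induced variances still obey $\ul{c}\,m^{-1/2}\ln(m+1)\le (\lambda^m_t)^2\le \ol{c}\,m^{-1/2}\ln(m+1)$, which is precisely the order of $(\lambda^m)^2$ in Proposition~\ref{Prop:concentration for theta explore}. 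Hence the Fisher-information accumulation, and therefore the whole concentration argument, carries over unchanged.

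First I would verify that Proposition~\ref{prop:upper_lower_expectation} Item~\ref{eq:G_conditonal_lower} is applicable. Since $\|\lambda^m\|_\infty^2\le \ol{c}\,m^{-1/2}\ln(m+1)\to 0$, for all large $m$ one has $(\lambda^m_t)^2\le 1$, so $\min_{t\in[0,T]}((\lambda^m_t)^2\wedge 1)\ge \ul{c}\,m^{-1/2}\ln(m+1)$; the required condition $|\pi_m|\,\|\lambda^m\|_\infty^2\le \tfrac1C\min_{t\in[0,T]}((\lambda^m_t)^2\wedge 1)$ then reduces to $|\pi_m|\le \ul{c}/(C\ol{c})$, while the finitely many small $m$ are absorbed by shrinking the constant, so $|\pi_m|\le c_0$ suffices. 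Proposition~\ref{prop:upper_lower_expectation} Item~\ref{eq:G_conditonal_lower} together with the lower bound on the variances then gives, exactly as in \eqref{eq:min_V_m_rho_n_step2},
\[
\sum_{n=1}^m\Lambda_{\min}\!\left(\sE\!\left[\int_0^T\tfrac{1}{\bar{\sigma}^{2}_s}Z^{n}_s(Z^{n}_s)^\top\,\d s\,\Big|\,\cF^{\textrm{ob}}_{n-1}\right]\right)\ge \tfrac1C\sum_{n=1}^m \min_{t\in[0,T]} (\lambda^n_t)^2\ge C\,m^{1/2}\ln(m+1).
\]

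Next, applying Proposition~\ref{prop:concentration_G} with $\delta=1/m$ bounds the fluctuation of $V_m^{-1}$ about its conditional mean by $C\sqrt{m\ln(m+1)}$ on an event $A_m$ with $\sP(A_m)\ge 1-1/m$. Since $\sqrt{m\ln(m+1)}=\ln^{-1/2}(m+1)\cdot m^{1/2}\ln(m+1)$ is of strictly smaller order than the lower bound just obtained, on $A_m$ we get $\Lambda_{\min}(V_m^{-1})\ge \tfrac12 C\,m^{1/2}\ln(m+1)$ for all large $m$, whence $\sE[(\Lambda_{\min}(V_m^{-1}))^{-1}]\le C\,m^{-1/2}\ln^{-1}(m+1)$ after splitting on $A_m,A_m^c$ and using $(\Lambda_{\min}(V_m^{-1}))^{-1}\le(\Lambda_{\min}(V_0^{-1}))^{-1}$. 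Combining this with the high-probability estimate \eqref{eq: high prob parameter} of Lemma~\ref{lemma:theta_accuracy} yields $|\hat\theta_m-\theta^\star|^2\le C\,m^{-1/2}\ln^{-1}(m+1)(1+\ln m)$ with probability at least $1-2/m$; as $\theta^\star\in\operatorname{int}(\Theta)$, for large $m$ the right-hand side falls below $\inf_{\theta\notin\Theta}|\theta-\theta^\star|^2$, forcing $\hat\theta_m\in\Theta$, so $\sP(\hat\theta_m\notin\Theta)\le 2/m$. Feeding these bounds into \eqref{eq:expected parameter} gives $\sE[|\theta_m-\theta^\star|^2]\le C\,m^{-1/2}$ for all large $m$, and the uniform boundedness of $(\theta_m)\subset\Theta$ extends it to all $m\in\sN$ after enlarging $C$.

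The argument requires essentially no new ideas beyond Proposition~\ref{Prop:concentration for theta explore}; the one genuinely different step—and the only place where the proximal structure enters—is the variance estimate of Lemma~\ref{lem:size of lambda^m}, which I regard as the main technical input (and which I would treat as already established). I expect the only mild subtlety to be the bookkeeping for small $m$ when checking the mesh-size hypothesis and the ``$\wedge 1$'' truncation, but this is harmless since it concerns only finitely many episodes and is absorbed into the generic constant.
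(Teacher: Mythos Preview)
Your proposal is correct and follows essentially the same route as the paper's proof: verify via Lemma~\ref{lem:size of lambda^m} that the variances $(\lambda^m)^2$ are of order $m^{-1/2}\ln(m+1)$, check that the mesh-size hypothesis of Proposition~\ref{prop:upper_lower_expectation} Item~\ref{eq:G_conditonal_lower} reduces to a uniform bound $|\pi_m|\le c_0$, and then repeat the concentration-and-projection argument of Proposition~\ref{Prop:concentration for theta explore} verbatim. The paper handles the small-$m$ bookkeeping for the mesh-size condition by setting $c_0=\tfrac{1}{C}\min\big(\ul{c}/\ol{c},\,1/\sup_{m}\|\lambda^m\|_\infty^{2}\big)$ explicitly rather than absorbing it into a generic constant, but this is a cosmetic difference only.
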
 

\begin{proof}

 By Lemma \ref{lem:size of lambda^m}, for all $m \geq 2$, $\|\lambda^m \|_\infty^2 \leq (\ol{c}/\underline{c})\min_{t\in [0,T]}(\lambda^m_t)^2$ and $\sup_{m \in \sN}  \|\lambda^m\|_\infty^{2} <\infty$. 
 Let $ c_0\coloneqq \tfrac{1}{C} \min\big(\underline{c}/\ol{c}, 
1/\left( \sup_{m \in \sN}  \|\lambda^m\|_\infty^{2}\right)\big)>0 $,
  with  $C$ being the constant in 
  Proposition \ref{prop:upper_lower_expectation}
Item \ref{eq:G_conditonal_lower}.
Hence, if  $|\pi_m|\le c_0$
 for all $m\in \sN$,
 then
$|\pi_{m}| \|\lambda^m\|_\infty^2\le \frac{1}{C}
\min_{t\in [0,T]}((\lambda^m_t)^2 \wedge 1)$ for all $m\in \sN$, 
and   
\begin{align}
\label{eq:min_V_m_rho_n_MD}
\sum_{n=1}^{m}\Lambda_{\min}\left(
\sE\left[
\int_0^T
\frac{1}{\bar{\sigma}^{2}_s} Z^{  n}_s (  Z^{ n}_s)^\top \,\d s
\big|
\cF^{\textrm{ob}}_{n-1}
 \right]\right)
 \ge r\sum_{n=1}^{m}  \min_{t\in [0,T]}((\lambda^n_t)^2 \wedge 1) ,
\end{align}
with some constant  $ r >0$ independent of $m$.
Moreover,  Lemma \ref{lem:size of lambda^m}
  implies that 
   there exists     $\bar{N}_0 \in \sN$ such that for all 
 $t \in [0,T]$ 
 and $m \geq \bar{N}_0$,
 $(\lambda^m_t)^2  \leq 1$ for all $t \in [0,T]$.
 Hence,   using \eqref{eq:min_V_m_rho_n_MD} and following the same argument as  in
Proposition \ref{Prop:concentration for theta explore}
yield (cf.~\eqref{eq:min_V_m_rho_n_step2}):
 \begin{align*}
\begin{split}
& \sum_{n=1}^{m}\Lambda_{\min}\left(
\sE\left[
\int_0^T
\frac{1}{\bar{\sigma}^{2}_s} Z^{  n}_s (  Z^{ n}_s)^\top \,\d s
\big|
\cF^{\textrm{ob}}_{n-1}
 \right]\right)
 \geq r m^{1/2} \ln(m+1),
 \q \fa m\ge 4\bar{N}_0.
\end{split}
\end{align*}
Now proceeding 
 along the lines of the proof of 
 Proposition \ref{Prop:concentration for theta explore}
 leads to the desired estimate. 
\end{proof}

%

\begin{proof}[Proof of Theorem \ref{thm:MD}]
Note that the choice of $(\varrho_m)_{m\in\sN}$ ensures that $(\varrho_m)_{m\in \sN}$ is increasing. 
By Propositions \ref{prop: action gap} and   \ref{Prop:concentration for theta MD},
for all $m\in \sN$,
 \begin{align*}
&\sE \big[ \|K^{m+1}-K^{\theta^\star}\|^2_{\infty} \big]
+\sE \big[ \|k^{m+1}-k^{\theta^\star}\|^2_{\infty} \big]
\\
&\q 
\le C \sum_{n=1}^m 
      \frac{1}{n}\left\|\frac{\lambda^{m+1} }{ \lambda^{n+1}}\right\|_\infty^2 \sE \big[ | {\theta}_{n} - \theta^\star|^2 \big] +
\left\|\frac{\lambda^{m+1} }{ \lambda^2}\right\|^2_\infty
\left(
\|K^{\theta_0}-K^{\theta^\star}\|^2_{\infty}
+\|k^{\theta_0}-k^{\theta^\star}\|^2_{\infty}
\right) \\
&\q 
\le  C \sum_{n=1}^m 
      \frac{1}{n^{3/2}}\left\|\frac{\lambda^{m+1} }{ \lambda^{n+1}}\right\|_\infty^2  +
C \left\|\frac{\lambda^{m+1} }{ \lambda^2}\right\|^2_\infty.
 \end{align*}
By Lemma \ref{lem:size of lambda^m},
for all $m\in\sN$,
 \begin{align*}
\sE \big[ \|K^{m+1}-K^{\theta^\star}\|^2_{\infty} \big]
+\sE \big[ \|k^{m+1}-k^{\theta^\star}\|^2_{\infty} \big]
&\le  C m^{-\frac{1}{2}} \ln(m+1) \left(   \sum_{n=1}^m 
      \frac{1}{n \ln (n+1)}  +
1
 \right) \\
 &\le  C m^{-\frac{1}{2}} \ln(m+1) \left(\ln \big( \ln(m+1) \big) +1\right).
 \end{align*}
Summing over the index $m$ then yields 
that 
for all $N\in \sN$,
 \begin{align}
 \label{eq:K_error_MD}
     \sum_{m=1}^N \left( \sE \big[ \|K^{m+1}-K^{\theta^\star}\|^2_{\infty} \big]
+\sE \big[ \|k^{m+1}-k^{\theta^\star}\|^2_{\infty} \big] \right) \leq CN^{\frac{1}{2}} \ln(N+1)
 \left(\ln \big( \ln(N+1) \big) +1\right).
 \end{align}
Note that 
the choices of $\varrho_m$ and $\pi_m$
and Lemma \ref{lem:size of lambda^m}
 imply that
$
\|\lambda^m\|_\infty^2 + {|\pi_m|} 
 \|\lambda^m\|_\infty^2
\le Cm^{-\frac{1}{2}}
\ln (m+1) 
,
$ for all $ m\in \sN\cap [2,\infty)$,
which along with 
\eqref{eq:K_error_MD} and  Proposition \ref{prop: sensitivity}
shows that for all $N \in \sN$,
\begin{align*}
       \sE \big[  \textrm{Reg}(N) \big] 
	&=
     \sE \big[ J^{{\theta^\star}}(\varphi^1)-  J^{{\theta^\star}}(
      \phi^{ \theta^\star}) \big]
      +\sum_{m=2}^N \sE \big[ J^{{\theta^\star}}(\varphi^m)-  J^{{\theta^\star}}(
      \phi^{ \theta^\star}) \big]
     \\
     &\leq C \bigg(
      \sum_{m=1}^N m^{-\frac{1}{2}}
\ln (m+1) 
+ N^{\frac{1}{2}} \ln(N+1)
 \left(\ln \big( \ln(N+1) \big) +1\right)  
 \bigg) 
 \\
 &
\leq C
N^{\frac{1}{2}} \ln(N+1)
 \left(\ln \big( \ln(N+1) \big) +1\right).
\end{align*}
This proves the desired estimate. 
\end{proof}

\appendix

  \begin{appendices}

\section{Proofs of technical lemmas}
\label{appendix:techinical_lemma}

\begin{proof}[Proof of Lemma \ref{lem:Weak Law for average Z}]
Throughout this proof, 
let $C$ be a generic constant independent of $N$,
$(\rho_i)_{i=1}^N$,
$(\beta_{ij})_{i,j=1}^N$ and $\delta$.
Item \ref{item:Weak Law for average Z}
follows directly from 
 the
general Hoeffding inequality
\cite[Theorem 2.6.3]{vershynin2018high}.
To prove Item \ref{item:Weak Law for sq},
observe that $(\zeta^2_i-1)_{i=1}^N$ are mean-zero sub-exponential random variables. Hence by
Bernstein's inequality 
in \cite[Theorem 2.8.2]{vershynin2018high},
there exists $C\ge 0$
such that 
for all $\delta\ge  0$,
it holds with probability at least $1-\delta$ that
\begin{align*}
\left|\sum_{i=1}^N \rho_i(\zeta^2_i-1)
\right|
&\le 
C\max\left(\|\rho\|_2
\sqrt{\ln \left(\frac{2}{\delta}\right)},
\|\rho\|_\infty\ln \left (\frac{2}{\delta}\right)
\right)
\le C\|\rho\|_2
\left(1+
\ln \left (\frac{1}{\delta}\right)
\right),
\end{align*}
where the last inequality used 
 the fact that $\|\rho\|_\infty\le \|\rho\|_2$
 and 
 the Cauchy-Schwarz inequality.

It remains to prove Item \ref{item:Weak Law for cross}.
The independence 
of $(\zeta_i)_{i\in \sN}$
implies that 
$\sE[\sum_{i,j=1,
i\not =j}^N \beta_{ij}\zeta_i\zeta_j]=0$.
Then, by considering
${A}\in 
\sR^{N\t N}$
with diagonal entries ${a}_{ii}=0$
for all $i=1,\ldots, N$,
and off diagonal entries 
${a}_{ij}=\rho^N_{ij}$
for all $i\not =j$,
and by applying the Hanson--Wright inequality
(see 
\cite[Theorem 6.2.1]{vershynin2018high}),
there exists a constant $C\ge 0$ such that 
 for all $s\ge 0$,
$$
\sP\left(
\left|\sum_{i,j=1,i\not =j}^N \beta_{ij}\zeta_i\zeta_j
\right|\ge s 
\right)
\le 
2\exp
\left(
-C\min
\left(
\frac{s^2}{\|{A}\|_F^2},
\frac{s}{\|{A}\|_F}
\right)
\right),
$$
where $\|{A}\|_F$ is the Frobenius norm of ${A}$.
Consequently for all 
$\delta>0$, it holds with probability at least $1-\delta$ that
\begin{align*}
\left|\sum_{i,j=1,i\not =j}^N \beta_{ij}\zeta_i\zeta_j
\right|
&\le 
C\|{A}\|_F
\max\left(\sqrt{\ln \left(\frac{2}{\delta}\right)},
\ln \left (\frac{2}{\delta}\right)
\right)
\le 
C\|{A}\|_F
\left(1+
\ln \left (\frac{1}{\delta}\right)
\right),
\end{align*}
where the last inequality used 
the Cauchy-Schwarz inequality.
\end{proof}

\begin{proof}[Proof of Lemma \ref{lemma:regularity}]
Item \ref{item:continuous_differentiability} has been proved in \cite[Lemma 3.1]{basei2022logarithmic} for $S=p=q=M=m=0$. The argument there is based on 
the implicit function theorem and extends straightforwardly  to the present setting with general convex quadratic cost functions (details omitted).

For Item \ref{item:quadratic_dependence},  
let $C$  be a generic constant 
 depending only on $L$ and the  coefficients in (H.\ref{assum:lq_rl}),
 and define $\|X\|_{\cS^2}=\sE[\sup_{t\in [0,T]}|X_t|^2]^{1/2}$ for any process $X:\Om\t [0,T]\to \sR$.
Consider the control processes 
 $U^{\phi^\star} =\phi^{\theta^\star}(\cdot,X^{\theta^\star, \phi^{\theta^\star}}_\cdot)$ 
 and 
  $U^{\phi } =\phi (\cdot,X^{\theta^\star, \phi}_\cdot)$ 
 $\d\sP\otimes \d t$ a.e.
As $\phi^{\theta^\star}$ is an optimal policy, 
a quadratic expansion of \eqref{eq:LQ_RL_cost}
at $U^{\phi^\star} $  
(see \cite[Proposition 3.7]{basei2022logarithmic})
implies that
\begin{align}
\label{eq:quadratic_expansion}
\begin{split}
 J^{  \theta^\star}(\phi)-     J^{  \theta^\star}( \phi^{ \theta^\star}) 
&=2\sE\left[\int_0^T 
\begin{pmatrix}
X^{\theta^\star, \phi}_t- X^{\theta^\star, \phi^{\theta^\star}}_t
\\
U^{\phi}_t- U^{\phi^\star}_t
\end{pmatrix}^\top
\begin{pmatrix}
Q_t & S_t
\\
S_t  & R_t
\end{pmatrix}
\begin{pmatrix}
X^{\theta^\star, \phi}_t- X^{\theta^\star, \phi^{\theta^\star}}_t
\\
U^{\phi}_t- U^{\phi^\star}_t
\end{pmatrix}
\,\d  t\right]
\\
&\quad +2 \sE\left[
M(X^{\theta^\star, \phi}_T- X^{\theta^\star, \phi^{\theta^\star}}_T)^2
\right]
\\
&\le C
\left(
\|X^{\theta^\star, \phi}- X^{\theta^\star, \phi^{\theta^\star}}\|^2_{\cS^2}
+
\|\phi^{\theta}(\cdot,X^{\theta^\star, \phi}_\cdot)- 
\phi^{\theta^\star}(\cdot, X^{\theta^\star, \phi^{\theta^\star}}_\cdot)\|^2_{\cS^2}
\right).
\end{split}
\end{align}
 Recall that $X^{\theta^\star, \phi}$ satisfies 
 \bb\label{eq:X_phi}
  \d X^{\theta^\star, \phi}_t = 
    (A^\star X^{\theta^\star, \phi}_t +B^\star (K_t X^{\theta^\star, \phi}_t+k_t))\, \d t
     +  
    \bar{\sigma}_t\,
     \d W_t,
     \quad  t\in [0,T]; \quad X^{\theta^\star, \phi}_0 = x_0.
 \ee
The analytical solution to \eqref{eq:X_phi}
and 
the assumption that $\|K\|_{\infty},\|k\|_{\infty}\le L$ yield
$\|X^{\theta^\star, \phi}\|_{\cS^2}\le C$. 
Observe 
that
$X^{\theta^\star, \phi}_0= X^{\theta^\star, \phi^{\theta^\star}}_0$ and for all $t\in [0,T]$,
\begin{align*}
\d
(X^{\theta^\star, \phi}_t- X^{\theta^\star, \phi^{\theta^\star}}_t)
&=
\left(
A^\star \big(X^{\theta^\star, \phi}_t - X^{\theta^\star, \phi^{\theta^\star}}_t\big)
+B^\star\big(
K_t X^{\theta^\star, \phi}+k_t
-K^{\theta^\star}_tX^{\theta^\star, \phi^{\theta^\star}}_t
-k^{\theta^\star}_t
\big)
\right) \d t 
\\
&=
\left(
(A^\star +B^\star K^{\theta^\star}_t)\big(X^{\theta^\star, \phi}_t - X^{\theta^\star, \phi^{\theta^\star}}_t\big)
+B^\star
(K_t- K^{\theta^\star}_t)X^{\theta^\star, \phi}_t+
B^\star(k_t
-k^{\theta^\star}_t)
\right)\d t. 
\end{align*}
Let  
$\Phi\in C([0,T]; \sR)$ be the fundamental  
solution of  
$\d \Phi^{\theta}_t=(A^\star+B^\star K^{\theta^\star}_t)\Phi^{\theta}_t \,\d t$.
Then 
\begin{align*}
X^{\theta^\star, \phi}_t- X^{\theta^\star, \phi^{\theta^\star}}_t
&=
\Phi^{\theta}_t  \int_0^t(\Phi^{\theta}_s)^{-1}
\left(
B^\star
(K_t- K^{\theta^\star}_t)X^{\theta^\star, \phi}_t+
B^\star(k_t
-k^{\theta^\star}_t)
\right)
\,\d s, \quad t\in [0,T],
\end{align*}
which along with $\|K^{\theta^\star}\|_{\infty}<\infty$ and $\|X^{\theta^\star, \phi}\|_{\cS^2}\le C$
shows 
$\|X^{\theta^\star, \phi}- X^{\theta^\star, \phi^{\theta^\star}}\|_{\cS^2}\le 
C(\|K-K^{\theta^\star}\|_{\infty}+\|k-k^{\theta^\star}\|_{\infty})$.
Furthermore, for all $t\in [0,T]$, 
\begin{align*}
\phi^{\theta}(t,X^{\theta^\star, \phi}_t)- 
\phi^{\theta^\star}(t, X^{\theta^\star, \phi^{\theta^\star}}_t)
&=K_tX^{\theta^\star, \phi}_t+k_t
-K^{\theta^\star}_tX^{\theta^\star, \phi^{\theta^\star}}_t
-k^{\theta^\star}_t
\\
&=
K^{\theta^\star}_t(X^{\theta^\star, \phi}_t - X^{\theta^\star, \phi^{\theta^\star}}_t)
+(K_t-K^{\theta^\star}_t)X^{\theta^\star, \phi}_t
+k_t -k^{\theta^\star}_t.
\end{align*}
Hence by $\|K^{\theta^\star}\|_{\infty}<\infty$ and  $\|X^{\theta^\star, \phi}\|_{\cS^2}\le C$,
$\|\phi^{\theta}(\cdot,X^{\theta^\star, \phi}_\cdot)- 
\phi^{\theta^\star}(\cdot, X^{\theta^\star, \phi^{\theta^\star}}_\cdot)\|_{\cS^2}
\le C(\|K-K^{\theta^\star}\|_{\infty}+\|k-k^{\theta^\star}\|_{\infty})
$, which along with 
\eqref{eq:quadratic_expansion} yields the desired estimate. 
\end{proof}

\begin{proof}
[Proof of Lemma \ref{lemma:bounded_exploration}]
Recall that for all $m\in \sN$,
 Algorithm 
\ref{alg:exploration} defines 
$K^{m+1}=K^{\theta_m}$
and
$k^{m+1}=k^{\theta_m}$.
 Lemma  \ref{lemma:regularity}
Item \ref{item:continuous_differentiability}
and 
(H.\ref{eq:bounded}) imply that 
$
\sup_{\theta\in \Theta}\left(
\|K^\theta\|_{\infty}+\|k^\theta\|_{\infty}\right)<\infty
$.
By the definition \eqref{eq:theta_m_project}, 
$(\theta_m)_{m\in\sN}$ takes values in a bounded set $\Theta$,
and hence 
$\sup_{ m\in \sN}( \|K^m\|_{L^\infty}+
\|k^m\|_{L^\infty})\le L$.
The uniform boundedness of $\|\lambda^m\|_\infty$ follows from  \eqref{eq:optimal_relax_theta_rho},
$\inf_{t\in [0,T]} R_t>0$, and 
 the condition that $\sup_{m \in \sN} \varrho_m  < \infty$.
Finally, 
 for each $m\in \sN$,
by
\eqref{eq: statistics} and \eqref{eq:theta_m_project},
 $\theta_m$ is  
$\cF^{\textrm{ob}}_{m}$ measurable,
which along with the continuity of 
 $\sR^{1\t 2}\t [0,T]\ni (\theta,t)\mapsto (K^\theta_t ,k^{\theta}_t)\in \sR^2$  leads to the desired  $\cF^{\textrm{ob}}_{m}\otimes \cB([0,T])$-measurability 
of $K^{m+1}$ and $k^{m+1}$.  
\end{proof}

\begin{proof}[Proof of Lemma \ref{lemma:bounded_MD}]
 By similar arguments as those for 
 Lemma \ref{lemma:bounded_exploration},
  $L\coloneqq \sup_{ m\in \sN}
  (\|K^{\theta_m}\|_{L^\infty}
  +\|k^{\theta_m}\|_{L^\infty})<\infty  $, 
and for all  $m\in \sN$,
 $K^{\theta_m}$ and $k^{\theta_m}$ are 
$\cF^{\textrm{ob}}_{m-1}\otimes \cB([0,T])$-measurable.
Observe that  for each $m\in \sN$ and $t\in [0,T]$,
$K^{m+1}_t$ (resp.~$k^{m+1}_t$) is a convex combination of $(K^{\theta_n}_t)_{n=0}^m$
(resp.~$(k^{\theta_n}_t)_{n=0}^m$) according to the weights 
$(h^n_t)_{n=0}^m$.
Hence 
 $K^{m}$ and $k^{ m}$
 are 
$\cF^{\textrm{ob}}_{m-1}\otimes \cB([0,T])$-measurable,
and
$\sup_{ m\in \sN}
  (\|K^{m+1}\|_{L^\infty}
  +\|k^{m+1}\|_{L^\infty})\le L$,
independent of  $\theta_0,V_0$, 
$(\varrho_m)_{m\in \sN}$  
and  $(\pi_m)_{m\in\sN}$. 
Finally, 
by \eqref{eq:MD_relaxed}, $\varrho_m >0$ and $R_t \geq 0$   for all $t \in [0,T]$,
we have 
$\lambda^{m+1}_t\le \lambda^m_t$ for all $t\in [0,T]$,
which along with 
the fact that $\|\lambda^0\|_\infty<\infty$
implies 
the uniform boundedness of $(\lambda^m)_{m\in \sN}$.
  \end{proof}
\end{appendices}

\bibliographystyle{siam}

\bibliography{relaxed_ctrl_lq_sicon.bib}

 \end{document}